\def\1{\bm{1}}
\DeclareMathAlphabet{\mathsfit}{\encodingdefault}{\sfdefault}{m}{sl}
\SetMathAlphabet{\mathsfit}{bold}{\encodingdefault}{\sfdefault}{bx}{n}
\newcommand{\E}{\mathbb{E}}
\DeclareMathOperator*{\argmax}{arg\,max}
\DeclareMathOperator*{\argmin}{arg\,min}
\newtheorem{theorem}{Theorem}[section]
\newtheorem{lemma}[theorem]{Lemma}
\newtheorem{definition}{Definition}
\newtheorem{assumption}{Assumption}
\newcommand{\V}[2]{V^{#1}_{#2}}
\newcommand{\Q}[2]{Q^{#1}_{#2}}
\newcommand{\Qbar}[2]{\bar{Q}^{#1}_{#2}}
\newcommand{\pstar}[0]{P^{\star}}
\newcommand{\phat}[0]{\hat{P}}
\newcommand{\pistar}[0]{\pi^{\star}}
\newcommand{\piref}[0]{\mu}
\newcommand{\Sspace}[0]{\mathcal{S}}
\newcommand{\Aspace}[0]{\mathcal{A}}
\newcommand{\Real}[0]{\mathbb{R}}
\newcommand{\Rset}[0]{\mathcal{R}}
\newcommand{\Pset}[0]{\mathcal{P}}
\newcommand{\Fset}[0]{\mathcal{F}}
\newcommand{\rhat}[0]{\hat{r}}
\newcommand{\rstar}[0]{r^{\star}}
\newcommand{\dataPref}[0]{\mathcal{D}_{\text{pref}}}
\newcommand{\dataTraj}[0]{\mathcal{D}_{\text{traj}}}
\newcommand{\dataRollout}[0]{\mathcal{D}_{\text{rollout}}}
\newcommand{\lossOpt}[0]{\mathcal{L}_{\text{opt}}}
\newcommand{\emplossReward}[0]{\hat{\mathcal{L}}_{R}}
\newcommand{\emplossDev}[0]{\hat{\mathcal{E}}}
\newcommand{\emplossTran}[0]{\hat{\mathcal{L}}_{T}}
\newcommand{\emplossOpt}[0]{\hat{\mathcal{L}}_{\text{opt}}}
\newcommand{\Ctr}[0]{C_{\text{TR}}}
\newcommand{\Cstep}[0]{C_{\text{ST}}}
\newcommand{\norm}[2]{\left\|#1\right\|_{#2}}
\newcommand{\kldiv}[2]{D_{KL}\left( #1 \| #2 \right)}
\title{Adversarial Policy Optimization for Offline\\Preference-based Reinforcement Learning}
\author{Hyungkyu Kang \\
Seoul National University \\
Seoul, South Korea \\
\texttt{hyungkyu0119@snu.ac.kr} \\
\And
Min-hwan Oh \\
Seoul National University \\
Seoul, South Korea \\
\texttt{minoh@snu.ac.kr}
}
\begin{document}

\maketitle

\begin{abstract}
    In this paper, we study offline preference-based reinforcement learning (PbRL), where learning is based on pre-collected preference feedback over pairs of trajectories. While offline PbRL has demonstrated remarkable empirical success, existing theoretical approaches face challenges in ensuring conservatism under uncertainty, requiring computationally intractable confidence set constructions. We address this limitation by proposing Adversarial Preference-based Policy Optimization (\texttt{APPO}), a computationally efficient algorithm for offline PbRL that guarantees sample complexity bounds without relying on explicit confidence sets. By framing PbRL as a two-player game between a policy and a model, our approach enforces conservatism in a tractable manner. Using standard assumptions on function approximation and bounded trajectory concentrability, we derive a sample complexity bound. To our knowledge, \texttt{APPO} is the first offline PbRL algorithm to offer both statistical efficiency and practical applicability. Experimental results on continuous control tasks demonstrate that \texttt{APPO} effectively learns from complex datasets, showing comparable performance with existing state-of-the-art methods.
\end{abstract}

\section{Introduction}

While Reinforcement Learning (RL) has achieved remarkable success in real-world applications~\citep{mnih2013playing,silver2017mastering,kalashnikov2018scalable,brohan2022rt}, its performance heavily depends on the design of the reward function~\citep{wirth2017survey}, which can be challenging in practice. 
To address this issue, preference-based reinforcement learning (PbRL), 
also known as reinforcement learning with human feedback, has gained increasing attention as an alternative to manually designed rewards. 
In PbRL, a reward model is learned from preference feedback provided by human experts, who compare pairs of trajectories~\citep{christiano2017deep}. 
This approach enables the learning process to align better with human intentions. PbRL has demonstrated its effectiveness in various domains, including gaming~\citep{macglashan2017interactive,christiano2017deep,warnell2018deep}, natural language processing~\citep{ziegler2019fine,stiennon2020learning,nakano2021webgpt,ouyang2022training,bai2022training}, and robotics~\citep{brown2019extrapolating,shin2023benchmarks}.

However, collecting preference feedback can be costly, especially when real-time feedback from human experts is required. In such cases, learning from pre-collected data is preferred over online learning. This approach is referred to as \emph{offline} PbRL, where the learning process relies solely on pre-collected trajectories and preference feedback. Empirical studies have shown the effectiveness of offline PbRL ~\citep{kim2023preference,an2023direct,shin2023benchmarks,hejna2024inverse}, leveraging techniques from deep RL literature. On the theoretical side, prior works prove that trajectory concentrability with respect to the data-collecting distribution leads to a sample complexity bound~\citep{zhu2023principled,zhan2024provable,pace2024preference}. However, they rely on the explicit construction of confidence sets to achieve conservatism (pessimism). Dealing with such confidence sets in the general function approximation setting requires intractable optimizations: \citet{zhan2024provable} involve tri-level constrained optimization with respect to the confidence sets of rewards and transitions, \citet{pace2024preference} use uncertainty penalty defined as the width of confidence sets, and the analysis of \citet{zhu2023principled} is restricted to linear models. Despite provable sample complexity bounds, existing offline PbRL algorithms become computationally intractable with general function approximation.

In this work, we propose a computationally and statistically efficient offline PbRL algorithm,  \textit{Adversarial Preference-based Policy Optimization} (\texttt{APPO}). 
Our analysis is based on general function approximation for both the model and value function classes. Moreover, standard assumptions on function classes and bounded trajectory concentrability~\citep{zhan2024provable} are sufficient to establish our sample complexity bound. 
Beyond its strong statistical guarantees, our algorithm is simple to implement using standard optimization techniques.
The core idea behind our algorithm is the two-player game formulation of model-based PbRL, which has been used in other areas of RL~\citep{rajeswaran2020game,rigter2022rambo,cheng2022adversarially,shen2024principled,bhardwaj2024adversarial}. 
By framing PbRL as a game between a policy and a model,
we ensure conservatism without explicitly constructing intractable confidence sets. 
Furthermore, our novel reparameterization technique allows us to find a near-optimal policy efficiently via adversarial training. To the best of our knowledge, \texttt{APPO} is the first offline PbRL algorithm with both statistical performance guarantees and a practical implementation.
Our contributions can be summarized as follows:

\begin{itemize}
    \item We propose \texttt{APPO}, a simple algorithm for offline PbRL with general function approximation. Based on the two-player game formulation of PbRL in conjunction with our reparameterization technique for the reward model, our algorithm ensures provable conservatism without explicit construction of confidence sets. To our knowledge, our \texttt{APPO} is the first computationally efficient offline PbRL algorithm providing a sample complexity bound.
    \item We prove the sample complexity of our proposed algorithm under standard assumptions on the function classes and concentrability. The result is rooted in our novel sub-optimality decomposition, which shows that adversarial training leads to model conservatism.
    \item We present a practical implementation of \texttt{APPO} that can learn with large datasets using neural networks. Experiments on continuous control tasks demonstrate that \texttt{APPO} achieves performance comparable to existing state-of-the-art algorithms.
\end{itemize}

\subsection{Related Work} \label{sec:related work}

\textbf{Provable Online PbRL.}
In the tabular setting, \citet{novoseller2020dueling} developed an algorithm grounded in posterior sampling and the dueling bandit framework~\citep{yue2012k}, demonstrating an asymptotic rate for Bayesian regret.
\citet{xu2020preference} proposed an algorithm leveraging an exploration bonus for previously unseen states, which provides a sample complexity bound.
\citet{saha2023dueling} and \citet{zhan2024provablerewardagnostic} focused on the linear preference model with a known linear feature map, each offering regret and sample complexity bounds. However, their algorithms require solving an optimization $\argmax_{\pi, \pi'} \norm{\mathbb{E}_{\tau\sim\pi}[\phi(\tau)] - \mathbb{E}_{\tau\sim\pi'}[\phi(\tau)]}{\Sigma}$ for some positive definite matrix $\Sigma$, which is computationally intractable.
To address this challenge in the linear model, \citet{wu2024making} devised a randomized algorithm with a provable regret bound and further proposed a model-based posterior sampling algorithm under the bounded Eluder dimension~\citep{russo2013eluder} assumption, ensuring bounded Bayesian regret.
Recent works have also explored provably efficient algorithms under the general function approximation setting~\citep{chen2022human,wu2024making,chen2023provably}.
\citet{chen2022human} introduced an exploration-bonus-based algorithm that provides bounded regret in both pairwise and n-wise comparison settings.
Additionally, \citet{chen2023provably} leveraged the conditional value-at-risk operator~\citep{artzner1997thinking} to devise an algorithm with a regret guarantee. \citet{du2024explorationdriven} took a different approach, studying neural function approximation in the context of reward models. In another notable work, \citet{swamy2024a} reframed PbRL as a zero-sum game between two policies, encompassing general reward models.

\textbf{Provable Offline PbRL.}
While there has been a growing amount of research on online PbRL, the theoretical understanding of \mbox{offline} PbRL remains relatively limited. A primary challenge in offline PbRL, much like in offline standard RL, is ensuring sufficient conservatism in the model. \citet{zhu2023principled} addressed this \mbox{challenge} by proposing a pessimistic maximum likelihood estimation (MLE) algorithm for the linear model with known transitions.
\citet{zhan2024provable} \mbox{extended} this idea to general function approximation, highlighting the importance of trajectory concentrability in establishing a lower bound for sample complexity. Despite the provable sample complexity bound, their proposed algorithm, FREEHAND-transition, relies on solving $\argmax_{\pi} \argmin_{r\in\hat{\Rset}} \argmin_{P\in\hat{\Pset}} \{ \mathbb{E}_{\tau \sim P, \pi}[r(\tau)] - \mathbb{E}_{\tau\sim \pstar, \pi}[r(\tau)] \}$ where $\hat{\Rset}$ is the confidence set of rewards and $\hat{\Pset}$ is the confidence set of transitions, which is intractable in practice. 
\citet{pace2024preference} study a different but related setting, where the agent elicits high-quality preference information from offline data. Their method achieves conservatism through explicit uncertainty penalties defined as $u_{R}(\tau) = \sup_{r_1, r_2 \in\hat{\Rset}} |r_1(\tau) - r_2(\tau)|$ (reward uncertainty) and $u_{P}(s,a) = \sup_{P_1, P_2 \in\hat{\Pset}} \norm{P_1(\cdot\mid s,a) - P_2(\cdot\mid s,a)}{1}$ (transition uncertainty), which makes it intractable with general function approximation.
\citet{chang2024dataset} also explored a slightly different scenario where the data collection policy is known and online interaction is allowed. They demonstrated that a simple natural policy gradient combined with MLE reward is provably efficient, but their sample complexity bound is affected by an additional concentrability coefficient relative to KL-regularized policies.

\textbf{Adversarial Training in RL.}
Adversarial training is a widely used approach in RL literature~\citep{rajeswaran2020game,pasztor2024bandits}, especially offline (standard) RL~\citep{rigter2022rambo,cheng2022adversarially,bhardwaj2024adversarial}. The basic idea is to leverage adversarial training to implement conservative policy optimization. Recently, adversarial training has also been applied in human preference alignment~\citep{makar-limanov2024starlhf,cheng2024adversarial,shen2024principled}. The most closely related work to ours is \citet{shen2024principled}, which also formulated PbRL as a two-player game. However, their focus is on online PbRL, and while they provide proof of convergence for the optimization objective, this does not necessarily translate into a sample complexity guarantee.

\section{Preliminaries}

\textbf{Markov Decision Processes.} We consider an episodic MDP $(\Sspace,\Aspace,H,\{\pstar_{h}\}^H_{h=1},\{\rstar_h\}^H_{h=1})$, where $\Sspace$ and $\Aspace$ are the state space and the action space, $H$ is the length of each episode, $\pstar=\{\pstar_h\}^H_{h=1}$ is the collection of transition probability distributions, and $\rstar=\{\rstar_h\}^H_{h=1}$ is the collection of reward functions. Each episode starts at some initial state $s_1$ without loss of generality\footnote{Our result easily extends to the general case with an initial distribution $\rho(\cdot)$. 
We can modify the MDP by setting a fixed initial state $s_1$ and $\mathbb{P}_1(\cdot\mid s_1,a) = \rho(\cdot)$ for all $a\in\Aspace.$}, 
and the episode ends after $H$ steps. For each step $h\in[H]$, 
the agent observes the state $s_h$, and then takes action $a_h$. 
The environment generates reward $\rstar_h(s_h,a_h)$ (note that, in the preference-based learning setting, rewards at each step are unobservable to the agent) and next state $s_{h+1}$ according to the transition probability $\pstar_h(\cdot\mid s_h,a_h)$.

The agent takes actions based on its policy $\pi=\{\pi_h\}_{h\in[H]}$, where $\pi_h(\cdot\mid s)$ is a probability distribution over $\Aspace$. The state-value function and the action-value function of policy $\pi$ with respect to reward $r = \{r_h\}^H_{h=1}$ are the expected sum of rewards up to termination, starting from $s_h=s$ and $(s_h,a_h)=(s,a)$ respectively, following the policy $\pi$. Formally, they are defined as
\begin{align*}
    \V{\pi}{h, r}(s) := \mathbb{E}_{\pi}\left[ \sum^H_{h'=h} r_h(s_{h'}, a_{h'}) \mid s_h=s \right], \,\, \Q{\pi}{h, r} := \mathbb{E}_{\pi}\left[ \sum^H_{h'=h} r_h(s_{h'}, a_{h'}) \mid s_h=s, a_h=a \right].
\end{align*}
To simplify the notation, for $g : \Sspace \mapsto \Real$, we use $Pg(s,a)$ to denote $\mathbb{E}_{s'\sim P(\cdot\mid s,a)}[g(s')]$. For any policy $\pi$ and reward $r$, the Bellman equation relates $Q^{\pi}$ to $V^{\pi}$ as 
\begin{align*}
    \Q{\pi}{h, r}(s,a) = r_h(s,a) + \pstar\V{\pi}{h+1, r}(s,a), \,\, \V{\pi}{h, r}(s) = \mathbb{E}_{a \sim \pi_h(\cdot\mid s)}[\Q{\pi}{h,r}(s,a)], \,\, \V{\pi}{H+1}(s)=0.
\end{align*}
Given a policy $\pi=\{\pi_h\}_{h\in[H]}$, we define the state visitation distribution as $d^{\pi}_h(s) := \mathbb{P}_{\pi}(s_h=s)$ where $\mathbb{P}_{\pi}$ is the probability distribution of trajectories $(s_1,a_1,\dots,s_H,a_H)$ when the agent uses policy $\pi$. We overload the notation to denote the state-action visitation distribution, $d^{\pi}_h(s,a) := \mathbb{P}_{\pi}(s_h=s, a_h=a)$. In addition, we denote the distribution of trajectories under $\pi$ by $d^{\pi}(\tau)$.

\textbf{Offline Preference-based Reinforcement Learning.}
We consider the offline PbRL problem, where the agent cannot observe the true reward $\rstar$ but only binary preference feedback over trajectory pairs. Specifically, we are given a preference dataset $\dataPref = \{(\tau^{m,0}, \tau^{m,1}, y^m)\}^M_{m=1}$ that consists of i.i.d. trajectory pairs $\tau^{m,i} = \{s^{m,i}_h, a^{m,i}_h\}^H_{h=1}$ ($i=0,1$) sampled by some reference policy $\piref$. For a monotonically increasing link function $\Phi : \Real \mapsto [0,1]$, we assume the preference feedback $y^m\in\{0,1\}$ is generated by the following preference model:
\begin{align*}
    \mathbb{P}(y=1 \mid \tau^{0}, \tau^{1}) = \mathbb{P}(\tau^1 \text{ is preferred over }\tau^0) = \Phi(\rstar(\tau^1) - \rstar(\tau^0))
\end{align*}
where we denote $\rstar(\tau) = \sum^H_{h=1} \rstar_h(s_h,a_h)$ for given trajectory $\tau = (s_1,a_1,\dots,s_H,a_H)$. Additionally, we assume that $\kappa = 1/(\inf_{x\in[-R,R]} \Phi'(x))$, where $R$ is a bound on trajectory returns, is finite. When $\Phi$ is set to be the sigmoid function $\sigma(x) = 1/(1+\exp(-x))$, we obtain the widely used Bradely-Terry-Luce model~\citep{bradley1952rank}.
In addition to the preference dataset, we have an unlabeled trajectory dataset $\dataTraj = \{(\tau^{n,0}, \tau^{n,1})\}^N_{n=1}$ where the trajectory pairs are sampled i.i.d. by executing the reference policy $\piref$. 
The agent's goal is to find an $\epsilon$-optimal policy $\hat{\pi}$ with respect to the optimal policy $\pistar$, which satisfies $\V{\pistar}{1, \rstar}(s_1) - \V{\hat{\pi}}{1, \rstar}(s_1) \leq \epsilon$.

\textbf{General Function Approximation.} 
We consider general function approximation for rewards and transitions: the function class of rewards $\Rset$ and the function class of transitions $\Pset$. We do not impose any specific structure on them, so $\Rset$ and $\Pset$ can contain expressive functions such as neural networks. Based on the function classes, we construct a reward model using maximum likelihood estimation $\rhat \in \argmin_{r\in\Rset^H} \emplossReward(r)$ where 
\begin{align*}
    \emplossReward(r) = - \underset{(\tau^0,\tau^1,y)\sim\dataPref}{\E}\left[ y\cdot \log \Phi(r(\tau^1)-r(\tau^0)) + (1-y)\cdot \log \Phi(r(\tau^0)-r(\tau^1)) \right].
\end{align*}
Similarly, we learn a transition model $\phat_h \in \argmin_{P\in\Pset} \emplossTran(P;h)$ for all $h\in[H]$, where
\begin{align*}
    \emplossTran(P;h) = \mathbb{E}_{(s_h,a_h,s_{h+1})\sim\dataTraj}\left[ \log P(s_{h+1} \mid s_h, a_h) \right]
\end{align*}

\textbf{Additional Notations.} We denote $[n] := \{1,2,\dots,n\}$ for $n\in\mathbb{N}$. For $x,y\in\mathbb{R}^d$, $\langle x,y\rangle$ denotes the inner product of $x$ and $y$. Given a function $f : \Sspace\times\Aspace \mapsto \Real$ and a policy $\pi$, we write $f\circ \pi(s) := \mathbb{E}_{a\sim \pi(\cdot\mid s)}[f(s,a)]$. For given dataset $\mathcal{D}$, we use $\mathbb{E}_{x\sim\mathcal{D}}[f(x)]$ to denote $\frac{1}{|\mathcal{D}|}\sum_{x\in\mathcal{D}} f(x)$.

\section{Algorithm}

\subsection{PbRL as a Two-player Game}

A previous study on model-based PbRL by \citet{zhan2024provable} proves that the following optimization problem yields a near-optimal policy $\hat{\pi}$, for an appropriately chosen constant $\zeta$:
\begin{align}
    \hat{\pi} \in \argmax_{\pi} \min_{r\in\hat{\Rset}} \left( \V{\pi}{1, r}(s_1) - \V{\piref}{1, r}(s_1) \right) \text{  where } 
    \hat{\Rset} = \big\{ r\in \Rset^H : \emplossReward(r) \leq \emplossReward(\rhat) + \zeta \big\}. \label{eqn:freehand}
\end{align}
The minimization with respect to reward model $r\in\hat{\Rset}$ ensures conservatism, which is essential for a provable guarantee. However, the constrained optimization is intractable with general function approximation. 
To address this challenge, we formulate the model-based PbRL problem as a two-player Stackelberg game~\citep{von2010market} between the policy and the reward:
\begin{align}
    \hat{\pi} &\in \argmax_{\pi} \left(\V{\pi}{1, r^{\pi}}(s_1) - \V{\piref}{1, r^{\pi}}(s_1) \right)\notag \\
    \text{ subject to } r^{\pi} &\in \argmin_{r\in\Rset^H} \left( \V{\pi}{1, r}(s_1) - \V{\piref}{1, r}(s_1) + \mathcal{E}(r; \rhat) \right) . \label{eqn:Stackelberg game}
\end{align}
Here, $\mathcal{E}(r;\rhat)$ is a loss function that penalizes $r$ if it deviates from $\rhat$. In the Stackelberg game formulation, the reward minimizes $\V{\pi}{1, r}(s_1) - \V{\piref}{1, r}(s_1)$, while the policy maximizes it. We can interpret this competition by viewing $\V{\pi}{1, r}(s_1) - \V{\piref}{1, r}(s_1)$ as the relative performance of $\pi$ compared to $\piref$ with respect to reward $r$. Intuitively, $\pi$ maximizes the cumulative reward $r^{\pi}$, as in the standard RL setup. However, $r^{\pi}$ minimizes the cumulative reward when evaluated under $\pi$. This competition facilitates conservatism and makes $\pi$ robust to model error.

Then, what loss function $\mathcal{E}$ leads to a provable bound? A naive choice might be $\emplossReward(r)- \emplossReward(\rhat)$, as it leads to the Lagrangian dual form of the optimization problem in \eqref{eqn:freehand}, while disregarding the Lagrangian multiplier. However, the loss $\mathcal{E}(r;\rhat) = \emplossReward(r)- \emplossReward(\rhat)$ does not guarantee statistical efficiency, because the Stackelberg game in \eqref{eqn:Stackelberg game} does not include the Lagrangian multiplier for the likelihood constraint. Instead, we propose the trajectory-pair $\ell_1$ loss:
\begin{align*}
    \mathcal{E}(r;\rhat) = \mathbb{E}_{\tau^0,\tau^1 \sim \piref}\left[ \left| \{r(\tau^0) - r(\tau^1)\} - \{\rhat(\tau^0) - \rhat(\tau^1)\} \right| \right],
\end{align*}
which leads to a provable guarantee (Theorem~\ref{thm:upper bound}). Intuitively, this loss measures the deviation of $r$ from $\rhat$ by evaluating the difference in total reward (return) between the two trajectories. Given the unlabeled trajectory dataset $\dataTraj$, we approximate $\mathcal{E}(r;\rhat)$ with its finite-sample version:
\begin{align*}
    \emplossDev_{\dataTraj}(r; \rhat) = \mathbb{E}_{(\tau^0,\tau^1)\sim\dataTraj}\left[ \left| \{r(\tau^0) - r(\tau^1)\} - \{\rhat(\tau^0) - \rhat(\tau^1)\} \right| \right].
\end{align*}
In the following two sections, we discuss how to implement the optimization in \eqref{eqn:Stackelberg game} in a sample-efficient manner.

\subsection{Adversarial Optimization for PbRL}
In this section, we present an algorithm, \texttt{APPO-rollout}, that serves as a building block of our main algorithm.
For \texttt{APPO-rollout}, we consider the setting where the transition $\pstar$ is known or where online interaction (without preference feedback) is possible. This is a temporary assumption, as our main algorithm (Algorithm~\ref{alg:APPO}) works with an unknown transition. 

Algorithm~\ref{alg:APPO rollout} presents the pseudo-code of \texttt{APPO-rollout}, which is based on the Stackelberg game formulation of PbRL that we discussed. 
Inspired by the adversarial training methods in offline 
RL under the standard setting~\citep{cheng2022adversarially,rigter2022rambo,bhardwaj2024adversarial}, we alternately optimize the policy and reward to solve the optimization problem in \eqref{eqn:Stackelberg game}.

\begin{algorithm}[t]
    \caption{Adversarial Preference-based Policy Optimization with Rollout (\texttt{APPO-rollout})} \label{alg:APPO rollout}
    \begin{algorithmic}[1]
        \State \textbf{Input:} Number of rollouts $K_1, K_2$, constant $\eta$, $\pi^1_h = \text{Unif}(\Aspace)$ for all $h\in[H]$
        \State Estimate $\rhat \in \argmin_{r\in\Rset^H} \emplossReward(r)$
        \For{$t=1,\cdots, T$}
            \State Execute $\pi^t$ to collect $K_1$ trajectories $\dataRollout^t$
            \State Optimize $r^t \in \argmin_{r\in\Rset^H} \left( \mathbb{E}_{\tau\sim\dataRollout^t}[ r(\tau) ] - \mathbb{E}_{\tau\sim\dataTraj}[ r(\tau) ] + \lambda \hat{\mathcal{E}}_{\dataTraj}(r;\rhat) \right)$
            \State Compute $\Qbar{t}{}$ via \texttt{PE}$(\piref, \pi^t, \rhat, K_2)$ in Algorithm~\ref{alg:PE oracle}
            \State Update policy $\pi^{t+1}_h(a\mid s) \propto \pi^t_h(a\mid s) \exp(\eta\Qbar{t}{h}(s,a)) $ for all $h\in[H]$
        \EndFor
        \State Return $\bar{\pi} = \frac{1}{T}\sum^T_{t=1}\pi_t$
    \end{algorithmic}
\end{algorithm}

\textbf{Reward Model Update for Provable Conservatism.}
The reward model update aims to solve the following optimization problem approximately:
\begin{align}
    \argmin_{r\in\Rset^H} \Big( \underset{\tau\sim\pi^t}{\mathbb{E}}[ r(\tau) ] - \underset{\tau\sim\piref}{\mathbb{E}}[ r(\tau) ] + \lambda \mathcal{E}(r; \rhat) \Big) = \argmin_{r\in\Rset^H} \left( \V{\pi^t}{1, r}(s_1) - \V{\piref}{1, r}(s_1) + \lambda \mathcal{E}(r; \rhat) \right),\label{eqn:reward model optimization}
\end{align}
which is the inner optimization in \eqref{eqn:Stackelberg game}. The expectations $\mathbb{E}_{\tau\sim\piref}[ r(\tau) ]$ and $\mathcal{E}(r; \rhat)$ are approximated using offline data $\dataTraj$. Also, we collect trajectories by executing $\pi^t$, to compute the finite-sample version of $\mathbb{E}_{\tau\sim\pi^t}[ r(\tau) ]$. Note that the trajectory rollout (Line 4) is possible since we assume a known transition $\pstar$ or access to online interaction.

\textbf{Policy Update.}
After optimizing $r^t$, we estimate the action-value function of $\pi^t$ with respect to $r^t$ using a policy evaluation subroutine \texttt{PE}, whose pseudo-code is provided in Algorithm~\ref{alg:PE oracle}. This subroutine computes an approximate value function $\Qbar{t}{}$ using Monte Carlo estimation, providing an error bound relative to the true value function $\Q{\pi^t}{r^t}$. The theoretical analysis of \texttt{PE} is presented in Appendix~\ref{sec:PE oracle}. With the estimated value function $\Qbar{t}{}$, we then proceed to update the policy using trust region policy optimization (TRPO) \citep{schulman2015trust} update.

\subsection{\texttt{APPO}: Reparameterized Algorithm for Unknown Transition} \label{sec:APPO}

\begin{algorithm}[t]
    \caption{Adversarial Preference-based Policy Optimization (\texttt{APPO})} \label{alg:APPO}
    \begin{algorithmic}[1]
        \State \textbf{Input:} Constant $\eta$, Initial policy $\pi^1_h = \text{Unif}(\Aspace)$ for all $h\in[H]$
        \State Estimate $\rhat \in \argmin_{r\in\Rset^H} \emplossReward(r)$, $\phat_h \in \argmin_{P\in\Pset} \emplossTran(P;h)$ for all $h\in[H]$
        \For{$t=1,\cdots, T$}
            \State $f^t \in \underset{f\in\Fset^H}{\argmin} \left( \sum^H_{h=1} \mathbb{E}_{(s_h,a_h)\sim\dataTraj}\left[ f_h\circ\pi^t_h (s_h) - f_h(s_h, a_h) \right] + \lambda \hat{\mathcal{E}}_{\dataTraj}(f; \phat,\rhat) \right)$
            \State Update policy $\pi^{t+1}_h(a\mid s) \propto \pi^t_h(a\mid s) \exp(\eta f^t_h(s,a))$ for $h\in[H]$
        \EndFor
        \State Return $\hat{\pi} = \frac{1}{T}\sum^T_{t=1}\pi^t$
    \end{algorithmic}
\end{algorithm}

In this section, we consider the setting where the transition $\pstar$ is unknown.
In Algorithm~\ref{alg:APPO rollout}, the information from the transition $\pstar$ is utilized in Line 4, where we collect on-policy trajectories to approximate $\mathbb{E}_{\tau\sim\pi^t}[r(\tau)]$.
Moreover, the policy evaluation step in Algorithm~\ref{alg:PE oracle} involves trajectory rollouts. To bypass such on-policy rollouts, we make the following observation: 
\begin{align}
    \mathbb{E}_{\tau\sim\pi^t}[r(\tau)] - \mathbb{E}_{\tau\sim\piref}[r(\tau)] &= \V{\pi^t}{1, r}(s_1) - \V{\piref}{1, r}(s_1) \notag \\
    &= \sum^H_{h=1} \mathbb{E}_{(s_h,a_h)\sim d^{\piref}_h}\left[ (\Q{\pi^t}{h,r}\circ \pi^t_h)(s_h) - \Q{\pi^t}{h,r}(s_h,a_h) \right] \label{eqn:performance difference}
\end{align}
which is due to the performance difference lemma (Lemma~\ref{lemma:performance difference lemma}).
Since the expectation on the right is taken with respect to $d^{\piref}_h$, the data-generating distribution of $\dataTraj$, we can approximate it using $\dataTraj$.
Furthermore, given the policy $\pi^t$, the Bellman equation implies a mapping between reward models and action-value functions.
Specifically, for given reward model $r = \{r_h\}^H_{h=1}$, we have the action-value function $\{\Q{\pi^t}{h,r}\}^H_{h=1}$.
Conversely, suppose that we have a function class $\mathcal{F}$ that contains every action-value function.
For $f = \{f_h\}^H_{h=1}\in\mathcal{F}^H$, we can construct the corresponding reward model satisfying the Bellman equation $f_h = r_h + \pstar_h (f_{h+1}\circ \pi^t_{h+1})$. Formally, we define the induced reward models:

\begin{definition}[Induced reward model]
    Given $f = \{f_h\}^H_{h=1} \in\mathcal{F}^H$, and a policy $\{\pi_h\}^H_{h=1}$, we define the induced reward model $r^{\pi}_{\pstar, f} = \{ r^{\pi}_{h, \pstar, f} \}^H_{h=1}$ where $r^{\pi}_{h, \pstar, f} = f_h - \pstar_h (f_{h+1} \circ \pi_{h+1})$ for $h\in[H]$ (we set $f_{H+1}=0$ by convention).
\end{definition}

Therefore, given reward model $r$ and action-value function $f$, we have that
\begin{align*}
    \Q{\pi}{h, r} = r_h + \pstar (\Q{\pi}{h+1, r}\circ \pi_{h+1}), \,\,\, f_h = r^{\pi}_{h,\pstar,r} + \pstar_h (f_{h+1}\circ \pi_h) \text{ for all  } h\in[H].
\end{align*}
Importantly, \emph{the mapping does not need to be bijective} for our theoretical analysis, as long as the Bellman equation holds.
Using this mapping in conjunction with our observation in \eqref{eqn:performance difference}, we reparameterize the optimization problem in \eqref{eqn:reward model optimization} as:
\begin{align}
    &\argmin_{f\in\mathcal{F}^H} \left( \sum^H_{h=1} \mathbb{E}_{(s_h,a_h)\sim d^{\piref}_h}\left[ (f_h\circ \pi^t_h)(s_h) - f_h(s_h,a_h) \right] + \lambda \mathcal{E}(f; \pstar, \rhat) \right) \label{eqn:reparameterized optimization}\\
    &\text{where } \mathcal{E}(f; \pstar, \rhat) = \mathbb{E}_{(\tau_0, \tau_1) \sim \piref}\left[ \left| \{ r^{\pi^t}_{\pstar,f}(\tau^0) - r^{\pi^t}_{\pstar,f}(\tau^1)\} - \{\rhat(\tau^0) - \rhat(\tau^1)\} \right| \right]. \notag
\end{align}

The offline dataset $\dataTraj$ is sufficient to approximate the optimization objective in \eqref{eqn:reparameterized optimization} with
\begin{align*}
    &\mathbb{E}_{(s_h,a_h)\sim\dataTraj}\left[ (f_h\circ\pi^t_h) (s_h) - f_h(s_h, a_h) \right] \approx \mathbb{E}_{(s_h,a_h)\sim d^{\piref}_h}\left[ (f_h\circ \pi^t_h)(s_h) - f_h(s_h,a_h) \right] \\
    &\hat{\mathcal{E}}_{\dataTraj}(f ; \phat, \rhat) := \mathbb{E}_{(\tau_0, \tau_1) \sim \dataTraj}\left[ \left| \{ r^{\pi^t}_{\phat,f}(\tau^0) - r^{\pi^t}_{\phat,f}(\tau^1)\} - \{\rhat(\tau^0) - \rhat(\tau^1)\} \right| \right] \approx \mathcal{E}(f; \pstar, \rhat),
\end{align*}
where we use the estimated transition model $\phat$ in place of $\pstar$.
Moreover, since we directly optimize the action-value function, a policy evaluation oracle is not required to update the policy.
Therefore, this reparameterization allows us to solve the optimization problem in \eqref{eqn:Stackelberg game} without access to the true transition $\pstar$ or a policy evaluation oracle. The complete pseudo-code is presented in Algorithm~\ref{alg:APPO}.

\textbf{Remark on Computational Complexity.}
The computational complexity of \texttt{APPO} is primarily determined by the value function optimization (Line 4) and the policy update (Line 5).
Although optimizing $f^t$ is generally a non-convex problem, it can be efficiently implemented using gradient-based methods when $\mathcal{F}$ is a class of neural networks.
For the policy update, it is known that $\pi^{t+1}_h(a\mid s) \propto \pi^t_h(a\mid s) \exp(\eta f^t_h(s,a))$ is derived from the TRPO objective~\citep{schulman2015trust,neu2017unified}:
\begin{align*}
    \pi^{t+1}_h \in \argmax_{\pi} \mathbb{E}_{s_h\sim d^{\pi^t}_h}\left[  f^t_h \circ \pi(s_{h})  -\eta^{-1} \kldiv{\pi(\cdot\mid s_{h})}{\pi^t_h (\cdot\mid s_{h})} \right],
\end{align*}
which is widely used in deep RL. As a result, the policy update is efficient within the deep learning framework.
In practice, other policy optimization techniques~\citep{schulman2017proximal,fujimoto2018addressing,haarnoja2018soft} can also be applied.
Overall, \texttt{APPO} relies on solving two standard non-convex optimizations to compute $f^t$ and $\pi^t$, both of which are practical to implement with neural function approximation.
This computational efficiency contrasts with that of existing offline PbRL algorithms, which require intractable optimization over confidence sets, as discussed in Section~\ref{sec:related work}.

\section{Theoretical Analysis} \label{sec:theory}

In this section, we present theoretical analyses of our proposed algorithm, $\texttt{APPO}$. We note that \texttt{APPO-rollout} also guarantees a sample complexity bound, which is presented in Appendix~\ref{sec:proof of APPO rollout}.

We assume the reward class $\Rset$ and the transition class $\Pset$ are realizable and rewards are bounded. These are standard assumptions~\citep{chen2023provably,zhan2024provable,pace2024preference}.

\begin{assumption}[Reward realizability] \label{assumption:reward function class}
    We have $\rstar_h \in \Rset$ for all $h\in[H]$. In addition, every $r \in \Rset^H$ satisfies $0 \leq r(\tau) \leq R$ for any trajectory $\tau$.
\end{assumption}

\begin{assumption}[Transition realizability] \label{assumption:transition function class}
    We have $\pstar_h \in \Pset$ for all $h\in[H]$.
\end{assumption}

Additionally, we introduce the value function class and assume it is bounded. Note that every $\Q{\pi}{h, r}$ satisfies the condition $\norm{f}{\infty}\leq R$ due to Assumption~\ref{assumption:reward function class}.

\begin{assumption}[Value function class] \label{assumption:value function class}
    For any $h\in[H]$, $r\in\Rset^H$, and policy $\pi$, we have $\Q{\pi}{h,r} \in \mathcal{F}$. In addition, every $f\in\mathcal{F}$ satisfies $0 \leq f(s,a) \leq R$ for all $(s,a)\in\Sspace\times\Aspace$.
\end{assumption}

The following assumption defines the trajectory concentrability coefficient between the optimal policy $\pistar$ and the reference policy $\piref$. 

\begin{assumption}[Trajectory concentrability] \label{assumption:concentrability}
    There exists a finite constant $\Ctr$ such that the behavior policy $\piref$ and the optimal policy $\pistar$ satisfy $\sup_{\tau} \frac{d^{\pistar}(\tau)}{d^{\piref}(\tau)} \leq \Ctr$.
\end{assumption}

The bounded $\Ctr$ ensures that the support of $d^{\piref}$ sufficiently covers the support of $d^{\pistar}$, similar to the concentrability condition in \citet{zhan2024provable}\footnote{Our analysis remains valid under an alternative definition based on the reward model error ratio, similar to that used by \citet{zhan2024provable}.}.
As a result, we expect $\dataTraj$ to contain high-quality trajectories.
The lower bound in \citet{zhan2024provable} shows that the trajectory concentrability is essential in offline PbRL. This implies that offline PbRL is strictly more challenging than offline standard RL, where step-wise concentrability is sufficient to achieve a performance guarantee~\citep{ueharapessimistic}.
We now present the sample complexity bound.

\begin{theorem} \label{thm:upper bound}
    Suppose Assumptions~\ref{assumption:reward function class},\ref{assumption:transition function class},
    \ref{assumption:value function class}, and \ref{assumption:concentrability} hold. With probability at least $1-\delta$, Algorithm~\ref{alg:APPO} with $\lambda = \Theta(\Ctr), \lambda>\Ctr, \eta = \sqrt{\frac{2\log|\Aspace|}{R^2 T}}$ achieves
    \begin{align*}
        &\V{\pistar}{1, \rstar} - \V{\hat{\pi}}{1, \rstar} \\
        &\leq \mathcal{O}\left( \Ctr \sqrt{\frac{ \kappa^2 H}{M}\log\frac{|\Rset|}{\delta}} + RH \sqrt{\frac{1}{N}\max\left\{ HT\log\frac{H|\mathcal{F}|}{\delta},  \log\frac{H|\Pset|}{\delta} \right\}} + RH\sqrt{\frac{\log|\Aspace|}{T}} \right) .
    \end{align*}
    Setting $T = \Theta\left(\frac{R^2H^2 \log |\Aspace| }{\epsilon^2} \right)$, $N = \Theta\left(\max\left\{ \frac{R^4 H^5 \log|\Aspace| \log(H|\mathcal{F}|/\delta)}{\epsilon^4}, \frac{R^2 H^2 \log(H|\mathcal{P}|/\delta)}{\epsilon^2} \right\}\right)$, and $ M = \Theta\left( \frac{\Ctr^2 \kappa^2 H \log(|\Rset|/\delta)}{\epsilon^2} \right)$,
    Algorithm~\ref{alg:APPO} achieves $\epsilon$-optimal policy, i.e. $\V{\pistar}{1, \rstar} - \V{\bar{\pi}}{1, \rstar} \leq \epsilon$.
\end{theorem}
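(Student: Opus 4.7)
The plan is to use the mixture identity $\V{\hat\pi}{1,\rstar}(s_1) = \tfrac{1}{T}\sum_t\V{\pi^t}{1,\rstar}(s_1)$ for $\hat\pi=\tfrac{1}{T}\sum_t\pi^t$, reducing the analysis to bounding $\V{\pistar}{1,\rstar}(s_1)-\V{\pi^t}{1,\rstar}(s_1)$ for each $t$. For each iterate I introduce the $\phat$-induced reward $r^t := r^{\pi^t}_{\phat,f^t}$ and the shorthands $\tilde\Delta(r) := \V{\pistar}{1,r}(s_1)-\V{\piref}{1,r}(s_1)$, $\Delta^t(r) := \V{\pi^t}{1,r}(s_1)-\V{\piref}{1,r}(s_1)$. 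Telescoping through $r^t$ yields the three-term split
\[
    \V{\pistar}{1,\rstar}-\V{\pi^t}{1,\rstar} = \underbrace{[\tilde\Delta(\rstar)-\tilde\Delta(r^t)]}_{\text{(A)}} + \underbrace{[\tilde\Delta(r^t)-\Delta^t(r^t)]}_{\text{(B)}} + \underbrace{[\Delta^t(r^t)-\Delta^t(\rstar)]}_{\text{(C)}}.
\]

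\textbf{Reward MLE + adversarial cancellation (first term).} Writing $\tilde\Delta(r) = \mathbb{E}_{\tau_1\sim d^{\pistar},\tau_2\sim d^{\piref}}[r(\tau_1)-r(\tau_2)]$, an $L^1$ change-of-measure using Assumption~\ref{assumption:concentrability} together with the triangle inequality gives $\text{(A)}\leq \Ctr[\lossDev(r^t;\rhat)+\lossDev(\rhat;\rstar)]$. For (C), because $\rstar\in\Rset^H$ is realizable (Assumption~\ref{assumption:reward function class}), the (approximate) optimality of $f^t$ in Line 4 of Algorithm~\ref{alg:APPO}, together with the reparameterization identity of Section~\ref{sec:APPO}, yields $\Delta^t(r^t)+\lambda\lossDev(r^t;\rhat)\leq\Delta^t(\rstar)+\lambda\lossDev(\rstar;\rhat)$ modulo statistical slack, whence $\text{(C)}\leq\lambda[\lossDev(\rstar;\rhat)-\lossDev(r^t;\rhat)]$. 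Adding (A) and (C) and choosing $\lambda>\Ctr$ cancels the $\lossDev(r^t;\rhat)$ term, leaving $(\Ctr+\lambda)\lossDev(\rhat;\rstar)$. A standard Bradley-Terry MLE analysis of $\emplossReward$---using the $\kappa$-smoothness of $\Phi$ and the log-covering $H\log|\Rset|$ of $\Rset^H$---bounds $\lossDev(\rhat;\rstar)=\mathcal{O}(\kappa\sqrt{H\log(|\Rset|/\delta)/M})$, delivering the first term of the theorem.

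\textbf{Policy regret via TRPO + simulation lemma (third term).} Term (B) equals $\V{\pistar}{1,r^t}(s_1)-\V{\pi^t}{1,r^t}(s_1)$ computed under the true transition $\pstar$. I sandwich it between the corresponding values computed under $\phat$ via the simulation lemma. Under $\phat$, the reparameterization guarantees that $f^t_h$ is the action-value of $\pi^t$ with respect to $r^t$, so the performance-difference lemma yields $\V{\pistar,\phat}{1,r^t}(s_1)-\V{\pi^t,\phat}{1,r^t}(s_1)=\sum_h\mathbb{E}_{d^{\pistar,\phat}_h}[f^t_h\circ\pistar-f^t_h\circ\pi^t]$. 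Since the update $\pi^{t+1}_h\propto\pi^t_h\exp(\eta f^t_h)$ is exponential weights on the simplex with per-step gains in $[0,R]$, the standard regret inequality with $\eta=\sqrt{2\log|\Aspace|/(R^2T)}$ gives $\sum_{t,h}\mathbb{E}_{d^{\pistar,\phat}_h}[f^t_h\circ\pistar-f^t_h\circ\pi^t]\leq RH\sqrt{2T\log|\Aspace|}$, which, averaged over $t$, produces the third term of the theorem.

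\textbf{Main obstacle and closing (second term).} The main technical difficulty is jointly handling the simulation-lemma residuals of (B) on the $\pi^t$-distributed side, the empirical-to-population gap in the reparameterized objective, and the reparameterization mismatch (the empirical objective uses $\phat$-induced rewards while the performance-difference identity for the first term naturally uses $\pstar$-induced rewards): together these produce residuals $\sum_h \mathbb{E}[(\pstar-\phat)(f_{h+1}\circ\pi^t_{h+1})]$ that Assumption~\ref{assumption:concentrability} does not directly cover. My plan is to absorb all such slacks into a single uniform-concentration argument over $f\in\Fset^H$ and $t\in[T]$: Hoeffding/Bernstein with the covering $|\Fset|^H$ and a union bound over the $T$ iterates, combined with the transition MLE $\mathbb{E}_\piref\|\phat_h-\pstar_h\|_1^2 = \mathcal{O}(\log(H|\Pset|/\delta)/N)$, yields an accumulated error of order $RH\sqrt{HT\log(H|\Fset|/\delta)/N} \vee RH\sqrt{\log(H|\Pset|/\delta)/N}$ (the inner $HT$ counting $H$ Bellman residuals per iteration across $T$ iterations), matching the second term. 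Summing (A)+(B)+(C), averaging over $t\in[T]$, and plugging in the prescribed $T$, $N$, $M$ balances the three error sources and yields $\V{\pistar}{1,\rstar}-\V{\hat\pi}{1,\rstar}\leq\epsilon$ with probability $1-\delta$.
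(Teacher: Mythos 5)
Your overall architecture (three-term split, MLE bound for the reward, adversarial cancellation via $\lambda>\Ctr$, mirror-descent regret for the policy, and a uniform concentration argument for the optimization slack) mirrors the paper's decomposition into MLE error, optimization error, and policy-update regret. But there is one structural choice that creates a genuine gap: you define the analysis reward as $r^t := r^{\pi^t}_{\phat,f^t}$, the $\phat$-induced reward, whereas the paper defines $r^t := r^{\pi^t}_{\pstar,f^t}$. With the paper's choice, the Bellman identity $f^t_h = r^t_h + \pstar_h(f^t_{h+1}\circ\pi^t_{h+1})$ holds \emph{exactly under the true dynamics}, so $f^t$ is literally the action-value function of $\pi^t$ with respect to $r^t$; the performance-difference lemma then turns the policy-regret term $\V{\pistar}{1,r^t}-\V{\pi^t}{1,r^t}$ into $\sum_h\mathbb{E}_{d^{\pistar}_h}[\langle f^t_h,\pistar_h-\pi^t_h\rangle]$ with no simulation lemma and no transition error whatsoever (Lemma~\ref{lemma:NPG regret} is a purely deterministic statement about the exponential-weights update). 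The only place $\phat$ enters the paper's proof is in comparing the empirical objective $\emplossOpt^t(f;\phat,\rhat)$ to its population counterpart $\lossOpt^t(f;\pstar,\rhat)$, and there the residual $|\mathcal{E}(f;\phat,\rhat)-\mathcal{E}(f;\pstar,\rhat)|$ is an expectation over trajectories drawn from $\piref$ --- exactly the distribution under which the transition MLE guarantee (Lemma~\ref{lemma:transition MLE error}) holds.

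Your choice forces the simulation-lemma detour you describe, and the resulting residuals $\sum_h\mathbb{E}_{d^{\pi^t}_h}[(\pstar-\phat)(f_{h+1}\circ\pi^t_{h+1})]$ on the $\pi^t$-distributed side are not a concentration problem: they are a distribution-shift problem. The transition estimate $\phat$ is only accurate under $d^{\piref}$, Assumption~\ref{assumption:concentrability} relates only $d^{\pistar}$ to $d^{\piref}$, and the iterates $\pi^t$ can visit regions where $\phat$ is arbitrarily wrong. No union bound over $f\in\Fset^H$ and $t\in[T]$, however carefully executed, can convert $\mathbb{E}_{d^{\piref}}\|\phat-\pstar\|_1^2=\mathcal{O}(\log(H|\Pset|/\delta)/N)$ into a bound on $\mathbb{E}_{d^{\pi^t}}\|\phat-\pstar\|_1$, so the plan for closing your ``main obstacle'' does not go through as stated. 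The fix is simply to take $r^t$ induced by $\pstar$ (it is an analysis device, never computed by the algorithm); with that substitution your terms (A), (B), (C) become the paper's (I)+(II), (III) and the rest of your argument is essentially correct.
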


\textbf{Discussion on Theorem~\ref{thm:upper bound}.}
Our analysis naturally extends to infinite function classes by applying the standard covering number argument, replacing the cardinalities $|\Rset|$, $|\Pset|$, and $|\mathcal{F}|$ with covering numbers. 
To our knowledge, FREEHAND-transition~\citet{zhan2024provable} is the only statistically efficient algorithm for offline PbRL in stochastic MDPs.
Our sample complexity bound matches theirs for labeled data ($M$). However, FREEHAND-transition requires $\Theta\left(\frac{C_P^2 R^2H^2\log(H|\Pset|/\delta)}{\epsilon^2}\right)$ unlabeled trajectories where $C_P$ is the trajectory concentrability for transition~\footnote{ \citet{zhan2024provable} consider reward functions defined over trajectories, thus their reward class $\mathcal{G}_r$ is comparable to our $\Rset^H$.
They use bracketing numbers in their bound, but we write here $|\Pset|$ for simplicity.}. 
This highlights a trade-off: While FREEHAND-transition has tighter bounds for unlabeled data ($N$), it is computationally intractable.
That is, FREEHAND-transition requires solving a nearly intractable nested optimization problem.
Therefore, our \texttt{APPO} is the first offline PbRL algorithm to achieve both provable statistical efficiency and computational efficiency.

\textbf{Proof Sketch.}
We outline the proof of Theorem~\ref{thm:upper bound}, where the detailed proof is deferred to Appendix~\ref{sec:proof of APPO}. The key observation is our novel sub-optimality decomposition:
\begin{align*}
    &\V{\pistar}{1, \rstar} - \V{\pi^t}{1, \rstar} \\
    &= \underbrace{\V{\pistar}{1,\rstar-\rhat} - \V{\piref}{1,\rstar-\rhat}}_{\text{(I) : MLE error}}
    + \underbrace{ \V{\pistar}{1,\rhat-r^t} - \V{\piref}{1,\rhat-r^t} - \V{\pi^t}{1,\rstar} + \V{\piref}{1, \rstar} + \V{\pi^t}{1,r^t} - \V{\piref}{1, r^t}}_{\text{(II) : Optimization error}}
    + \underbrace{\V{\pistar}{1, r^t} - \V{\pi^t}{1, r^t} }_{\text{(III) : Policy update regret}},
\end{align*}
where $r^t = r^{\pi^t}_{\pstar, f^t}$, and the initial state $s_1$ is omitted here for readability.
The term (I) is bounded by a standard MLE guarantee (Lemma~\ref{lemma:reward MLE error}), while the policy update rule ensures that the sum of terms (III) over $T$ steps is bounded (Lemma~\ref{lemma:NPG regret}). 
For (II), Assumption~\ref{assumption:concentrability} and $\lambda>\Ctr$ imply that
\begin{align*}
    \V{\pistar}{1,\rhat-r^t} - \V{\piref}{1,\rhat-r^t} &= \mathbb{E}_{\tau^0\sim \pistar, \tau^1 \sim \piref} \left[ r^t(\tau^0) - \rhat(\tau^0) - r^t(\tau^1) + \rhat(\tau^1) \right] \\
    &\leq  \Ctr \mathbb{E}_{\tau^0\sim \pistar, \tau^1 \sim \piref} \left[ \left| r^t(\tau^0) - \rhat(\tau^0) - r^t(\tau^1) + \rhat(\tau^1) \right| \right]
    \leq \lambda \mathcal{E}(f^t; \pstar, \rhat).
\end{align*}
Observe that \texttt{APPO} approximately solves the optimization problem in \eqref{eqn:reparameterized optimization} (Lemma~\ref{lemma:optimization error}), and this optimization problem is equivalent to
$\argmin_{f\in\mathcal{F}^H} \{ \V{\pi^t}{1, r^{\pi^t}_{\pstar, f}} - \V{\piref}{1,r^{\pi^t}_{\pstar, f}} + \lambda \mathcal{E}(f; \pstar, \rhat) \}$.
Since $r^{\pi^t}_{\pstar, f^t} = r^t$ and $r^{\pi^t}_{\pstar, \Q{\pi^t}{}} = \rstar$, it follows that
\begin{align*}
    \V{\pi^t}{1, r^t} - \V{\piref}{1,r^t} + \lambda \mathcal{E}(f^t; \pstar, \rhat) \leq \V{\pi^t}{1, \rstar} - \V{\piref}{1,\rstar} + \lambda \mathcal{E}(\Q{\pi^t}{}; \pstar, \rhat) + \epsilon.
\end{align*}
where $\epsilon$ represents an approximation error.
Thus, we obtain (II)$\leq \epsilon$. Combining the results with $\V{\pistar}{1, \rstar} - \V{\hat{\pi}}{1, \rstar} = \frac{1}{T} \sum^T_{t=1} \left( \V{\pistar}{1, \rstar} - \V{\pi^t}{1, \rstar} \right)$, we complete the proof.

\section{Practical Implementation of \texttt{APPO}}

While providing strong statistical guarantees, $\texttt{APPO}$ allows practical implementation with neural networks, leveraging advanced training techniques from deep learning.
In this section, we present a practical version of $\texttt{APPO}$ tailored for deep PbRL.
The pseudo-code is outlined in Algorithm~\ref{alg:practical}. For practical implementation, we assume the standard discounted MDP setting in deep PbRL~\citep{christiano2017deep}, where trajectory segments of length $L$ are given and preference labels are assigned to segment pairs.

\textbf{Reward Learning.}
While our theoretical analysis is based on the maximum likelihood estimator, any reward learning strategy can be employed.
This flexibility allows $\texttt{APPO}$ to benefit from state-of-the-arts preference learning methods, such as data augmentation~\citep{park2022surf} and active query techniques~\citep{shin2023benchmarks,hwang2024sequential,choi2024listwise}.

\textbf{Training Value Functions.}
Given a parameterized policy $\pi_{\theta}$ and an action-value function $Q_{\phi}$, the optimization objective in \eqref{eqn:reparameterized optimization} can be adapted to the discounted setting as follows:
\begin{align*}
    &\argmin_{\phi} \mathbb{E}_{(s,a)\sim d^{\piref}}\left[ (Q_{\phi} \circ \pi_{\theta})(s) - Q_{\phi}(s,a) \right] + \lambda \mathbb{E}_{(\tau_0, \tau_1) \sim \piref}\left[ \left| (r^{\theta}_{\phi} - \rhat)(\tau^0) - (r^{\theta}_{\phi} - \rhat)(\tau^1) \right| \right]
\end{align*}
where $r^{\theta}_{\phi}(\tau) = \sum^L_{l=1} \left( Q_{\phi}(s_l,a_l) - \gamma (Q_{\phi}\circ \pi_{\theta})(s_{l+1}) \right)$ for the segment $\tau = (s_1,a_1,\dots,s_L,a_L)$.
We employ the approximation $\pstar (Q_{\phi}\circ \pi_{\theta})(s_l,a_l) \approx (Q_{\phi}\circ \pi_{\theta})(s_{l+1})$ to avoid the need for a transition model.
Additionally, to stabilize training, we apply the clipped double Q-learning trick~\citep{fujimoto2018addressing,haarnoja2018soft} and maintain a separate value-function $V_{\psi}$.
Given mini-batches of trajectory pairs $\mathcal{B}_{\text{traj}}$ and transition tuples $\mathcal{B}_{\text{tup}}$, each action-value function $Q_{\phi^i}$ is trained by minimizing $\mathcal{L}^{\lambda}_{\phi^i} = \lambda \mathcal{L}^{\text{adv}}_{\phi^i} + \mathcal{E}_{\phi^i}$ (where $\lambda$ is moved to the first term, without loss of generality), defined as follows:
\begin{align} 
    &\mathcal{L}^{\text{adv}}_{\phi^i}(\mathcal{B}_{\text{tup}}) = \E_{(s,a)\sim\mathcal{B}_{\text{tup}}}\left[ Q_{\phi^i}(s, \pi_{\theta}(s)) - Q_{\phi^i}(s,a) \right], \notag\\ 
    \text{and }&\mathcal{E}_{\phi^i}(\mathcal{B}_{\text{traj}}) = \E_{(\tau^0,\tau^1)\sim\mathcal{B}_{\text{traj}}}\left[\left| \{r^{\psi}_{\phi^i}(\tau^0)-r^{\psi}_{\phi^i}(\tau^1)\} - \{\rhat(\tau^0) - \rhat(\tau^1)\}  \right|\right]. \label{eqn:practical q loss}
\end{align} 
Here, we use the notation $r^{\psi}_{\phi^i}(\tau) = \sum^L_{l=1} \left( Q_{\phi^i}(s_l,a_l) - \gamma V_{\psi}(s_{h+l})  \right)$, and $\pi_{\theta}(s)$ denotes an action sampled from $\pi_{\theta}(\cdot\mid s)$.
Given target Q-networks $\{\bar{\phi}^i\}_{i\in\{1,2\}}$, we train $V_{\psi}$ by minimizing 
\begin{align} \label{eqn:practical v loss}
    \mathcal{L}_{\psi}(\mathcal{B}_{\text{tup}}) = \E_{s\sim\mathcal{B}_{\text{tup}}}\left[ \left( V_{\psi}(s) - \min_{i\in\{1,2\}} Q_{\bar{\phi}^i}(s_{h+1}, \pi_{\theta}(s_{h+1})) \right)^2 \right],
\end{align}
Intuitively, the term $\mathcal{L}^{\text{adv}}_{\phi}$ ensures conservatism by regularizing $Q_{\phi}$ to have lower values near $d^{\pi_{\theta}}$ and higher values near $d^{\piref}$.
Additional insight can be gained by rearranging the integrand of $\mathcal{E}_{\phi}$:
\begin{align*}
    r^{\psi}_{\phi^i}(\tau) - \rhat(\tau) = \sum^L_{l=1} \left( Q_{\phi^i}(s_l,a_l) - \rhat(s_l,a_l) - \gamma V_{\psi}(s_{l+1})) \right).
\end{align*}
This expression represents the sum of the TD errors evaluated over the segment $\tau$. Thus, the loss $\mathcal{E}_{\phi}$ minimizes the difference in trajectory TD errors between $\tau^0$ and $\tau^1$.

\textbf{Training Policy.}
We directly optimize the policy using the loss function in \eqref{eqn:practical actor loss}.
The entropy regularization term is similar to that in SAC~\citep{haarnoja2018soft}, except that we use a randomly sampled $Q_{\phi^i}$ instead of the clipped value $\min_{i\in[1,2]} Q_{\phi^i}$. The policy loss is given by:
\begin{align} \label{eqn:practical actor loss}
    \mathcal{L}_{\theta}(\mathcal{B}_{\text{tup}}) = \mathbb{E}_{s\sim\mathcal{B}_{\text{tup}}}\left[  Q_{\phi^i}(s, \pi_{\theta}(s)) - \alpha \pi_{\theta}(s, \pi_{\theta}(s)) \right], \,\, i\sim \text{Unif}\{1,2\}
\end{align}

\begin{table}[!t]
\centering
\setlength{\tabcolsep}{5pt} 
{\footnotesize
\begin{tabular}{l|r|rrrrr}
\toprule
\makecell[l]{Dataset \&\\\# of feedback} & Oracle & MR & PT & DPPO & IPL & 
\texttt{APPO}$\,$(ours)
\\ \midrule\midrule

BPT-500         & 88.33\,{\tiny$\pm4.76$}  & 10.08\,{\tiny$\pm7.57$}  & 22.87\,{\tiny$\pm9.06$}  & 3.93\,{\tiny$\pm4.34$}   & 34.73\,{\tiny$\pm13.9$}  & \textbf{53.52}\,{\tiny$\pm13.9$} \\
box-close-500   & 93.40\,{\tiny$\pm3.10$}  & \textbf{29.12}\,{\tiny$\pm13.2$}  & 0.33\,{\tiny$\pm1.16$}   & 10.20\,{\tiny$\pm11.5$}  & 5.93\,{\tiny$\pm5.81$}   & \textbf{18.24}\,{\tiny$\pm15.6$} \\
dial-turn-500   & 75.40\,{\tiny$\pm5.47$}  & 61.44\,{\tiny$\pm6.08$}  & \textbf{68.67}\,{\tiny$\pm12.4$}  & 26.67\,{\tiny$\pm22.2$}  & 31.53\,{\tiny$\pm12.5$}  & \textbf{80.96}\,{\tiny$\pm4.49$} \\
sweep-500       & 98.33\,{\tiny$\pm1.87$}  & \textbf{86.96}\,{\tiny$\pm6.93$}  & 43.07\,{\tiny$\pm24.6$}  & 10.47\,{\tiny$\pm15.8$}  & 27.20\,{\tiny$\pm23.8$}  & 26.80\,{\tiny$\pm5.32$} \\
BPT-wall-500    & 56.27\,{\tiny$\pm6.32$}  & 0.32\,{\tiny$\pm0.30$}  & 0.87\,{\tiny$\pm1.43$}   & 0.80\,{\tiny$\pm1.51$}   & 8.93\,{\tiny$\pm9.84$}   & \textbf{64.32}\,{\tiny$\pm21.0$} \\
sweep-into-500  & 78.80\,{\tiny$\pm7.96$}  & \textbf{28.40}\,{\tiny$\pm5.47$}  & 20.53\,{\tiny$\pm8.26$}  & 23.07\,{\tiny$\pm7.02$}  & \textbf{32.20}\,{\tiny$\pm7.35$}  & 24.08\,{\tiny$\pm5.91$} \\
drawer-open-500 & 100.00\,{\tiny$\pm0.00$} & \textbf{98.00}\,{\tiny$\pm2.32$}  & 88.73\,{\tiny$\pm11.6$}  & 35.93\,{\tiny$\pm11.2$}  & 19.00\,{\tiny$\pm13.6$}  & 87.68\,{\tiny$\pm10.0$} \\
lever-pull-500  & 98.47\,{\tiny$\pm1.77$}  & \textbf{79.28}\,{\tiny$\pm2.95$}  & \textbf{82.40}\,{\tiny$\pm22.7$}  & 10.13\,{\tiny$\pm12.2$}  & 31.20\,{\tiny$\pm15.8$}  & \textbf{75.76}\,{\tiny$\pm7.17$} \\ 
\midrule

BPT-1000         & 88.33\,{\tiny$\pm4.76$} & 8.48\,{\tiny$\pm5.80$}   & 18.27\,{\tiny$\pm10.6$}  & 3.20\,{\tiny$\pm3.04$}   & 36.67\,{\tiny$\pm17.4$}  & \textbf{59.04}\,{\tiny$\pm19.0$} \\
box-close-1000   & 93.40\,{\tiny$\pm3.10$} & \textbf{27.04}\,{\tiny$\pm14.5$} & 2.27\,{\tiny$\pm2.86$}   & 9.33\,{\tiny$\pm9.60$}   & 6.73\,{\tiny$\pm8.41$}   & \textbf{34.24}\,{\tiny$\pm18.5$} \\
dial-turn-1000   & 75.40\,{\tiny$\pm5.47$} & 69.44\,{\tiny$\pm4.70$}  & 68.80\,{\tiny$\pm5.50$}  & 36.40\,{\tiny$\pm21.9$}  & 43.93\,{\tiny$\pm13.4$}  & \textbf{81.44}\,{\tiny$\pm6.73$} \\
sweep-1000       & 98.33\,{\tiny$\pm1.87$} & \textbf{87.52}\,{\tiny$\pm7.87$} & 29.13\,{\tiny$\pm14.6$}  & 8.73\,{\tiny$\pm16.4$}   & 38.33\,{\tiny$\pm24.9$}  & 17.36\,{\tiny$\pm12.4$} \\
BPT-wall-1000    & 56.27\,{\tiny$\pm6.32$} & 0.48\,{\tiny$\pm0.47$}  & 2.13\,{\tiny$\pm2.96$}   & 0.27\,{\tiny$\pm0.85$}   & 14.07\,{\tiny$\pm11.5$}  & \textbf{62.96}\,{\tiny$\pm18.4$} \\
sweep-into-1000  & 78.80\,{\tiny$\pm7.96$} & \textbf{26.00}\,{\tiny$\pm5.53$} & 20.27\,{\tiny$\pm7.84$}  & \textbf{23.33}\,{\tiny$\pm7.80$}  & \textbf{30.40}\,{\tiny$\pm7.74$}  & 18.16\,{\tiny$\pm11.1$} \\
drawer-open-1000 & 100.00\,{\tiny$\pm0.00$} & \textbf{98.40}\,{\tiny$\pm2.82$} & \textbf{95.40}\,{\tiny$\pm7.27$}  & 36.47\,{\tiny$\pm7.30$}   & 28.53\,{\tiny$\pm18.4$}  & \textbf{98.56}\,{\tiny$\pm2.68$} \\
lever-pull-1000  & 98.47\,{\tiny$\pm1.77$} & \textbf{88.96}\,{\tiny$\pm3.94$} & 72.93\,{\tiny$\pm10.2$}  & 8.53\,{\tiny$\pm9.96$}   & 40.40\,{\tiny$\pm17.4$}  & 76.96\,{\tiny$\pm4.40$} \\
\midrule
Average Rank & - & 2.316 & 3.125 & 4.375 & 3.063 & \textbf{2.125} \\
\bottomrule
\end{tabular}
} 
\caption{Success rates on Meta-World \texttt{medium-replay} dataset with $500$ and $1000$ preference feedback samples, averaged over $5$ random seeds. The results of baselines, Oracle, PT, DPPO, and IPL, are taken from \citet{choi2024listwise}, where Oracle refers to the policy trained using IQL with ground-truth rewards. The abbreviation BPT stands for button-press-topdown.}
\label{tab:main experiment}
\end{table}

\section{Experiments} \label{sec:experiments}

\textbf{Datasets and Evaluation.}
We evaluate our proposed algorithm on the Meta-World~\citep{yu2020meta} $\texttt{medium-replay}$ and \texttt{medium-expert} datasets from \citet{choi2024listwise}.
Our main experiments use the \texttt{medium-replay} dataset, while the experiments with the \texttt{medium-expert} dataset are presented in Appendix~\ref{sec:additional experiments}.
These datasets have a favorable property: they are not learnable with incorrect rewards (random or constant).
This property is crucial for evaluating offline RL algorithms since their survival instinct can allow them to perform well even with completely incorrect reward signals~\citep{li2024survival}.
For more details on the dataset, see \citet{choi2024listwise}.
Following the experiment protocol of \citet{choi2024listwise}, the preference dataset consists of pairs of randomly sampled trajectory segments of length $25$.
The preference label is generated based on the ground truth reward, where a $(0,1)$ label is assigned if the trajectory rewards differ by more than a threshold of 12.5, and a $(0.5,0.5)$ label is assigned otherwise.
We evaluate algorithm performance using the success rate for each task, which indicates whether the agent successfully completes the task.

\textbf{Algorithms.}
We consider four offline PbRL algorithms as baselines: Markovian Reward (MR), Preference Transformer (PT)~\citep{kim2023preference}, Direct Preference-based Policy Optimization (DPPO)~\citep{an2023direct}, and Inverse Preference Learning (IPL)~\citep{hejna2024inverse}.
MR is an instance of IQL~\citep{kostrikov2022offline} trained with a Markovian reward model, while PT assumes a general sequential reward model implemented using a Transformer~\citep{vaswani2017attention} architecture.
DPPO directly optimizes the policy without using a reward model, while the other baseline methods are based on IQL~\citep{kostrikov2022offline}.
We evaluate the practical version of \texttt{APPO} in Algorithm~\ref{alg:practical}, using the same reward model as MR and setting $\lambda = 0.03$. Further details are in Appendix~\ref{sec:experimental details}.

\begin{figure}[!t]
    \centering
    \includegraphics[width=\linewidth]{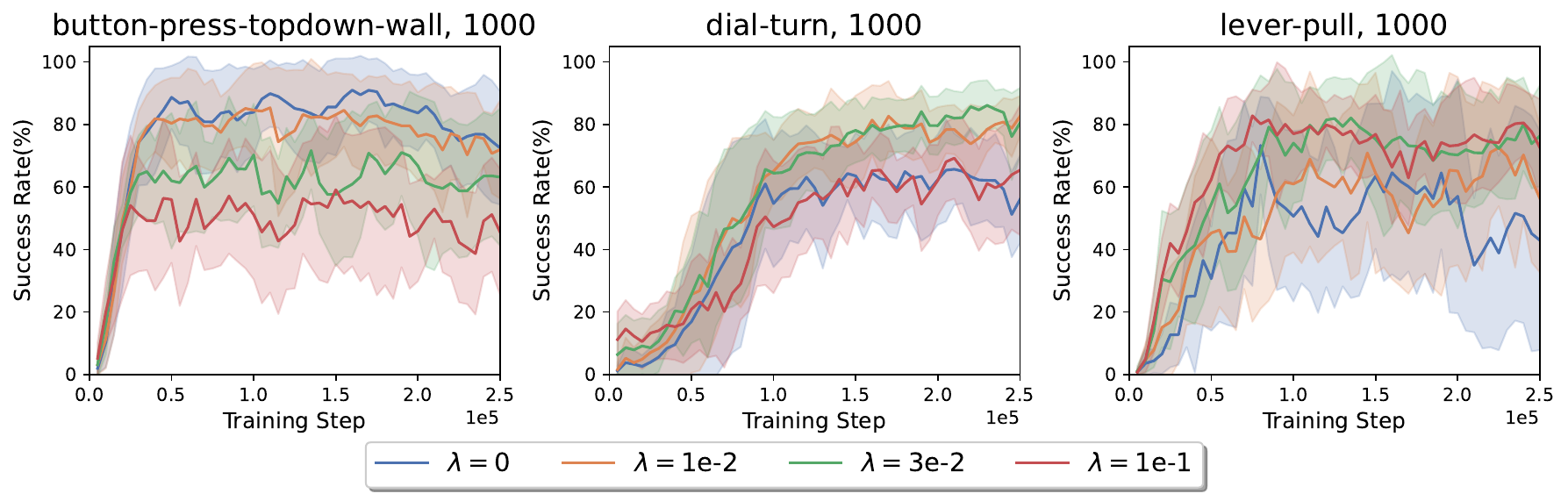}
    \caption{Effect of the conservatism regularizer $\lambda$.}
    \label{fig:hparameter}
\end{figure}

\begin{figure}[t]
    \centering
    \includegraphics[width=\linewidth]{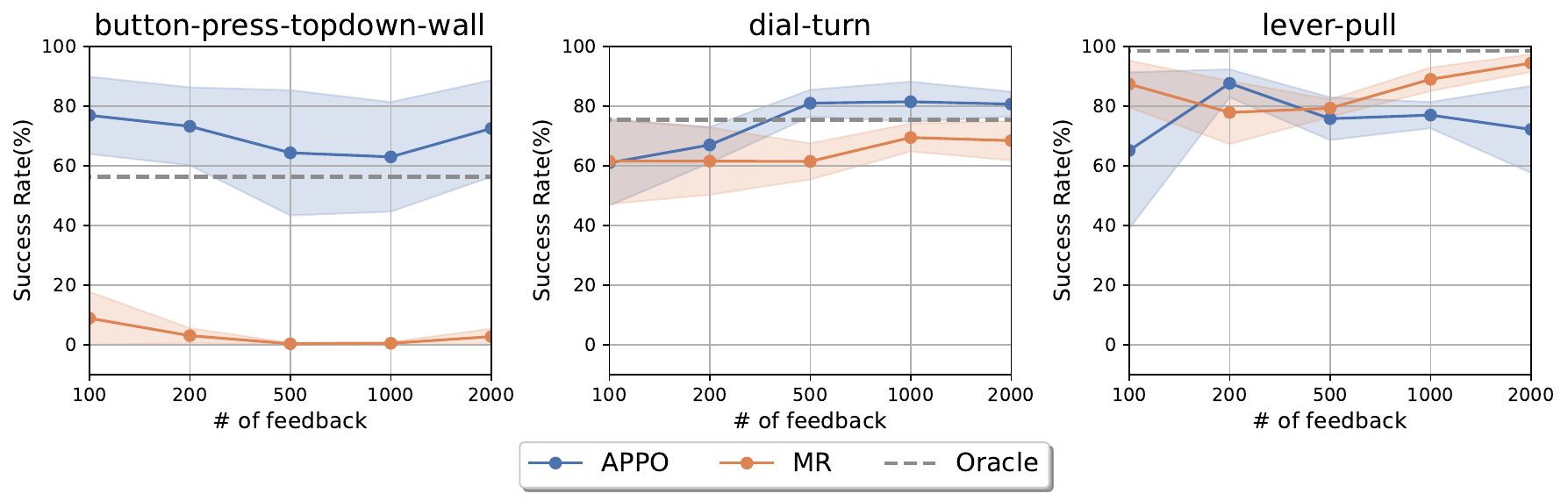}
    \caption{Success rates of \texttt{APPO} and MR, with varying number of preference feedback samples.}
    \label{fig:data size}
\end{figure}

\subsection{Evaluation Results}
Table~\ref{tab:main experiment} shows the performance of algorithms on the Meta-World control tasks. \texttt{APPO} outperforms or shows comparable performance in almost every dataset.
Notably, \texttt{APPO} outperforms the policy trained with ground truth rewards in the dial-turn and button-press-topdown-wall datasets.
We also observe that MR is a strong baseline, as reported in previous works~\citep{hejna2024inverse,choi2024listwise}.
These results suggest that \texttt{APPO} performs comparably to state-of-the-art baselines, even with a provable statistical guarantee.

\textbf{Effect of Conservatism Regularizer.}
We investigate the effect of conservatism regularizer $\lambda$, the coefficient to balance the adversarial loss $\mathcal{L}^{\text{adv}}_{\phi}$ and the trajectory-pair $\ell_1$ loss $\mathcal{E}_{\phi}$.
In Figure~\ref{fig:hparameter}, \texttt{APPO} learns successfully with a wide range of $\lambda$, but a properly tuned $\lambda$ improves performance and stability.
We note that \texttt{APPO} has only one algorithmic hyperparameter $\lambda$, whereas IQL-based algorithms (MR, PT, IPL) require at least two (expectile parameter and temperature).
DPPO, on the other hand, has two hyperparameters (conservatism regularizer and smoothness regularizer).

\textbf{Effect of Preference Dataset Size.}
In PbRL, learning from small preference datasets is desired for cost-efficient learning. We evaluate the effect of preference dataset size on \texttt{APPO}'s performance, varying the number of feedback samples from $100$ to $2000$.
Figure~\ref{fig:data size} shows that \texttt{APPO} is robust to preference dataset size, displaying variance comparable to MR, a strong baseline, as shown in Table~\ref{tab:main experiment}. Note that \texttt{APPO} outperforms a policy trained with ground truth rewards, using only $100$ preference feedback samples.

\section*{Reproducibility}
We describe the experimental details in Section~\ref{sec:experiments} and Section~\ref{sec:experimental details}, including training protocol and neural network architecture. Our code is available at \url{https://github.com/oh-lab/APPO.git}. As explained in Section~\ref{sec:experiments}, we use the Meta-World \texttt{medium-replay} dataset from \citet{choi2024listwise}, which is available in their official repository along with download instructions.

\section*{Acknowledgements}
This work was supported by the National Research Foundation of Korea~(NRF) grant funded by the Korea government~(MSIT) (No.  RS-2022-NR071853 and RS-2023-00222663) and by AI-Bio Research Grant through Seoul National University.

\bibliography{ref}
\bibliographystyle{iclr2025_conference}

\newpage
\appendix

\section{Additional Related Work}

\textbf{Empirical PbRL.}
Incorporating preference feedback into reinforcement learning has been explored through various approaches. One standard method involves training a reward model from preferences, which is then used to train a standard RL algorithm~\citep{christiano2017deep,ibarz2018reward}.
A variety of techniques have emerged in this area, including unsupervised pre-training~\citep{lee2021pebble}, exploration driven by uncertainty~\citep{liang2022reward}, data augmentation~\citep{park2022surf}, and meta-learning~\citep{hejna2023few}, to list a few. Another prominent line of research focuses on preference learning via active query methods~\citep{shin2023benchmarks,hwang2024sequential,choi2024listwise}, which have shown strong empirical performance.

Beyond the conventional Markov reward model, some studies have proposed alternative reward structures. For example, \citet{kim2023preference} employed transformer architectures for reward modeling, while \citet{liu2022meta} and \citet{hejna2024inverse} explored learning action-value functions rather than directly modeling rewards. Several approaches bypass explicit reward models altogether, instead optimizing policies directly~\citep{an2023direct,kang2023beyond,hejna2024contrastive}.

\section{Details on the Policy Evaluation Subroutine} \label{sec:PE oracle}

We present a simple policy evaluation subroutine in Algorithm~\ref{alg:PE oracle}. It requires online rollouts and access to the reference policy. The idea of policy evaluation via online rollouts is adopted from \citet{chang2024dataset}, while the analysis follows standard methods.

\begin{algorithm}[!h]
    \caption{\texttt{PE}: Monte Carlo Policy Evaluation} \label{alg:PE oracle}
    \begin{algorithmic}[1]
        \State \textbf{Input:} Reference policy $\piref$, Current policy $\pi^t$, Estimated reward $\rhat$, Number of rollout $K$
        \For{$h \in [H]$}
            \State Collect $K$ i.i.d. trajectories $\{(s^k_1,a^k_1,\dots,s^k_H,a^k_H)\}^K_{k=1}$ 
            \State where $a^k_j \sim \piref_j(\cdot\mid s^k_j)$ for $j < h$, $a^k_h \sim \frac{1}{2}(\piref_h + \pi^t_h)(\cdot\mid s^k_h)$, and $a^k_j \sim \pi^t_j(\cdot\mid s^k_j)$ for $j > h$
            \State Compute $q^k_h = \sum^H_{j = h} \rhat(s^k_j, a^k_j)$, then set $\mathcal{D}^t_h = \{(s^k_h, a^k_h, q^k_h)\}^K_{k=1}$
            \State Least square value function estimation $\Qbar{t}{h} = \argmin_{f \in \mathcal{F}} \frac{1}{K} \sum_{(s,a,q)\in\mathcal{D}^t_h} \left( f(s,a) -  q \right)^2$
        \EndFor
        \State Return $\{\Qbar{t}{h}\}^H_{h=1}$
    \end{algorithmic}
\end{algorithm}

We have the following guarantee.
\begin{lemma} \label{lemma:q value error}
    With probability at least $1-\delta$, Algorithm~\ref{alg:PE oracle} guarantees that, for every $(t,h)\in[T]\times[H]$,
    \begin{align*}
        \mathbb{E}_{s\sim d^{\piref}_{h}, a\sim \frac{1}{2}(\piref^h+\pi^t_h)} \left[ \left(\Qbar{t}{h}(s,a) - \Q{\pi^t}{h, r^t}(s,a)  \right)^2 \right] \leq \frac{c_3 R^2 \log(TH|\mathcal{F}|/\delta)}{K_2} =: \epsilon_{\text{PE}}^2
    \end{align*}
    where $c_3$ is an absolute constant.
\end{lemma}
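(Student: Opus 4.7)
The plan is to apply a standard realizable least-squares regression bound for each fixed pair $(t,h) \in [T] \times [H]$ and then take a union bound. Fix arbitrary $t$ and $h$ and condition on the filtration up through the start of iteration $t$, so that the policy $\pi^t$ and the reward $r^t$ passed to \texttt{PE} are deterministic. Under this conditioning, the $K_2$ rollouts drawn in Line 3 of Algorithm~\ref{alg:PE oracle} are i.i.d., and by construction the law of $(s^k_h, a^k_h)$ is precisely $\mu_h := d^{\piref}_h \otimes \tfrac{1}{2}(\piref_h + \pi^t_h)$, which exactly matches the sampling distribution appearing in the statement.

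Next I would verify the two ingredients needed for a regression bound. First, \emph{unbiasedness of the targets}: since the post-step-$h$ actions follow $\pi^t_j(\cdot \mid s^k_j)$ and the intermediate transitions come from $\pstar$, the tower property gives
\begin{align*}
\mathbb{E}[q^k_h \mid s^k_h, a^k_h] = \mathbb{E}\!\left[\sum_{j=h}^{H} r^t(s^k_j, a^k_j) \,\Big|\, s^k_h, a^k_h\right] = \Q{\pi^t}{h, r^t}(s^k_h, a^k_h),
\end{align*}
and Assumption~\ref{assumption:reward function class} ensures that the residual $q^k_h - \Q{\pi^t}{h, r^t}(s^k_h, a^k_h)$ lies in $[-R, R]$. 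Second, \emph{realizability}: Assumption~\ref{assumption:value function class} gives $\Q{\pi^t}{h, r^t} \in \mathcal{F}$, so the population risk minimizer of the least-squares objective coincides with the target Q-function itself.

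With these in hand I would invoke the standard generalization bound for realizable, bounded least-squares regression over a finite hypothesis class (a uniform-convergence / Bernstein argument over $\mathcal{F}$): with probability at least $1-\delta'$ over the $K_2$ rollouts at iteration $t$,
\begin{align*}
\mathbb{E}_{(s,a)\sim \mu_h}\!\left[\bigl(\Qbar{t}{h}(s,a) - \Q{\pi^t}{h, r^t}(s,a)\bigr)^{2}\right] \leq \frac{c\, R^2 \log(|\mathcal{F}|/\delta')}{K_2}
\end{align*}
for an absolute constant $c$. Setting $\delta' = \delta/(TH)$ and union-bounding over all $(t,h) \in [T]\times[H]$ absorbs the factor $TH$ into the logarithm and delivers the claimed bound with $\epsilon_{\text{PE}}^2 = c_3 R^2 \log(TH|\mathcal{F}|/\delta)/K_2$.

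The only subtlety is the adaptivity of $\pi^t$ across iterations, since the policy at iteration $t$ is a random quantity depending on all prior rollouts and the MLE step. This is handled cleanly by the conditioning argument above: because the $K_2$ rollouts at iteration $t$ are generated afresh and, conditional on the history (which fixes $\pi^t$ and $r^t$), are i.i.d.\ from the desired distribution, the standard regression bound applies conditionally, and adaptivity is absorbed into the $\log T$ factor produced by the union bound. Beyond this, the proof is routine and uses only boundedness of rewards (Assumption~\ref{assumption:reward function class}) and realizability of $\mathcal{F}$ (Assumption~\ref{assumption:value function class}).
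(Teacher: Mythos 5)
Your proposal is correct and follows essentially the same route as the paper: the paper likewise applies its realizable least-squares regression lemma (Lemma~\ref{lemma:least square}) with $B=R$ and $K=K_2$ for each fixed $(t,h)$ and then union-bounds over $[T]\times[H]$. Your write-up is simply more explicit about verifying the lemma's hypotheses (unbiasedness of the Monte Carlo targets, realizability of $\Q{\pi^t}{h,r^t}$ in $\mathcal{F}$, and the conditioning that handles the adaptivity of $\pi^t$), all of which the paper leaves implicit.
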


\begin{proof}
    Since $\norm{\Q{\pi}{h, r}}{\infty} \leq R$ for any policy $\pi$ and $r\in\Rset^H$, Lemma~\ref{lemma:least square} with $B = R$ and $K=K_2$ leads to
    \begin{align*}
        \mathbb{E}_{s\sim d^{\piref}_{h}, a\sim \frac{1}{2}(\piref^h+\pi^t_h)} \left[ \left(\Qbar{t}{h}(s,a) - \Q{\pi^t}{h, r^t}(s,a) \right)^2 \right] \leq \frac{c_3 R^2 \log(|\mathcal{F}|/\delta)}{K_2}
    \end{align*}
    for any fixed $(t,h)\in [T]\times [H]$. The union bound over all $(t,h)\in [T]\times [H]$ concludes the proof.
\end{proof}

\newpage

\section{Theoretical Analysis of \texttt{APPO-rollout}} \label{sec:proof of APPO rollout}

In this section, we provide theoretical analyses of \texttt{APPO-rollout}, a na\"ive algorithm for solving the optimization problem \eqref{eqn:Stackelberg game}. The ideas presented in this section are relevant to the proof of Theorem~\ref{thm:upper bound}, and the results are valuable for comparison with related works.

Before stating the theorem, we define step-wise concentrability, which is always bounded by $\Ctr$. 
\begin{definition}[Step-wise concentrability]
    $\Cstep = \max_{h\in[H]} \sup_{(s,a)\in\Sspace\times\Aspace} \frac{d^{\pistar}_h(s,a)}{d^{\piref}_h(s,a)}$
\end{definition}

\begin{lemma} \label{lem:stepwise concentrability}
    It always holds that $\Cstep \leq \Ctr$.
\end{lemma}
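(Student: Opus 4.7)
The plan is to express the step-wise state--action visitation at step $h$ as a marginal of the full trajectory distribution, and then transfer the pointwise trajectory ratio bound from Assumption~\ref{assumption:concentrability} to the marginals. This is a fundamental fact: a pointwise bound on densities descends to any marginal.

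First, I would write
\begin{align*}
    d^{\pi}_h(s,a) = \int_{\{\tau\,:\, s_h = s,\, a_h = a\}} d^{\pi}(\tau)\, d\tau,
\end{align*}
which follows because $d^{\pi}_h(s,a) = \mathbb{P}_{\pi}(s_h = s, a_h = a)$ is obtained from the joint trajectory distribution $d^{\pi}(\tau)$ by marginalizing out all coordinates other than $(s_h, a_h)$ and restricting to the event $\{s_h = s, a_h = a\}$. This identity holds for both $\pi = \pistar$ and $\pi = \piref$ without any additional structural assumption.

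Next, by Assumption~\ref{assumption:concentrability}, the pointwise ratio satisfies $d^{\pistar}(\tau) \le \Ctr \cdot d^{\piref}(\tau)$ for every trajectory $\tau$ (trivially so where $d^{\piref}(\tau) = 0$, since then $d^{\pistar}(\tau) = 0$ as well by the finiteness of $\Ctr$). Integrating this inequality over $\{\tau : s_h = s, a_h = a\}$ yields
\begin{align*}
    d^{\pistar}_h(s,a) \le \Ctr \cdot d^{\piref}_h(s,a),
\end{align*}
so $\frac{d^{\pistar}_h(s,a)}{d^{\piref}_h(s,a)} \le \Ctr$ whenever the denominator is positive. Taking the supremum over $(s,a) \in \Sspace \times \Aspace$ and the maximum over $h \in [H]$ gives $\Cstep \le \Ctr$, which completes the argument.

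There is no serious obstacle here; the entire content of the lemma is the observation that a pointwise likelihood-ratio bound is preserved under marginalization. The only care needed is handling the support of $d^{\piref}$ correctly so that the ratio bound is meaningful on the marginal, but this is automatic from the finiteness of $\Ctr$.
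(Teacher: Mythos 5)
Your proof is correct and follows essentially the same route as the paper's: both express $d^{\pi}_h(s,a)$ as the integral of the trajectory distribution $d^{\pi}(\tau)$ over $\{\tau : s_h = s,\, a_h = a\}$ and then transfer the pointwise ratio bound from Assumption~\ref{assumption:concentrability} to the marginals. The only cosmetic difference is that you integrate the pointwise inequality $d^{\pistar}(\tau) \le \Ctr\, d^{\piref}(\tau)$ while the paper bounds the ratio of integrals directly by the supremum of the pointwise ratio; these are the same argument.
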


\begin{proof}
    For a fixed pair $(s,a)$, consider the set of trajectories $\mathcal{T}(s,a) := \{\tau = (s_1,a_1,\dots,s_H,a_H) : s_h=s, a_h=a \}$. Then we have that
    \begin{align*}
        d^{\pi}_h(s,a) = \int_{\mathcal{T}(s,a)} d^{\pi}(\tau) d\tau .
    \end{align*}
    for any fixed policy $\pi$. Therefore, for every $(s,a)\in\Sspace\times\Aspace$, we have that
    \begin{align*}
        \frac{d^{\pistar}_h(s,a)}{d^{\piref}_h(s,a)} = \frac{\int_{\mathcal{T}(s,a)} d^{\pistar}(\tau) d\tau}{\int_{\mathcal{T}(s,a)} d^{\piref}(\tau) d\tau} \leq \sup_{\tau} \frac{d^{\pistar}(\tau)}{d^{\piref}(\tau)}  = \Ctr.
    \end{align*}
    Taking the supremum on both sides concludes the proof.
\end{proof}

\begin{theorem} \label{thm:upper bound, rollout}
    Suppose Assumptions~\ref{assumption:reward function class} and \ref{assumption:concentrability} hold. With probability at least $1-\delta$, Algorithm~\ref{alg:APPO rollout} with $\lambda = \Theta(\Ctr), \lambda>\Ctr, \eta = \sqrt{\frac{2\log|\Aspace|}{R^2 T}}$ achieves
    \begin{align*}
        &\V{\pistar}{1, \rstar} - \V{\hat{\pi}}{1, \rstar} \\
        &\leq \mathcal{O}\left( \sqrt{\log\frac{|\Rset|}{\delta}}\left(\frac{\Ctr\kappa\sqrt{H}}{\sqrt{M}} + \frac{R}{\sqrt{K_1}} + \frac{R}{\sqrt{N}}\right) + RH\sqrt{\frac{\log|\Aspace|}{T}} + RH\sqrt{\frac{\Cstep}{K_2}\log\frac{TH|\mathcal{F}|}{\delta}} \right) .
    \end{align*}
    Setting $T = \Theta\left(\frac{R^2H^2 \log |\Aspace| }{\epsilon^2} \right)$, $N = K_1 = \Theta\left( \frac{R^2 \log(|\Rset|/\delta)}{\epsilon^2}\right)$, $M = \Theta\left( \frac{\Ctr^2 \kappa^2 H \log(|\Rset|/\delta)}{\epsilon^2} \right)$, and $K_2 = \Theta\left(\frac{R^2H^2 \Cstep \log(TH|\mathcal{F}|/\delta) }{\epsilon^2} \right)$,
    Algorithm~\ref{alg:APPO} achieves $\epsilon$-optimal policy, i.e. $\V{\pistar}{1, \rstar} - \V{\bar{\pi}}{1, \rstar} \leq \epsilon$.
\end{theorem}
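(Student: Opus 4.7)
The plan is to re-use the three-term sub-optimality decomposition already displayed for \texttt{APPO} in the proof sketch of Theorem~\ref{thm:upper bound}, but now applied with $r^t$ being the direct reward-space minimizer chosen in Line~5 of Algorithm~\ref{alg:APPO rollout} rather than the reparameterized quantity $r^{\pi^t}_{\pstar,f^t}$. Concretely, I would start from
\begin{align*}
\V{\pistar}{1,\rstar}-\V{\pi^t}{1,\rstar}
  &= \underbrace{\V{\pistar}{1,\rstar-\rhat}-\V{\piref}{1,\rstar-\rhat}}_{\text{(I)}}
  + \underbrace{\bigl(\V{\pistar}{1,\rhat-r^t}-\V{\piref}{1,\rhat-r^t}\bigr)+\bigl(\V{\pi^t}{1,r^t}-\V{\piref}{1,r^t}\bigr)-\bigl(\V{\pi^t}{1,\rstar}-\V{\piref}{1,\rstar}\bigr)}_{\text{(II)}} \\
  &\quad+ \underbrace{\V{\pistar}{1,r^t}-\V{\pi^t}{1,r^t}}_{\text{(III)}},
\end{align*}
average over $t\in[T]$, and bound each term separately; the returned policy $\bar\pi$ then inherits the bound by convexity of the value (linearity in the policy after averaging) up to a Jensen-type step handled as in the \texttt{APPO} analysis.

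\textbf{Term (I).} This is a reward-only MLE term. I would plug in the existing MLE guarantee for $\rhat$ together with Assumption~\ref{assumption:concentrability}: the gap $\V{\pistar}{1,\rstar-\rhat}-\V{\piref}{1,\rstar-\rhat}=\E_{\tau\sim\pistar}[(\rstar-\rhat)(\tau)]-\E_{\tau\sim\piref}[(\rstar-\rhat)(\tau)]$ is upper-bounded by $2\Ctr\,\E_{\tau\sim\piref}|(\rstar-\rhat)(\tau)|$, which the standard MLE-to-$\ell_1$ conversion for the preference model (using the link-function modulus $\kappa$) controls by $\mathcal{O}\!\bigl(\Ctr\kappa\sqrt{H\log(|\Rset|/\delta)/M}\bigr)$.

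\textbf{Term (II).} This is where conservatism is extracted. Using Assumption~\ref{assumption:concentrability} and $\lambda>\Ctr$,
\begin{align*}
\V{\pistar}{1,\rhat-r^t}-\V{\piref}{1,\rhat-r^t}
  &\le \Ctr\,\E_{\tau^0\sim\pistar,\tau^1\sim\piref}\bigl|\{r^t(\tau^0)-\rhat(\tau^0)\}-\{r^t(\tau^1)-\rhat(\tau^1)\}\bigr|
   \;\le\; \lambda\,\mathcal{E}(r^t;\rhat).
\end{align*}
On the other hand, Line~5 of Algorithm~\ref{alg:APPO rollout} is exactly the empirical minimizer of $\widehat F(r):=\E_{\tau\sim\dataRollout^t}[r(\tau)]-\E_{\tau\sim\dataTraj}[r(\tau)]+\lambda\emplossDev_{\dataTraj}(r;\rhat)$, so $\widehat F(r^t)\le\widehat F(\rstar)$. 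A Hoeffding/uniform-convergence step over $\Rset^H$ (with rewards bounded by $R$) turns $\widehat F$ into its population version at cost $\mathcal{O}(R\sqrt{\log(|\Rset|/\delta)/K_1}+R\sqrt{\log(|\Rset|/\delta)/N})$ uniformly in $r$, and rearranging gives
\begin{align*}
\bigl(\V{\pi^t}{1,r^t}-\V{\piref}{1,r^t}\bigr)+\lambda\mathcal{E}(r^t;\rhat)
  \le \bigl(\V{\pi^t}{1,\rstar}-\V{\piref}{1,\rstar}\bigr)+\lambda\mathcal{E}(\rstar;\rhat)+\text{(sampling error)}.
\end{align*}
Since $\mathcal{E}(\rstar;\rhat)\le\mathcal{O}(\kappa\sqrt{H\log(|\Rset|/\delta)/M})$ by the same MLE lemma applied to $\piref$-data, combining with the display above yields (II)$\le \mathcal{O}\!\bigl(\sqrt{\log(|\Rset|/\delta)}(\Ctr\kappa\sqrt{H/M}+R/\sqrt{K_1}+R/\sqrt{N})\bigr)$.

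\textbf{Term (III) and assembly.} Summing (III) over $t$ is a mirror-descent/TRPO regret against $\pistar$ with per-step action-value $\bar Q^t$; the standard NPG/TRPO analysis with step size $\eta=\sqrt{2\log|\Aspace|/(R^2T)}$ yields $\sum_t[\V{\pistar}{1,r^t}-\V{\pi^t}{1,r^t}]\le\mathcal{O}(RH\sqrt{T\log|\Aspace|})$ \emph{plus} an evaluation-error term of the form $\sum_t\sum_h \E_{(s,a)\sim d^{\pistar}_h}|\bar Q^t_h-\Q{\pi^t}{h,r^t}|$. The hard part will be converting the $L^2$ bound from Lemma~\ref{lemma:q value error}, which is under $d^{\piref}_h\otimes\tfrac12(\piref_h+\pi^t_h)$, into an $L^1$ bound under $d^{\pistar}_h\otimes\pistar_h$: I would bound the density ratio $d^{\pistar}_h(s,a)/[d^{\piref}_h(s)\cdot\tfrac12(\piref_h+\pi^t_h)(a\mid s)]$ by $2\Cstep$ (since $\pistar_h(a\mid s)/\tfrac12(\piref_h+\pi^t_h)(a\mid s)$ can be absorbed together with $d^{\pistar}_h/d^{\piref}_h$ into $2\Cstep$), apply Cauchy--Schwarz to go from $L^2$ to $L^1$, and then invoke Lemma~\ref{lemma:q value error}, giving an additional $\mathcal{O}(HT\sqrt{\Cstep\cdot R^2\log(TH|\mathcal{F}|)/K_2})$ term per sum. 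Dividing by $T$ produces the final $RH\sqrt{\Cstep\log(TH|\mathcal{F}|/\delta)/K_2}$ piece. Summing the three bounds, applying a union bound over the $T$ iterations (already absorbed in Lemma~\ref{lemma:q value error}), and choosing $M,N,K_1,K_2,T$ as specified matches each source of error to $\epsilon$, yielding the claimed sample complexity. The principal technical obstacles are (a) the change-of-measure step for the policy-evaluation error, and (b) the simultaneous uniform concentration of the rollout and offline approximations of the reward optimization objective; the remaining steps are routine MLE and mirror-descent arguments that parallel the proof of Theorem~\ref{thm:upper bound}.
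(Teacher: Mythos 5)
Your overall architecture is the same as the paper's: the identical three-term decomposition into MLE error, optimization error, and policy-update regret; the same extraction of conservatism in (II) via $\Ctr\,\mathcal{E}(r^t;\rhat)\le\lambda\,\mathcal{E}(r^t;\rhat)$ followed by comparing the empirical objective at $r^t$ and $\rstar$ and uniform Hoeffding concentration over $\Rset^H$ (this is exactly Lemmas~\ref{lemma:optimization error, rollout} and \ref{lemma:closeness of loss, rollout}); and the same mirror-descent regret plus change-of-measure treatment of the Monte Carlo evaluation error in (III), including the factor-$2\Cstep$ bound on the density ratio against $d^{\piref}_h\otimes\tfrac12(\piref_h+\pi^t_h)$ and Cauchy--Schwarz to pass from the $L^2$ guarantee of Lemma~\ref{lemma:q value error} to $L^1$. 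Terms (II) and (III) of your plan would go through essentially verbatim.

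There is, however, a genuine flaw in your treatment of term (I). You split $\V{\pistar}{1,\rstar-\rhat}-\V{\piref}{1,\rstar-\rhat}$ into two single-trajectory expectations and bound the result by $2\Ctr\,\E_{\tau\sim\piref}\bigl|(\rstar-\rhat)(\tau)\bigr|$, claiming the MLE guarantee controls this quantity. It does not: Lemma~\ref{lemma:reward MLE error} only controls $\E_{\tau^0,\tau^1\sim\piref}\bigl[|(\rhat(\tau^0)-\rhat(\tau^1))-(\rstar(\tau^0)-\rstar(\tau^1))|^2\bigr]$, because preference feedback identifies returns only up to an additive constant. If $\rhat$ shifts every trajectory return by a constant $c$, it attains the same likelihood as $\rstar$, yet $\E_{\tau\sim\piref}|(\rstar-\rhat)(\tau)|=|c|$, which is only bounded by $R$, not by $\mathcal{O}(\kappa\sqrt{H\log(|\Rset|/\delta)/M})$. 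The reason term (I) is nonetheless small is that this constant cancels between the $\pistar$- and $\piref$-expectations; your intermediate step destroys exactly that cancellation. The paper keeps the pair intact, writing (I) as $\E_{\tau^0\sim\pistar,\tau^1\sim\piref}\bigl[(\rstar-\rhat)(\tau^0)-(\rstar-\rhat)(\tau^1)\bigr]$, applies Jensen to move to the squared pairwise deviation, and importance-samples only the $\tau^0$-marginal using Assumption~\ref{assumption:concentrability}, which yields $\sqrt{\Ctr}\,\epsilon_r(\delta/4)$. The fix is mechanical and you already use the correct pairwise structure in term (II), but as written your step (I) cannot be closed.
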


\textbf{Discussion on Theorem~\ref{thm:upper bound, rollout}.}
We compare this bound with PbRL algorithms that assume a known transition model or allow online rollouts.
In comparison to FREEHAND~\citep{zhan2024provable}, $\texttt{APPO-rollout}$ achieves a nearly identical rate for labeled data, but unlike FREEHAND, it requires additional unlabeled trajectories.
This represents a trade-off between statistical efficiency and computational complexity, as FREEHAND relies on solving a nearly intractable nested optimization problem.
Another comparable algorithm is DR-PO~\citep{chang2024dataset}, which establishes a sample complexity of $\Theta\left(\frac{(\Ctr+C_{\text{SFT}})\kappa^2 \log(|\Rset|/\delta)}{\epsilon^2}\right)$ for labeled data.
Unlike \texttt{APPO-rollout}, DR-PO assumes homogeneous rewards, which removes dependence on $H$ from the bound.
While their bound is tighter in $\Ctr$, it comes at the cost of dependence on an additional factor, $C_{\text{SFT}} = \sup_{\pi\in D} \sup_{\tau} \frac{d^{\pi}(\tau)}{d^{\piref}(\tau)}$, where $D$ is a set of policies close to $\piref$ in terms of KL divergence. This additional term arises because DR-PO does not explicitly ensure conservatism.

For simplicity, we introduce some notation regarding optimization objectives in Algorithm~\ref{alg:APPO rollout}. For $r, \tilde{r}\in\Rset^H$, we define
\begin{align*}
    \emplossOpt^t(r; \tilde{r}) &:=  \mathbb{E}_{\tau\sim\dataRollout}\left[ r(\tau) \right] - \mathbb{E}_{\tau\sim\dataTraj}\left[ r(\tau) \right] + \lambda \hat{\mathcal{E}}_{\dataTraj}(r; \tilde{r})
\end{align*}
and its population version as
\begin{align*}
    \lossOpt^t(r; \tilde{r}) &:= \mathbb{E}_{\tau\sim \pi^t} \left[ r(\tau) \right] - \mathbb{E}_{\tau\sim \piref} \left[ r(\tau) \right] + \lambda \mathcal{E}(r; \tilde{r}).
\end{align*}

\subsection{Optimization Error}

In this section, we prove that the (finite-sample) optimization objective $\emplossOpt^t(r; \rhat)$ is close to its population version, $\lossOpt^t(r; \tilde{r})$. The result ensures that $r^t$ is a good approximation of the solution to the optimization program with infinite samples, i.e.
\begin{align*}
    r^t \approx \argmin_{r\in\Rset^H} \lossOpt^t(r; \rhat).
\end{align*}

\begin{lemma} \label{lemma:optimization error, rollout}
    With probability at least $1-\delta/2$, for all $t\in [T]$, we have
    \begin{align*}
        \lossOpt(r^t; \rhat) \leq \lossOpt(\rstar; \rhat) + 2\tilde{\epsilon}_{approx}
    \end{align*}
    where $\tilde{\epsilon}_{approx}$ is defined in Lemma~\ref{lemma:closeness of loss, rollout}.
\end{lemma}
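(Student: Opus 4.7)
The plan is a standard ``empirical-risk-minimization meets uniform-convergence'' argument, specialized to the objective $\emplossOpt^t$ used in Line 5 of Algorithm~\ref{alg:APPO rollout}. First, by construction $r^t \in \argmin_{r \in \Rset^H} \emplossOpt^t(r;\rhat)$, and Assumption~\ref{assumption:reward function class} gives $\rstar \in \Rset^H$, so
$$\emplossOpt^t(r^t;\rhat) \;\le\; \emplossOpt^t(\rstar;\rhat).$$
Second, I would invoke Lemma~\ref{lemma:closeness of loss, rollout} to obtain, with probability at least $1-\delta/2$,
$$\sup_{r \in \Rset^H,\, t \in [T]} \bigl| \emplossOpt^t(r;\rhat) - \lossOpt^t(r;\rhat) \bigr| \;\le\; \tilde{\epsilon}_{approx}.$$
Chaining these two displays yields, for every $t \in [T]$,
$$\lossOpt^t(r^t;\rhat) \;\le\; \emplossOpt^t(r^t;\rhat) + \tilde{\epsilon}_{approx} \;\le\; \emplossOpt^t(\rstar;\rhat) + \tilde{\epsilon}_{approx} \;\le\; \lossOpt^t(\rstar;\rhat) + 2\tilde{\epsilon}_{approx},$$
which is the claimed inequality.

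To establish the uniform concentration invoked above, I would decompose $\emplossOpt^t - \lossOpt^t$ into three Monte Carlo gaps, each of bounded-mean form: (i) $\mathbb{E}_{\tau \sim \dataRollout^t}[r(\tau)]$ versus $\mathbb{E}_{\tau \sim \pi^t}[r(\tau)]$, controlled by the $K_1$ on-policy rollouts; (ii) $\mathbb{E}_{\tau \sim \dataTraj}[r(\tau)]$ versus $\mathbb{E}_{\tau \sim \piref}[r(\tau)]$, controlled by the $N$ offline trajectories; and (iii) $\hat{\mathcal{E}}_{\dataTraj}(r;\rhat)$ versus $\mathcal{E}(r;\rhat)$, controlled by the offline trajectory pairs. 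Each integrand is bounded by $R$ or $2R$ thanks to Assumption~\ref{assumption:reward function class}, so Hoeffding's inequality combined with a union bound over the finite class $\Rset^H$ and over the $T$ rounds controls every term at a rate that should match the form of $\tilde{\epsilon}_{approx}$ appearing in Theorem~\ref{thm:upper bound, rollout}.

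The main obstacle is the on-policy term in (i): $\dataRollout^t$ is collected by $\pi^t$, which itself depends adversarially on the past reward estimates $r^{1:t-1}$ (and hence on the past rollouts), so the trajectories are neither independent across rounds nor independent of the policy used at round $t$. The standard remedy is to condition on the filtration $\mathcal{F}_{t-1}$ generated by rounds $1,\dots,t-1$; conditionally on $\mathcal{F}_{t-1}$, the policy $\pi^t$ is deterministic and the $K_1$ trajectories in $\dataRollout^t$ are i.i.d.\ draws from $d^{\pi^t}$, so Hoeffding applies per round for each fixed $r \in \Rset^H$. A union bound over $r \in \Rset^H$ and over $t \in [T]$, together with analogous (and easier, since non-adaptive) concentration for the offline terms in (ii) and (iii), delivers the desired uniform bound and closes the argument.
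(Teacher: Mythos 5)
Your proposal is correct and follows essentially the same route as the paper: the empirical optimality of $r^t$ over $\Rset^H \ni \rstar$ combined with the uniform deviation bound of Lemma~\ref{lemma:closeness of loss, rollout} is exactly the paper's three-term decomposition, just written as a chain of inequalities. Your explicit conditioning on the filtration generated by rounds $1,\dots,t-1$ to handle the adaptivity of $\pi^t$, together with the union bound over $t\in[T]$, is in fact slightly more careful than the paper's own proof of the concentration step, which only union-bounds over $r\in\Rset^H$.
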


\begin{proof}
    We have the following decomposition:
    \begin{align*}
        &\lossOpt^t(r^t ; \rhat) - \lossOpt^t(\rstar; \rhat) \\
        &= \underbrace{ \lossOpt^t(r^t ; \pi^t) - \emplossOpt^t(r^t; \rhat)}_{\text{(I)}}
        + \underbrace{\emplossOpt^t(r^t; \rhat) - \emplossOpt^t(\rstar; \rhat) }_{\text{(II)}}
        + \underbrace{ \emplossOpt^t(\rstar; \rhat) - \lossOpt^t(\rstar; \rhat) }_{\text{(III)}}
    \end{align*}
    Conditioned on the event defined by Lemma~\ref{lemma:closeness of loss}, (I) and (III) are bounded by $\epsilon_{opt}$. Moreover, the optimality of $r^t$ implies (II)$\leq 0$.
\end{proof}

\begin{lemma} \label{lemma:closeness of loss, rollout}
    With probability at least $1-\delta/2$, for every $t\in [T]$ and $r\in \Rset^H$, it holds that
    \begin{align*}
        \left| \lossOpt^t(r; \rhat) - \emplossOpt^t(r; \rhat) \right| \leq
        R\sqrt{\frac{\log(6|\Rset|/\delta)}{2K_1}} + 2R\sqrt{\frac{2\log(6|\Rset|/\delta)}{N}} := \tilde{\epsilon}_{approx}
    \end{align*}
\end{lemma}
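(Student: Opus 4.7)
The plan is to apply the triangle inequality to split $|\lossOpt^t(r;\rhat) - \emplossOpt^t(r;\rhat)|$ into three empirical-versus-population deviations, control each by Hoeffding's inequality using the boundedness of $r \in [0,R]$, and then take a union bound over $r \in \Rset^H$ and $t \in [T]$. Specifically, I would write
\begin{align*}
    |\lossOpt^t(r;\rhat) - \emplossOpt^t(r;\rhat)|
    &\leq \left|\E_{\tau\sim\dataRollout^t}[r(\tau)] - \E_{\tau\sim\pi^t}[r(\tau)]\right|
    + \left|\E_{\tau\sim\dataTraj}[r(\tau)] - \E_{\tau\sim\piref}[r(\tau)]\right| \\
    &\quad + \lambda\left|\hat{\mathcal{E}}_{\dataTraj}(r;\rhat) - \mathcal{E}(r;\rhat)\right|.
\end{align*}

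First I would handle the rollout term. Conditional on $\pi^t$ (which is measurable with respect to the history before iteration $t$), the $K_1$ trajectories in $\dataRollout^t$ are i.i.d.\ from $\pi^t$ and $r(\tau) \in [0,R]$, so Hoeffding's inequality yields, for each fixed $(t,r)$, a deviation of at most $R\sqrt{\log(2/\delta')/(2K_1)}$ with failure probability $\delta'$. Next, for the offline-expectation term, the same Hoeffding bound applies directly because $\dataTraj$ is i.i.d.\ from $\piref$ and independent of $t$. For the preference-deviation term, the integrand $|\{r(\tau^0)-r(\tau^1)\} - \{\rhat(\tau^0)-\rhat(\tau^1)\}|$ is bounded in $[0,2R]$, giving a Hoeffding deviation of $2R\sqrt{\log(2/\delta')/(2N)}$ (the $\lambda$ factor is either carried along explicitly or absorbed into the leading constant, consistent with the paper's presentation).

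Finally I would take a union bound over $r \in \Rset^H$ and $t \in [T]$ with $\delta'$ chosen so that the three classes of bad events each contribute at most $\delta/6$, yielding the claimed $\tilde{\epsilon}_{approx}$ up to the paper's convention for denoting $|\Rset|$ (which implicitly subsumes the $H$ and $T$ factors). The main obstacle is purely a bookkeeping one: the policy $\pi^t$ (and hence the distribution of $\dataRollout^t$) is a random object depending on all past rollouts, so the Hoeffding step for the first term must be applied conditionally on the filtration up to iteration $t$ before union-bounding; the remaining terms are independent of $t$ and pose no such difficulty.
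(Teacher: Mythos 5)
Your proposal follows essentially the same route as the paper's proof: the same three-term triangle-inequality decomposition, Hoeffding's inequality on each term using the bounds $|r(\tau)|\leq R$ and $|(r-\rhat)(\tau^0)-(r-\rhat)(\tau^1)|\leq 2R$, and a union bound over the function class (your added care about conditioning on the filtration for the $\dataRollout^t$ term, and your note that the $\lambda$ factor and the union over $t$ are swept into the paper's constants, are refinements the paper itself glosses over). No substantive difference.
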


\begin{proof}
    
    Fix $r \in \Rset^H$, and note that
    \begin{align*}
        &\left| \lossOpt^t(r; \rhat) - \emplossOpt^t(r; \rhat) \right| \\
        &\leq \left| \mathbb{E}_{\tau\sim\dataRollout^t}[r(\tau)] - \mathbb{E}_{\tau\sim\pi^t}[r(\tau)] \right|
        + \left| \mathbb{E}_{\tau\sim\dataTraj}[r(\tau)] - \mathbb{E}_{\tau\sim\piref}[r(\tau)] \right| \\
        &\quad + \left| \mathbb{E}_{(\tau^0,\tau^1)\sim\dataTraj}\left[(r-\rhat)(\tau^0) -(r-\rhat)(\tau^1)\right] - \mathbb{E}_{(\tau^0,\tau^1)\sim\piref}\left[(r-\rhat)(\tau^0) -(r-\rhat)(\tau^1)\right] \right|.
    \end{align*}
    Since $|r(\tau)|\leq R$ and $|(r-\rhat)(\tau)|\leq R$ for any trajectory $\tau$, each term can be bounded by Hoeffding inequality. Specifically, each of these three events occurs with probability at least $1-\delta/6$:
    \begin{align*}
        &\left| \mathbb{E}_{\tau\sim\dataRollout^t}[r(\tau)] - \mathbb{E}_{\tau\sim\pi^t}[r(\tau)] \right| \leq R\sqrt{\frac{\log(6/\delta)}{2K_1}}, \\
        &\left| \mathbb{E}_{\tau\sim\dataTraj}[r(\tau)] - \mathbb{E}_{\tau\sim\piref}[r(\tau)] \right| \leq R\sqrt{\frac{\log(6/\delta)}{2N}}, \\
        &\left| \mathbb{E}_{(\tau^0,\tau^1)\sim\dataTraj}\left[(r-\rhat)(\tau^0) -(r-\rhat)(\tau^1)\right] - \mathbb{E}_{(\tau^0,\tau^1)\sim\piref}\left[(r-\rhat)(\tau^0) -(r-\rhat)(\tau^1)\right] \right|
        \leq 2R\sqrt{\frac{\log(6/\delta)}{2N}}.
    \end{align*}
    Taking union bound over these events and all $r\in \Rset^H$, with probability at least $1-\delta/2$, it holds that
    \begin{align*}
        \left| \lossOpt^t(r; \rhat) - \emplossOpt^t(r; \rhat) \right| &\leq R\sqrt{\frac{\log(6|\Rset|/\delta)}{2K_1}} + R\sqrt{\frac{\log(6|\Rset|/\delta)}{2N}} + 2R\sqrt{\frac{\log(6|\Rset|/\delta)}{2N}} \\
        &\leq R\sqrt{\frac{\log(6|\Rset|/\delta)}{2K_1}} + 2R\sqrt{\frac{2\log(6|\Rset|/\delta)}{N}}
    \end{align*}
    for every $r\in \Rset^H$.
\end{proof}

\subsection{Policy Update} \label{sec:rollout NPG analysis}

We present the guarantee regarding the policy update steps. The proofs in this section are based on the standard analysis of the natural policy gradient (also referred to as trust region policy optimization)~\citep{cai2020provably,chang2024dataset}.

\begin{lemma} \label{lemma:NPG regret, rollout}
    With probability at least $1-\delta/4$, it holds that
    \begin{align*}
        \frac{1}{T} \sum^T_{t=1} \left( \V{\pistar}{1, r^t}(s_1) - \V{\pi^t}{1, r^t}(s_1) \right) \leq RH\sqrt{\frac{\log|\Aspace|}{2T}} + 2H \epsilon_{\text{PE}} \sqrt{ 2 \Cstep } 
    \end{align*}
\end{lemma}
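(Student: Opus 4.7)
The plan is to use the performance-difference lemma to express the per-round sub-optimality as a sum of inner products against the true action-values $\Q{\pi^t}{h,r^t}$, and then split these into (i) an online-learning regret term in terms of the estimated values $\Qbar{t}{h}$ and (ii) an estimation-error term capturing $\Q{\pi^t}{h,r^t}-\Qbar{t}{h}$. Concretely, Lemma~\ref{lemma:performance difference lemma} gives
\[
\V{\pistar}{1,r^t}(s_1) - \V{\pi^t}{1,r^t}(s_1) = \sum_{h=1}^H \mathbb{E}_{s\sim d^{\pistar}_h}\bigl[\langle \Q{\pi^t}{h,r^t}(s,\cdot),\, \pistar_h(\cdot\mid s) - \pi^t_h(\cdot\mid s)\rangle\bigr],
\]
and adding and subtracting $\Qbar{t}{h}$ inside the inner product produces the two pieces above.

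For the regret term $\sum_{t,h}\mathbb{E}_{s\sim d^{\pistar}_h}[\langle \Qbar{t}{h}(s,\cdot), \pistar_h-\pi^t_h\rangle]$, I would run the standard entropy-regularized online mirror descent analysis state by state on the exponentiated-gradient update $\pi^{t+1}_h(a\mid s)\propto \pi^t_h(a\mid s)\exp(\eta \Qbar{t}{h}(s,a))$. Since Assumption~\ref{assumption:reward function class} yields $\Qbar{t}{h}\in[0,R]$, the per-state regret against $\pistar_h(\cdot\mid s)$ is at most $\log|\Aspace|/\eta + \eta T R^2/2$; plugging in $\eta = \sqrt{2\log|\Aspace|/(R^2T)}$, summing over $h$, and averaging over $t$ recovers the leading $RH\sqrt{\log|\Aspace|/(2T)}$ term (up to the paper's constant convention).

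For the estimation term, I would apply Cauchy--Schwarz summand-by-summand to pass from pointwise to $L^2$ error, and then change measure from $(d^{\pistar}_h,\pistar_h)$ and $(d^{\pistar}_h,\pi^t_h)$ to the joint distribution $\tilde d_h(s,a) := d^{\piref}_h(s)\cdot \tfrac12(\piref_h(a\mid s)+\pi^t_h(a\mid s))$ under which Lemma~\ref{lemma:q value error} is stated. The $\tfrac12$-weighted mixture bounds both Radon--Nikodym derivatives by $2\Cstep$: using $\tfrac12(\piref_h+\pi^t_h)\ge \tfrac12\piref_h$ gives $\tfrac{d^{\pistar}_h(s)\pistar_h(a\mid s)}{\tilde d_h(s,a)} \le \tfrac{2 d^{\pistar}_h(s,a)}{d^{\piref}_h(s,a)} \le 2\Cstep$, while $\tfrac12(\piref_h+\pi^t_h)\ge \tfrac12\pi^t_h$ combined with the $s$-marginal bound from Lemma~\ref{lem:stepwise concentrability} gives $\tfrac{d^{\pistar}_h(s)\pi^t_h(a\mid s)}{\tilde d_h(s,a)} \le \tfrac{2 d^{\pistar}_h(s)}{d^{\piref}_h(s)} \le 2\Cstep$. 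Inserting the $L^2$ bound $\epsilon_{\text{PE}}^2$ from Lemma~\ref{lemma:q value error} then yields $\epsilon_{\text{PE}}\sqrt{2\Cstep}$ for each of the two summands, and summing over $h$ contributes $2H\epsilon_{\text{PE}}\sqrt{2\Cstep}$. The $1-\delta/4$ confidence is inherited from the union bound already embedded in Lemma~\ref{lemma:q value error}.

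The main obstacle is this change of measure: the PE oracle's guarantee is an $L^2$ error under $\tilde d_h$, whereas the performance-difference lemma evaluates the error under $d^{\pistar}_h$ paired with $\pistar_h$ (unknown) and $\pi^t_h$ (time-varying and not a priori covered by $\piref$). The mixture design in Algorithm~\ref{alg:PE oracle} is chosen precisely so that both distributions are simultaneously dominated by $\tilde d_h$ up to a factor of $2\Cstep$, which is what confines the coverage requirement to the step-wise concentrability $\Cstep$ (rather than any stronger $\pi^t$-dependent quantity) and produces the $\sqrt{2\Cstep}$ factor in the final bound; the remaining pieces (the mirror-descent regret and the high-probability union bound) are then standard.
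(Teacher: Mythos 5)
Your proposal is correct and follows essentially the same route as the paper's proof: performance-difference lemma, add-and-subtract $\Qbar{t}{h}$, the standard exponentiated-gradient/mirror-descent telescoping (H\"older plus Pinsker) for the regret term, and Cauchy--Schwarz plus a change of measure to the $\tfrac12(\piref_h+\pi^t_h)$ mixture --- bounding both Radon--Nikodym derivatives by $2\Cstep$ exactly as the paper does --- for the estimation term. The only cosmetic difference is that the boundedness $\|\Qbar{t}{h}\|_\infty\le R$ comes from the value-function class assumption (Assumption~\ref{assumption:value function class}) rather than Assumption~\ref{assumption:reward function class}, which does not affect the argument.
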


\begin{proof}[Proof of Lemma~\ref{lemma:NPG regret, rollout}]
    
    The performance difference lemma (Lemma~\ref{lemma:performance difference lemma}) implies that
    \begin{align*}
        &\sum^T_{t=1} \left( \V{\pistar}{1, r^t}(s_1) - \V{\pi^t}{1, r^t}(s_1) \right)  \\
        &= \sum^T_{t=1}  \mathbb{E}_{\pistar}\left[ \sum^H_{h=1} \langle \Q{\pi^t}{h, r^t}(s_h,\cdot), \pistar_h(\cdot\mid s_h) - \pi^t_h(\cdot\mid s_h) \rangle \right] \\
        &= \underbrace{ \sum^T_{t=1} \sum^H_{h=1}  \mathbb{E}_{s\sim d^{\pistar}_h}\left[ \langle \Qbar{t}{h}(s,\cdot), \pistar_h(\cdot\mid s) - \pi^t_h(\cdot\mid s) \rangle \right] }_{\text{(I)}} \\
        &\quad + \underbrace{ \sum^T_{t=1} \sum^H_{h=1} \mathbb{E}_{s\sim d^{\pistar}_h}\left[ \langle (\Q{\pi^t}{h, r^t}-\Qbar{t}{h})(s,\cdot), \pistar_h(\cdot\mid s_h) - \pi^t_h(\cdot\mid s_h) \rangle \right] }_{\text{(II)}}
    \end{align*}

    \paragraph{Bounding (I).}
    Decompose the inner product inside the expectation:
    \begin{align}
        &\langle \eta \Qbar{t}{h}(s_h,\cdot), \pistar_h(\cdot\mid s) - \pi^t_h(\cdot\mid s) \rangle \notag \\
        &\langle \eta \Qbar{t}{h}(s_h,\cdot), \pistar_h(\cdot\mid s) - \pi^{t+1}_h(\cdot\mid s) \rangle + \langle \eta \Qbar{t}{h}(s_h,\cdot), \pi^{t+1}_h(\cdot\mid s) - \pi^{t}_h(\cdot\mid s) \rangle \notag \\
        &\leq \langle \eta \Qbar{t}{h}(s_h,\cdot), \pistar_h(\cdot\mid s) - \pi^{t+1}_h(\cdot\mid s) \rangle + \eta \norm{\Qbar{t}{h}(s_h,\cdot)}{\infty} \norm{\pistar_h(\cdot\mid s) - \pi^{t+1}_h(\cdot\mid s)}{1} \notag \\
        &\leq \langle \eta \Qbar{t}{h}(s_h,\cdot), \pistar_h(\cdot\mid s) - \pi^{t+1}_h(\cdot\mid s) \rangle + \eta R \norm{\pistar_h(\cdot\mid s) - \pi^{t+1}_h(\cdot\mid s)}{1} \label{eqn:NPG regret, rollout 1}
    \end{align}
    where we use H\"older's inequality with the fact that $\norm{\Qbar{t}{h}}{\infty}\leq R$. Now recall that the policy update step (Line 7) in Algorithm~\ref{alg:APPO rollout} leads to
    \begin{align*}
        \pi^{t+1}_h(\cdot\mid s) = \frac{1}{Z^t_h(s)} \pi^t_h(\cdot\mid s) \exp\left( \eta \Qbar{t}{h}(s,\cdot) \right)
    \end{align*}
    where $Z^t_h(s) = \sum_{a\in\Aspace} \pi^t_h(a \mid s) \exp\left( \eta \Qbar{t}{h}(s,a) \right)$. Using the relationship $\eta\Qbar{t}{h}(s,a) = \log Z^t_h(s) + \log \pi^{t+1}_h(a\mid s) - \log \pi^{t}_h(a\mid s)$, it holds that
    
    \begin{align*}
        &\langle \eta \Qbar{t}{h}(s_h,\cdot), \pistar_h(\cdot\mid s) - \pi^{t+1}_h(\cdot\mid s) \rangle \\
        &= \langle \log Z^t_h(s) + \log \pi^{t+1}_h(\cdot\mid s) - \log \pi^{t}_h(\cdot\mid s), \pistar_h(\cdot\mid s) - \pi^{t+1}_h(\cdot\mid s) \rangle \\
        &= \langle  \log \pi^{t+1}_h(\cdot\mid s) - \log \pi^{t}_h(\cdot\mid s), \pistar(\cdot\mid s) - \pi^{t+1}_h(\cdot\mid s) \rangle \\
        &= \langle  \log \pi^{t+1}_h(\cdot\mid s) - \log \pi^{t}_h(\cdot\mid s), \pistar(\cdot\mid s) \rangle - \kldiv{\pi^{t+1}_h(\cdot\mid s)}{\pi^{t}_h(\cdot\mid s)} \\
        &= \langle  \log \frac{\pi^{t+1}_h(\cdot\mid s)}{\pistar_h(\cdot\mid s)} +  \log \frac{\pistar_h(\cdot\mid s)}{\pi^{t}_h(\cdot\mid s)}, \pistar_h(\cdot\mid s) \rangle - \kldiv{\pi^{t+1}_h(\cdot\mid s)}{\pi^{t}_h(\cdot\mid s)} \\
        &= \kldiv{\pistar_h(\cdot\mid s)}{\pi^t_h(\cdot\mid s)} - \kldiv{\pistar_h(\cdot\mid s)}{\pi^{t+1}_h(\cdot\mid s)} - \kldiv{\pi^{t+1}_h(\cdot\mid s)}{\pi^{t}_h(\cdot\mid s)} \\
        &\leq \kldiv{\pistar_h(\cdot\mid s)}{\pi^t_h(\cdot\mid s)} - \kldiv{\pistar_h(\cdot\mid s)}{\pi^{t+1}_h(\cdot\mid s)} - \frac{1}{2}\norm{\pistar_h(\cdot\mid s) - \pi^{t+1}_h(\cdot\mid s)}{1}^2
    \end{align*}
    where the second equality holds since $Z^t_h(s)$ is a constant given $s$, and the last inequality holds due to Pinsker's inequality. Combining this bound with \eqref{eqn:NPG regret, rollout 1}, we obtain
    
    \begin{align*}
        & \sum^T_{t=1} \langle \eta \Qbar{t}{h}(s_h,\cdot), \pistar_h(\cdot\mid s) - \pi^t_h(\cdot\mid s) \rangle \\
        &= \sum^T_{t=1} \left( \kldiv{\pistar_h(\cdot\mid s)}{\pi^t_h(\cdot\mid s)} - \kldiv{\pistar_h(\cdot\mid s)}{\pi^{t+1}_h(\cdot\mid s)} \right) \\
        &\quad + \sum^T_{t=1} \left( \eta R \norm{\pistar_h(\cdot\mid s) - \pi^{t+1}_h(\cdot\mid s)}{1} - \frac{1}{2}\norm{\pistar_h(\cdot\mid s) - \pi^{t+1}_h(\cdot\mid s)}{1}^2 \right) \\
        &\leq \sum^T_{t=1} \left( \kldiv{\pistar_h(\cdot\mid s)}{\pi^t_h(\cdot\mid s)} - \kldiv{\pistar_h(\cdot\mid s)}{\pi^{t+1}_h(\cdot\mid s)} \right) + \sum^T_{t=1} \frac{\eta^2 R^2}{2} \\
        &= \kldiv{\pistar_h(\cdot\mid s)}{\pi^1_h(\cdot\mid s)} - \kldiv{\pistar_h(\cdot\mid s)}{\pi^{T+1}_h(\cdot\mid s)} + \frac{\eta^2 R^2 T}{2} \\
        &\leq \log|\Aspace| + \frac{\eta^2 R^2 T}{2}
    \end{align*}
    where the first inequality holds since $\forall x\in\Real \, ax - x^2/2 \leq a^2/2$, and the second inequality holds due to the fact that $\pi^1_h = \text{Unif}(\Aspace)$. Finally, setting $\eta = \sqrt{\frac{2\log|\Aspace|}{R^2 T}}$, (I) is bounded by
    
    \begin{align*}
        \text{(I)} &= \sum^H_{h=1}  \mathbb{E}_{s\sim d^{\pistar}_h}\left[ \sum^T_{t=1}  \langle \Qbar{t}{h}(s,\cdot), \pistar(\cdot\mid s) - \pi^t(\cdot\mid s) \rangle \right] \\
        &\leq \sum^H_{h=1} \frac{\log|\Aspace|}{\eta} + \frac{\eta R^2 T}{2} = RH\sqrt{\frac{T\log|\Aspace|}{2}}
    \end{align*}

    \paragraph{Bounding (II).}
    We condition on the event defined by Lemma~\ref{lemma:q value error}. Then we have
    \begin{align*}
        &\left| \mathbb{E}_{s\sim d^{\pistar}_h}\left[ \langle (\Q{\pi^t}{h, r^t}-\Qbar{t}{h})(s,\cdot), \pistar_h \rangle \right] \right| \\
        &= \left| \mathbb{E}_{(s,a)\sim d^{\pistar}_h}\left[ \Q{\pi^t}{h, r^t}(s,a)-\Qbar{t}{h}(s,a)\right] \right| \\
        &\leq \sqrt{ \mathbb{E}_{(s,a)\sim d^{\pistar}_h}\left[ \left(\Q{\pi^t}{h, r^t}(s,a)-\Qbar{t}{h}(s,a)\right)^2 \right] } \\
        &\leq \sqrt{ 2 \left( \max_{h\in[H]}\sup_{(s,a)\in\Sspace\times\Aspace}\frac{d^{\pistar}_h(s,a)}{d^{\piref}_h(s,a)} \right) \mathbb{E}_{s\sim d^{\piref}_h, a\sim \frac{1}{2}(\pi^t_h+\piref_h)}\left[ \left(\Q{\pi^t}{h, r^t}(s,a)-\Qbar{t}{h}(s,a)\right)^2 \right] } \\
        &\leq \sqrt{ 2 \Cstep \epsilon_{\text{PE}}^2 }
    \end{align*}
    where the first inequality holds due to Jensen's inequality, the second inequality uses importance sampling, and the last inequality uses Lemma~\ref{lemma:q value error}.

    \begin{align*}
        &\left| \mathbb{E}_{s\sim d^{\pistar}_h}\left[ \langle (\Q{\pi^t}{h, r^t}-\Qbar{t}{h})(s,\cdot), \pi^t_h \rangle \right] \right| \\
        &= \left| \mathbb{E}_{s\sim d^{\pistar}_h, a\sim \pi^t_h}\left[ \Q{\pi^t}{h, r^t}(s,a)-\Qbar{t}{h}(s,a)\right] \right| \\
        &\leq \sqrt{ \mathbb{E}_{s\sim d^{\pistar}_h, a\sim \pi^t_h}\left[  \left(\Q{\pi^t}{h, r^t}(s,a)-\Qbar{t}{h}(s,a)\right)^2 \right] } \\
        &\leq \sqrt{ 2 \left( \max_{h\in[H]}\sup_{s\in\Sspace}\frac{d^{\pistar}_h(s)}{d^{\piref}_h(s)} \right) \mathbb{E}_{s\sim d^{\piref}_h, a\sim \frac{1}{2}(\pi^t_h+\piref_h)}\left[ \left(\Q{\pi^t}{h, r^t}(s,a)-\Qbar{t}{h}(s,a)\right)^2 \right] } \\
        &\leq \sqrt{ 2 \left( \max_{h\in[H]}\sup_{s\in\Sspace}\frac{d^{\pistar}_h(s)}{d^{\piref}_h(s)} \right) \mathbb{E}_{s\sim d^{\piref}_h, a\sim \frac{1}{2}(\pi^t_h+\piref_h)}\left[ \left(\Q{\pi^t}{h, r^t}(s,a)-\Qbar{t}{h}(s,a)\right)^2 \right] } \\
        &\leq \sqrt{ 2 \Cstep \epsilon_{\text{PE}}^2 }.
    \end{align*}
    Therefore, we obtain the bound
    \begin{align*}
        \text{(II)} &\leq \sum^T_{t=1}\sum^H_{h=1} \left|\mathbb{E}_{s\sim d^{\pistar}_h}\left[ \langle (\Q{\pi^t}{h, r^t}-\Qbar{t}{h})(s,\cdot), \pistar_h(\cdot\mid s_h) \rangle \right] \right| \\
        &\quad + \sum^T_{t=1}\sum^H_{h=1} \left|\mathbb{E}_{s\sim d^{\pistar}_h}\left[ \langle (\Q{\pi^t}{h, r^t}-\Qbar{t}{h})(s,\cdot), \pi^t_h(\cdot\mid s_h) \rangle \right] \right| \\
        &\leq 2TH \epsilon_{\text{PE}} \sqrt{ 2 \Cstep }.
    \end{align*}

    We conclude the proof by combining the bounds on (I) and (II).
\end{proof}

Now we prove Theorem~\ref{thm:upper bound, rollout} based on the lemmas.

\begin{proof}[Proof of Theorem~\ref{thm:upper bound, rollout}]
    We condition on the event defined by Lemma~\ref{lemma:reward MLE error} (with $\delta'=\delta/4$), Lemma~\ref{lemma:optimization error, rollout}, and Lemma~\ref{lemma:NPG regret, rollout}, which hold simultaneously with probability at least $1-\delta$. Consider the following sub-optimality decomposition at step $t$:
    \begin{align}
        \V{\pistar}{1, \rstar} - \V{\pi^t}{1, \rstar} \notag
        &= \V{\pistar}{1, \rstar} - \V{\pistar}{1, \rhat} + \V{\pistar}{1, \rhat} - \V{\pistar}{1, r^t} + \V{\pistar}{1, r^t} - \V{\pi^t}{1, \rstar} + \V{\pi^t}{1, r^t} - \V{\pi^t}{1, r^t} \notag \\
        &= \underbrace{\V{\pistar}{1,\rstar-\rhat} - \V{\piref}{1,\rstar-\rhat}}_{\text{(I) : MLE estimation error}} \notag \\
        &\quad + \underbrace{ \V{\pistar}{1,\rhat-r^t} - \V{\piref}{1,\rhat-r^t} - \V{\pi^t}{1,\rstar} + \V{\piref}{1, \rstar} + \V{\pi^t}{1,r^t} - \V{\piref}{1, r^t}}_{\text{(II) : Optimization error}} \notag \\
        &\quad + \underbrace{\V{\pistar}{1, r^t} - \V{\pi^t}{1, r^t} }_{\text{(III) : Policy update regret}},  \label{eqn:upper bound, rollout 1}
    \end{align}
    where we omit the initial state $s_1$ for simplicity.

    \paragraph{Bounding (I).}
    Since we condition on the event defined by Lemma~\ref{lemma:reward MLE error}, we have
    \begin{align*}
        \text{(I)} &= \V{\pistar}{1,\rstar-\rhat} - \V{\piref}{1,\rstar-\rhat} \\
        &= \mathbb{E}_{\tau^0\sim \pistar, \tau^1 \sim \piref} \left[ \rstar(\tau^0) - \rstar(\tau^1) - \rhat(\tau^0) + \rhat(\tau^1)  \right] \\
        &\leq \sqrt{\mathbb{E}_{\tau^0\sim \pistar, \tau^1 \sim \piref} \left[ | \rstar(\tau^0) - \rstar(\tau^1) - \rhat(\tau^0) + \rhat(\tau^1) |^2  \right]} \\
        &\leq \sqrt{\Ctr \mathbb{E}_{\tau^0, \tau^1 \sim \piref} \left[ | \rstar(\tau^0) - \rstar(\tau^1) - \rhat(\tau^0) + \rhat(\tau^1) |^2  \right]} \\
        &\leq \sqrt{\Ctr} \epsilon_{r}(\delta/4).
    \end{align*}

    \paragraph{Bounding (II).}
    We can relate the terms $\V{\pistar}{1,\rhat-r^t} - \V{\piref}{1,\rhat-r^t}$ to $\mathcal{E}(r^t; \pstar, \rhat)$. By Assumption~\ref{assumption:concentrability}, we have that
    \begin{align*}
       &\V{\pistar}{1,\rhat-r^t} - \V{\piref}{1,\rhat-r^t}  \\
        &= \mathbb{E}_{\tau^0\sim \pistar, \tau^1 \sim \piref} \left[ \rhat(\tau^0) - \rhat(\tau^1) - r^t(\tau^0) + r^t(\tau^1)  \right]  \\
        &\leq \Ctr \mathbb{E}_{\tau^0, \tau^1 \sim \piref} \left[ | \rhat(\tau^0) - \rhat(\tau^1) - r^t(\tau^0) + r^t(\tau^1) |  \right] \\
        &= \Ctr \mathcal{E}(r^t; \rhat) \leq \lambda \mathcal{E}(r^t; \rhat)
    \end{align*}
    where the last inequality holds since $\mathcal{E}(r^t; \rhat)$ is non-negative and $\lambda \geq \Ctr$. Further, Lemma~\ref{lemma:optimization error, rollout} implies
    \begin{align*}
        \lambda \mathcal{E}(r^t; \rhat) &\leq \V{\pi^t}{1, \rstar} - \V{\piref}{1, \rstar} - \V{\pi^t}{1, r^t} + \V{\piref}{1, r^t} + \lambda \mathcal{E}(\rstar; \rhat) + 2\tilde{\epsilon}_{approx} \\
        &\leq \V{\pi^t}{1, \rstar} - \V{\piref}{1, \rstar} - \V{\pi^t}{1, r^t} + \V{\piref}{1, r^t} + \lambda \tilde{\epsilon}_{r}(\delta/4) + 2\tilde{\epsilon}_{approx}
    \end{align*}
    where the last inequality holds due to Lemma~\ref{lemma:reward MLE error}:
    \begin{align*}
        \mathcal{E}(\rstar; \rhat) &= \mathbb{E}_{\tau^0, \tau^1 \sim \piref} \left[ | \rhat(\tau^0) - \rhat(\tau^1) - \rstar(\tau^0) + \rstar(\tau^1) |  \right] \\
        &\leq \sqrt{\mathbb{E}_{\tau^0, \tau^1 \sim \piref} \left[ | \rhat(\tau^0) - \rhat(\tau^1) - \rstar(\tau^0) + \rstar(\tau^1) |^2 \right]} \leq \epsilon_{r}(\delta/4).
    \end{align*}
    Therefore, we have
    \begin{align*}
        \text{(II)} \leq \lambda \epsilon_{r}(\delta/4) + 2\tilde{\epsilon}_{approx}.
    \end{align*}

    \paragraph{Bounding Sub-optimality.}
    Putting the bounds on (I) and (II) into \eqref{eqn:upper bound, rollout 1}, we have
    \begin{align}
        &\V{\pistar}{1, \rstar} - \V{\pi^t}{1, \rstar} \notag \\
        &\leq \sqrt{\Ctr} \epsilon_{r}(\delta/4) + \lambda \epsilon_r(\delta/4) + 2\tilde{\epsilon}_{approx} + \V{\pistar}{1, r^t} - \V{\pi^t}{1, r^t} \label{eqn:upper bound, rollout 2}
    \end{align}
    
    Since Algorithm~\ref{alg:APPO rollout} returns the mixture policy $\hat{\pi} = \frac{1}{T}\sum^T_{t=1} \pi^t$, the sub-optimality is $\V{\pistar}{1, \rstar} - \V{\hat{\pi}}{1, \rstar} = \frac{1}{T} \sum^T_{t=1} \left( \V{\pistar}{1, \rstar} - \V{\pi^t}{1, \rstar} \right)$. Using the bound in \eqref{eqn:upper bound, rollout 2} and Lemma~\ref{lemma:NPG regret, rollout}, it holds that
    
    \begin{align*}
        &\V{\pistar}{1, \rstar} - \V{\hat{\pi}}{1, \rstar} \\
        &= \frac{1}{T} \sum^T_{t=1} \left( \V{\pistar}{1, \rstar} - \V{\pi^t}{1, \rstar} \right) \\
        &\leq \sqrt{\Ctr} \epsilon_{r}(\delta/4) + \lambda \epsilon_r(\delta/4) + 2\tilde{\epsilon}_{approx} + \frac{1}{T} \sum^T_{t=1} \left( \V{\pistar}{1, r^t} - \V{\pi^t}{1, r^t} \right) \\
        &\leq \sqrt{\Ctr} \epsilon_{r}(\delta/4) + \lambda \epsilon_r(\delta/4) + 2\tilde{\epsilon}_{approx} + RH\sqrt{\frac{\log|\Aspace|}{2T}} + 2H\epsilon_{\text{PE}}\sqrt{\Cstep} \\
        &\leq \mathcal{O}\left( \sqrt{\log\frac{|\Rset|}{\delta}}\left(\frac{\Ctr\kappa\sqrt{H}}{\sqrt{M}} + \frac{R}{\sqrt{K_1}} + \frac{R}{\sqrt{N}}\right) + RH\sqrt{\frac{\log|\Aspace|}{T}} + RH\sqrt{\frac{\Cstep}{K_2}\log\frac{TH|\mathcal{F}|}{\delta}} \right)
    \end{align*}
    
\end{proof}

\newpage

\section{Detailed Proof of Theorem~\ref{thm:upper bound}} \label{sec:proof of APPO}

For simplicity, we introduce some notations regarding optimization objectives in Algorithm~\ref{alg:APPO}. For $f\in\mathcal{F}^H$, we define
\begin{align*}
    \emplossOpt^t(f; \tilde{P}, \tilde{r}) &:= \sum^H_{h=1} \E_{(s_h,a_h)\sim\dataTraj}\left[ f_h\circ\pi^t_h (s_h) - f_h(s_h, a_h) \right] + \lambda \hat{\mathcal{E}}_{\dataTraj}(f; \tilde{P}, \tilde{r}) 
\end{align*}
and its population version as
\begin{align*}
    \lossOpt^t(f; \tilde{P}, \tilde{r}) &:= \sum^H_{h=1} \mathbb{E}_{(s_h,a_h)\sim d^{\piref}_h}\left[ f_h\circ\pi^t_h (s_h) - f_h(s_h, a_h) \right] + \lambda \mathcal{E}(f; \tilde{P}, \tilde{r})
\end{align*}

\subsection{Optimization Error}

In this section, we prove that the (finite-sample) optimization objective $\emplossOpt^t(f; \phat, \rhat)$ is close to its population version $\lossOpt^t(f; \tilde{P}, \tilde{r})$. The result ensures that $f^t$ is a good approximation for the solutions to the optimization program with infinite samples, i.e.
\begin{align*}
   f^t \approx \argmin_{f\in\Fset^H} \lossOpt^t(f; \pstar, \rhat).
\end{align*}

\textbf{Remark.}
For this section, we assume that the maximum likelihood transition estimator $\phat$ is computed using half of $\dataTraj$, and the losses $\emplossOpt^t(f; \phat, \rhat)$ are computed from the other half. This increases the sample complexity only by a constant factor but helps avoid union bound over $\Pset$ in the proof of Lemma~\ref{lemma:closeness of loss}.

\begin{lemma} \label{lemma:optimization error}
    With probability at least $1-\delta/2$, for all $t\in [T]$, we have that
    \begin{align*}
        \lossOpt^t(f^t; \rhat) \leq \lossOpt^t(\Q{\pi^t}{} ; \rhat) + 2\epsilon_{approx}
    \end{align*}
    where $\epsilon_{approx}$ is defined in Lemma~\ref{lemma:closeness of loss}.
\end{lemma}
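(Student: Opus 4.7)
I adopt the three-term decomposition used for Lemma~\ref{lemma:optimization error, rollout}, adapted to the reparameterized objective. Write
\begin{align*}
    &\lossOpt^t(f^t; \pstar, \rhat) - \lossOpt^t(\Q{\pi^t}{}; \pstar, \rhat)
    = \underbrace{\lossOpt^t(f^t; \pstar, \rhat) - \emplossOpt^t(f^t; \phat, \rhat)}_{\text{(I)}} \\
    &\quad + \underbrace{\emplossOpt^t(f^t; \phat, \rhat) - \emplossOpt^t(\Q{\pi^t}{}; \phat, \rhat)}_{\text{(II)}}
    + \underbrace{\emplossOpt^t(\Q{\pi^t}{}; \phat, \rhat) - \lossOpt^t(\Q{\pi^t}{}; \pstar, \rhat)}_{\text{(III)}}.
\end{align*}
By Assumption~\ref{assumption:value function class}, $\Q{\pi^t}{} = \{\Q{\pi^t}{h, \rstar}\}_{h=1}^H \in \mathcal{F}^H$, and by the Bellman equation $r^{\pi^t}_{\pstar, \Q{\pi^t}{}} = \rstar$, so $\Q{\pi^t}{}$ is an admissible comparator. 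Line~4 of Algorithm~\ref{alg:APPO} defines $f^t$ as the empirical minimizer of $\emplossOpt^t(\cdot; \phat, \rhat)$ over $\mathcal{F}^H$, hence (II)$\,\leq\,0$. Terms (I) and (III) will each be bounded in absolute value by $\epsilon_{approx}$ via Lemma~\ref{lemma:closeness of loss}, producing the $2\epsilon_{approx}$ slack in the statement.

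The substantive content is therefore to establish, in Lemma~\ref{lemma:closeness of loss}, the uniform bound $\sup_{f \in \mathcal{F}^H} |\emplossOpt^t(f; \phat, \rhat) - \lossOpt^t(f; \pstar, \rhat)| \leq \epsilon_{approx}$ holding simultaneously for all $t \in [T]$ with high probability. I split this deviation into two pieces. Piece (a) is concentration at a shared transition: the random variables $f_h\circ\pi^t_h(s_h) - f_h(s_h,a_h)$ and the trajectory-pair integrand inside $\mathcal{E}(\cdot; \pstar, \rhat)$ are bounded uniformly by $O(R)$, and the i.i.d.\ structure of $\dataTraj$ lets Hoeffding plus a union bound over $f \in \mathcal{F}^H$ (cost $H\log|\mathcal{F}|$) and over the $T$ iteratively-defined policies $\pi^t$ (cost an additional $T$ factor, since $\pi^t$ is a deterministic function of the preceding $f$-iterates and thus ranges over a class of size at most $|\mathcal{F}|^{HT}$) produce the $HT\log(H|\mathcal{F}|/\delta)$ term appearing in Theorem~\ref{thm:upper bound}. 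Piece (b) quantifies the plug-in error from substituting $\phat$ for $\pstar$ inside $\hat{\mathcal{E}}_{\dataTraj}$: by the definition of the induced reward, $r^{\pi^t}_{h, \phat, f} - r^{\pi^t}_{h, \pstar, f} = (\pstar_h - \phat_h)(f_{h+1}\circ \pi^t_{h+1})$, which is bounded pointwise by $R\,\|\phat_h(\cdot\mid s, a) - \pstar_h(\cdot\mid s, a)\|_{\mathrm{TV}}$, so a standard MLE/Hellinger guarantee for conditional-density estimation (followed by Pinsker) controls this piece at the $\log(H|\Pset|/\delta)$ rate.

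The main obstacle is disentangling the data dependence: both $\phat$ and $\emplossOpt^t$ are computed from $\dataTraj$, so a naive uniform convergence argument would need a union bound over $\mathcal{F}^H \times \Pset$ simultaneously. I follow the remark preceding the lemma and split $\dataTraj$ into two halves, using one to fit $\phat$ and the other to evaluate the loss; conditioned on the first half, $\phat$ is deterministic, so the concentration step on the second half only pays for $\mathcal{F}^H$, keeping the $|\Pset|$ and $|\mathcal{F}|$ factors separate in the final rate for $\epsilon_{approx}$ that feeds into Theorem~\ref{thm:upper bound}.
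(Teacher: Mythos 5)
Your proposal is correct and follows essentially the same route as the paper: the identical three-term decomposition, (II)$\leq 0$ from the empirical optimality of $f^t$ (with $\Q{\pi^t}{}\in\mathcal{F}^H$ as the admissible comparator by Assumption~\ref{assumption:value function class}), and (I), (III) each controlled by the uniform deviation bound of Lemma~\ref{lemma:closeness of loss}. Your sketch of that deviation bound — Hoeffding plus a union bound over $\mathcal{F}^H$ and the $|\mathcal{F}|^{T}$-sized policy class, the transition plug-in error via the MLE guarantee, and the sample-splitting trick to decouple $\phat$ from the evaluation half of $\dataTraj$ — also matches the paper's argument.
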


\begin{proof}
    Consider this decomposition: 
    \begin{align*}
        &\lossOpt^t(f^t ;\rhat) - \lossOpt^t(\Q{\pi^t}{} ;  \rhat) \\
        &= \underbrace{ \lossOpt^t(f^t ; \rhat) - \emplossOpt^t(f^t ; \phat, \rhat)}_{\text{(I)}} 
        + \underbrace{\emplossOpt^t(f^t ; \phat, \rhat) - \emplossOpt^t(\Q{\pi^t}{} ; \phat, \rhat)}_{\text{(II)}} 
        + \underbrace{ \emplossOpt^t(\Q{\pi^t}{} ; \phat, \rhat) - \lossOpt^t(\Q{\pi^t}{} ; \rhat) }_{\text{(III)}}.
    \end{align*}
    Conditioned on the event defined by Lemma~\ref{lemma:closeness of loss}, (I) and (III) are bounded by $\epsilon_{approx}$. Moreover, the optimality of $f^t$ implies (II)$\leq 0$.
\end{proof}

\begin{lemma} \label{lemma:closeness of loss}
    With probability at least $1-\delta/2$, for every $t\in[T]$ and $f\in \mathcal{F}^H$, it holds that
    \begin{align*}
        \left| \emplossOpt^t(f; \phat, \rhat) - \lossOpt^t(f; \rhat) \right| \leq
        8R\sqrt{\frac{H^3T\log(8H|\mathcal{F}|/\delta)}{N}} + 2RH \epsilon_P(\delta/8) := \epsilon_{approx}.
    \end{align*}
\end{lemma}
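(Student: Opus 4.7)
\emph{Proof plan.}
By the triangle inequality, I would decompose
\[
\emplossOpt^t(f;\phat,\rhat) - \lossOpt^t(f;\rhat) = A^t(f) + \lambda\,B_1^t(f) + \lambda\,B_2^t(f),
\]
where $A^t(f) = \sum_{h=1}^H (\E_{(s_h,a_h)\sim\dataTraj} - \E_{(s_h,a_h)\sim d^{\piref}_h})[f_h\circ\pi^t_h(s_h) - f_h(s_h,a_h)]$, $B_1^t(f) = \hat{\mathcal{E}}_{\dataTraj}(f;\phat,\rhat) - \mathcal{E}(f;\phat,\rhat)$, and $B_2^t(f) = \mathcal{E}(f;\phat,\rhat) - \mathcal{E}(f;\pstar,\rhat)$. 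I would control $A^t$ and $B_1^t$ by Hoeffding-type concentration with a careful union bound accounting for the data-dependence of $\pi^t$, and $B_2^t$ by a transition-bias calculation using the standard MLE total-variation guarantee, absorbing $\lambda=\Theta(\Ctr)$ into the numerical prefactor.

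For $A^t(f)$, each trajectory in $\dataTraj$ contributes one i.i.d.\ realization of the inner sum, which by Assumption~\ref{assumption:value function class} lies in $[-RH,RH]$; Hoeffding therefore yields an $\mathcal{O}(RH/\sqrt{N})$ deviation for any \emph{fixed} pair $(f,\pi^t)$. The delicate point is uniformity over $\pi^t$: since the NPG update makes $\pi^t$ a deterministic function of $(f^1,\dots,f^{t-1})\in(\Fset^H)^{t-1}$, the set of realizable $\pi^t$ has cardinality at most $|\Fset|^{H(t-1)}$. Union-bounding jointly over this set, over $f\in\Fset^H$, and over $t\in[T]$ produces a log-cardinality of order $HT\log(H|\Fset|/\delta)$, which is exactly what replaces the naive $\sqrt{\log T}$ by the $\sqrt{HT}$ factor inside the square root in the stated bound, giving $|A^t(f)|\leq \mathcal{O}(R\sqrt{H^3T\log(H|\Fset|/\delta)/N})$. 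The same concentration argument handles $B_1^t$: by the remark preceding the lemma, $\phat$ is trained on an independent half of $\dataTraj$, so conditionally on $\phat$ no union over $\Pset$ is required; the integrand is $\mathcal{O}(RH)$-bounded per trajectory pair, and the identical uniform argument over $(f,\pi^t,t)$ yields the same rate.

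For $B_2^t(f)$, unfolding the induced-reward definition gives the telescoping identity
\[
r^{\pi^t}_{\phat,f}(\tau) - r^{\pi^t}_{\pstar,f}(\tau) = \sum_{h=1}^H \bigl[(\pstar_h - \phat_h)(f_{h+1}\circ\pi^t_{h+1})\bigr](s_h,a_h),
\]
so applying the triangle inequality (both outside and inside the absolute value defining $\mathcal{E}$) yields $|B_2^t(f)|\leq 2\,\E_{\tau\sim\piref}[|r^{\pi^t}_{\phat,f}(\tau) - r^{\pi^t}_{\pstar,f}(\tau)|]$. Since each function $f_{h+1}\circ\pi^t_{h+1}$ takes values in $[0,R]$, the elementary bound $|(P-P')g|\leq (R/2)\|P-P'\|_1$ applied to the centered version of $g$ reduces this to $R\sum_h \E_{(s,a)\sim d^{\piref}_h}[\|\phat_h(\cdot\mid s,a)-\pstar_h(\cdot\mid s,a)\|_1]$. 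A standard transition-MLE TV guarantee, combined with a union over $h\in[H]$, then bounds this by $RH\epsilon_P(\delta/8)$, and summing the three contributions produces the claimed $\epsilon_{approx}$.

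The main obstacle is correctly covering the data-dependent policy $\pi^t$ in the uniform concentration step: naively treating $\pi^t$ as if it were fixed would underestimate the effective complexity and miss the $\sqrt{HT}$ factor. Once one recognizes $\pi^t$ as a deterministic functional of the previously selected value functions $(f^1,\dots,f^{t-1})\in \Fset^{H(T-1)}$ and takes the union bound over the full Cartesian product, everything else reduces to standard Hoeffding concentration and a TV-form transition-MLE bound.
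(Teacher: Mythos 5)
Your plan is correct and follows essentially the same route as the paper's proof: the same three-term split, the same Hoeffding-plus-union-bound over the class of policies realizable by the exponentiated-$f$ updates (cardinality $|\mathcal{F}|^{O(HT)}$, which is exactly where the $\sqrt{HT}$ factor enters), the same use of the sample-splitting remark to avoid a union over $\Pset$, and the same reduction of the transition bias to $2R\sum_{h}\mathbb{E}_{(s,a)\sim d^{\piref}_h}\left[\|\phat_h(\cdot\mid s,a)-\pstar_h(\cdot\mid s,a)\|_1\right]$ via the induced-reward identity and a H\"older/TV bound followed by the transition-MLE guarantee. The one caveat you share with the paper concerns the factor $\lambda$ multiplying the $\hat{\mathcal{E}}_{\dataTraj}-\mathcal{E}$ deviation: the paper's proof silently drops it in the final combination, and your proposal to absorb $\lambda=\Theta(\Ctr)$ into a ``numerical prefactor'' is not literally valid since $\Ctr$ is problem-dependent, but as the lemma statement itself omits $\lambda$ this is an imprecision inherited from the paper rather than a gap in your argument.
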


\begin{proof}
    Due to the policy update in Line 7 of Algorithm~\ref{alg:APPO}, the policies $\{\pi^t_h\}_{(t,h)\in[T]\times[H]}$ belongs to the following function class:
    \begin{align*}
        \Pi = \left\{ \pi(a\mid s) =  \frac{ \exp\left(\sum^T_{i=1} \eta f^i(s,a)\right) }{ \sum_{a'\in\Aspace} \exp\left(\sum^T_{i=1} \eta f^i(s,a') \right) } : f^i \in \mathcal{F} \text{ for all } i\in[T] \right\}.
    \end{align*}
    It is clear that $|\Pi| \leq |\mathcal{F}|^T $.

    \paragraph{Step 1.}
    Fix $h\in[H]$, $f \in\mathcal{F}$, and $\pi\in \Pi$. Since $|f\circ \pi(s)|\leq R$ for all $s\in\Sspace$, Hoeffding inequality implies that
    \begin{align*}
        \left| \E_{s_h\in\dataTraj}\left[ f\circ\pi (s_h)\right] - \mathbb{E}_{s_h\sim d^{\piref}_h}\left[ f\circ\pi (s_h) \right] \right| \leq R\sqrt{\frac{\log(8/\delta)}{2N}}
    \end{align*}
    with probability at least $1-\delta/8$. Similarly, since $|f(s,a)|\leq R$ for all $(s,a)\in\Sspace\times\Aspace$, it holds that
    \begin{align*}
        \left| \E_{(s_h,a_h)\sim\dataTraj}\left[ f(s_h,a_h)\right] - \mathbb{E}_{(s_h,a_h)\sim d^{\piref}_h}\left[ f(s_h,a_h) \right] \right| \leq R\sqrt{\frac{\log(8/\delta)}{2N}}
    \end{align*}
    with probability at least $1-\delta/8$. Thus, with probability at least $1-\delta/4$, we have
    \begin{align*}
        &\left|\mathbb{E}_{(s_h,a_h)\sim\dataTraj}\left[ f\circ\pi (s_h) - f(s_h,a_h)\right] - \mathbb{E}_{(s_h,a_h)\sim d^{\piref}_h}\left[ f\circ\pi(s_h) - f(s_h,a_h) \right] \right| \\
        &\leq  \left| \mathbb{E}_{(s_h,a_h)\sim\dataTraj}\left[ f\circ\pi (s_h)\right] - \mathbb{E}_{(s_h,a_h)\sim d^{\piref}_h}\left[ f\circ\pi(s_h) \right] \right| \\
        &\quad + \left| \mathbb{E}_{(s_h,a_h)\sim\dataTraj}\left[ f (s_h,a_h)\right] - \mathbb{E}_{(s_h,a_h)\sim d^{\piref}_h}\left[ f(s_h,a_h) \right] \right| \\
        &\leq R\sqrt{\frac{2\log(8/\delta)}{N}}.
    \end{align*}
    Consider union bound over all $h\in[H], f\in\mathcal{F}$, and $\pi\in\Pi$. Since $\pi^t_h \in\Pi$ for every $(t,h)\in[T]\times[H]$, with probability at least $1-\delta/4$, we have 
    \begin{align*}
        &\left|\sum^H_{h=1} \mathbb{E}_{(s_h,a_h)\sim\dataTraj}\left[ f_h\circ\pi^t_h (s_h) - f_h(s_h,a_h)\right] - \sum^H_{h=1} \mathbb{E}_{(s_h,a_h)\sim d^{\piref}_h}\left[ f_h\circ\pi^t_h(s_h) - f_h(s_h,a_h) \right] \right| \\
        &\leq \sum^H_{h=1} \left| \mathbb{E}_{(s_h,a_h)\sim\dataTraj}\left[ f_h\circ\pi^t_h (s_h) - f_h(s_h,a_h)\right] - \mathbb{E}_{(s_h,a_h)\sim d^{\piref}_h}\left[ f_h\circ\pi^t_h(s_h) - f_h(s_h,a_h) \right] \right| \\
        &\leq RH\sqrt{\frac{2\log(8H|\mathcal{F}||\Pi|/\delta)}{N}} 
        \leq 2RH\sqrt{\frac{T\log(8H|\mathcal{F}|/\delta)}{N}}.
    \end{align*}
    for every $f\in\mathcal{F}$.

    \paragraph{Step 2.} We have that
    \begin{align}\label{eqn:closeness of loss 1}
        |\hat{\mathcal{E}}_{\dataTraj}(f; \phat, \rhat) - \mathcal{E}(f; \pstar, \rhat) | 
        \leq |\hat{\mathcal{E}}_{\dataTraj}(f; \phat, \rhat) - \mathcal{E}(f; \phat, \rhat) | + |\mathcal{E}(f; \phat, \rhat) - \mathcal{E}(f; \pstar, \rhat) |.
    \end{align}
    
    Again, we use Hoeffding inequality to bound the first term. Fix $f \in \mathcal{F}^H$ and $\pi = \{\pi_h\}^H_{h=1} \in \Pi^H$ and consider the function $r^{\pi}_{\phat, f}$ (Recall that $r^{\pi}_{h, \phat, f} (s,a) = f_h(s,a) - \phat (f_{h+1}\circ \pi_{h+1})(s,a)$ for all $h\in[H]$ and $(s,a)\in\Sspace\times\Aspace$). Since $|(r^{\pi}_{\phat, f} - \rhat)(\tau)| \leq 2RH$ for any trajectory $\tau$, we have that
    \begin{align*}
        &\left| \mathbb{E}_{(\tau^0,\tau^1)\sim\dataTraj}\left[ \left|(r^{\pi}_{\phat, f} - \rhat)(\tau^0) - (r^{\pi}_{\phat, f} - \rhat)(\tau^1) \right| \right] - \mathbb{E}_{(\tau^0,\tau^1)\sim \piref}\left[ \left|(r^{\pi}_{\phat, f} - \rhat)(\tau^0) - (r^{\pi}_{\phat, f} - \rhat)(\tau^1)\right| \right] \right| \\
        &\leq 2RH \sqrt{\frac{2\log(8/\delta)}{N}}
    \end{align*}
    with probability at least $1-\delta/8$. Applying union bound over all $f\in\mathcal{F}^H$ and $\pi\in\Pi^H$, since $\pi^t_h \in\Pi$ for every $(t,h)\in[T]\times[H]$, it holds that 
    \begin{align}
        |\hat{\mathcal{E}}_{\dataTraj}(f; \phat, \rhat) - \mathcal{E}(f; \phat, \rhat) | \leq 2RH \sqrt{\frac{2H\log(8|\mathcal{F}||\Pi|/\delta)}{N}} \leq 4RH \sqrt{\frac{HT\log(8|\mathcal{F}|/\delta)}{N}} \label{eqn:closeness of loss 2}
    \end{align}
    for every $f\in\mathcal{F}$, with probability at least $1-\delta/8$.

    On the other hand, the second term in \eqref{eqn:closeness of loss 1} is bounded by
    \begin{align*}
        &|\mathcal{E}(f; \phat, \rhat) - \mathcal{E}(f; \pstar, \rhat) | \\
        &\leq   \mathbb{E}_{(\tau^0,\tau^1)\sim\piref}\left[ \left| \sum^H_{h=1} (\pstar - \phat)(f_h\circ \pi^t_h)(s^0_h,a^0_h) - \sum^H_{h=1} (\pstar - \phat)(f_h\circ \pi^t_h)(s^1_h,a^1_h) \right|  \right] \\
        &\leq \mathbb{E}_{\tau^0\sim\piref}\left[ \sum^H_{h=1} \left|(\pstar - \phat)(f_h\circ \pi^t_h)(s^0_h,a^0_h)\right| \right] + \mathbb{E}_{\tau^1\sim\piref}\left[ \sum^H_{h=1} \left|(\pstar - \phat)(f_h\circ \pi^t_h)(s^1_h,a^1_h) \right|  \right] \\
        &= 2 \mathbb{E}_{\tau\sim\piref}\left[ \sum^H_{h=1} \left|(\pstar - \phat)(f_h\circ \pi^t_h)(s_h,a_h)\right| \right] \\
        &\leq 2R \mathbb{E}_{\tau\sim\piref}\left[ \sum^H_{h=1} \norm{\pstar(\cdot\mid s_h,a_h) - \phat(\cdot\mid s_h,a_h)}{1}\right] \\
        &= 2R \sum^H_{h=1} \mathbb{E}_{(s_h,a_h)\sim d^{\piref}_h}\left[  \norm{\pstar(\cdot\mid s_h,a_h) - \phat(\cdot\mid s_h,a_h)}{1}\right]
    \end{align*}
    where the first inequality holds since we have $||a|-|b|| \leq |a-b|$ for all $a,b\in\Real$, and the third inequality holds due to H\"older's inequality with the fact that $\norm{f_h \circ \pi^t_h}{\infty} \leq R$. Furthermore, Lemma~\ref{lemma:transition MLE error} implies
    \begin{align}
        |\mathcal{E}(f; \phat, \rhat) - \mathcal{E}(f; \pstar, \rhat) |
        &\leq 2R \sum^H_{h=1} \mathbb{E}_{(s_h,a_h)\sim d^{\piref}_h}\left[  \norm{\pstar(\cdot\mid s_h,a_h) - \phat(\cdot\mid s_h,a_h)}{1}\right] \notag \\
        &\leq 2R \sum^H_{h=1} \sqrt{\mathbb{E}_{(s_h,a_h)\sim d^{\piref}_h}\left[  \norm{\pstar(\cdot\mid s_h,a_h) - \phat(\cdot\mid s_h,a_h)}{1}^2\right]} \notag \\
        &\leq 2RH \epsilon_P(\delta/8) \label{eqn:closeness of loss 3}
    \end{align}
    with probability at least $1-\delta/8$. Taking union bound of the two event \eqref{eqn:closeness of loss 2} and \eqref{eqn:closeness of loss 3}, with probability at least $1-\delta/4$, it holds that
    \begin{align*}
        |\hat{\mathcal{E}}_{\dataTraj}(f; \phat, \rhat) - \mathcal{E}(f; \pstar, \rhat) | 
        &\leq 2RH \sqrt{\frac{HT\log(8|\mathcal{F}|/\delta)}{N}} + 2RH \epsilon_P(\delta/8)
    \end{align*}
    for every $f\in\mathcal{F}$.

    Finally, we conclude the proof by combining the bounds in Step 1 and Step 2. With probability at least $1-\delta/2$, for every $f\in\mathcal{F}$, it hols that
    \begin{align*}
        &\left| \emplossOpt^t(f; \phat, \rhat) - \lossOpt^t(f; \rhat) \right| \\
        &\leq \left|\sum^H_{h=1} \mathbb{E}_{(s_h,a_h)\sim\dataTraj}\left[ f_h\circ\pi^t_h (s_h) - f_h(s_h,a_h)\right] - \sum^H_{h=1} \mathbb{E}_{(s_h,a_h)\sim d^{\piref}_h}\left[ f_h\circ\pi^t_h(s_h) - f_h(s_h,a_h) \right] \right| \\
        &\quad + \left|\hat{\mathcal{E}}_{\dataTraj}(f; \phat, \rhat) - \mathcal{E}(f; \pstar, \rhat) \right| \\
        &\leq 4RH\sqrt{\frac{T\log(8H|\mathcal{F}|/\delta)}{N}} + 2RH \sqrt{\frac{HT\log(8|\mathcal{F}|/\delta)}{N}} + 2RH \epsilon_P(\delta/8) \\
        &\leq 8R\sqrt{\frac{H^3T\log(8H|\mathcal{F}|/\delta)}{N}} + 2RH \epsilon_P(\delta/8).
    \end{align*}
    
\end{proof}

\subsection{Policy Update}

The analysis of the policy update step in Algorithm~\ref{alg:APPO} follows the same argument in Lemma~\ref{lemma:NPG regret, rollout}.

\begin{lemma} \label{lemma:NPG regret}
    For any sequence of functions $\{f^t\}^T_{t=1}$, the policy update (Line 7) in Algorithm~\ref{alg:APPO} guarantees that 
    \begin{align*}
        \frac{1}{T} \sum^T_{t=1} \left( \V{\pistar}{1, r^t}(s_1) - \V{\pi^t}{1, r^t}(s_1) \right) \leq RH\sqrt{\frac{\log|\Aspace|}{2T}}
    \end{align*}
    where $r^t = r^{\pi^t}_{\pstar, f^t}$, i.e. $r^t_h(s,a) = f^t_h(s,a) - \pstar (f^t_{h+1} \circ \pi^t_{h+1})(s,a)$ for all $h\in[H]$ and $(s,a)\in\Sspace\times\Aspace$.
\end{lemma}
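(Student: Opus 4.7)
The key observation that makes this lemma easier than its rollout analogue (Lemma~\ref{lemma:NPG regret, rollout}) is that the induced reward model $r^t = r^{\pi^t}_{\pstar, f^t}$ is constructed so that $f^t$ already satisfies the Bellman equation for $\pi^t$ under $r^t$. My first step would be to verify by backwards induction on $h$ that $f^t_h = \Q{\pi^t}{h, r^t}$: the base case $f^t_{H+1} = 0 = \Q{\pi^t}{H+1, r^t}$ holds by convention, and for the inductive step, substituting the definition $r^t_h = f^t_h - \pstar(f^t_{h+1} \circ \pi^t_{h+1})$ into the Bellman equation $\Q{\pi^t}{h, r^t} = r^t_h + \pstar(\Q{\pi^t}{h+1, r^t} \circ \pi^t_{h+1})$ together with the inductive hypothesis yields $\Q{\pi^t}{h, r^t} = f^t_h$. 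In particular, $\norm{f^t_h}{\infty} \leq R$ by Assumption~\ref{assumption:value function class}.

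Next, I would apply the performance difference lemma (Lemma~\ref{lemma:performance difference lemma}) to expand
\begin{align*}
\V{\pistar}{1, r^t}(s_1) - \V{\pi^t}{1, r^t}(s_1) = \sum_{h=1}^H \mathbb{E}_{s \sim d^{\pistar}_h} \left[ \langle f^t_h(s,\cdot),\, \pistar_h(\cdot\mid s) - \pi^t_h(\cdot\mid s) \rangle \right].
\end{align*}
This is exactly the ``term (I)'' that appears in the proof of Lemma~\ref{lemma:NPG regret, rollout}. Crucially, the estimation-error ``term (II)'' that arose there from the Monte Carlo policy evaluation subroutine is absent here because we directly optimize the action-value function, so $f^t_h$ plays the role of $\Qbar{t}{h}$ exactly rather than approximately.

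The remainder of the argument is the standard mirror-descent / TRPO analysis applied pointwise in $(s,h)$. For each state $s$, I would decompose $\pistar_h - \pi^t_h = (\pistar_h - \pi^{t+1}_h) + (\pi^{t+1}_h - \pi^t_h)$; the second piece is bounded by $\eta R \norm{\pistar_h(\cdot\mid s) - \pi^{t+1}_h(\cdot\mid s)}{1}$ via H\"older using $\norm{f^t_h}{\infty}\leq R$, while the first piece is rewritten using the closed-form update $\eta f^t_h(s,a) = \log Z^t_h(s) + \log \pi^{t+1}_h(a\mid s) - \log \pi^t_h(a\mid s)$ as a telescoping difference $\kldiv{\pistar_h}{\pi^t_h} - \kldiv{\pistar_h}{\pi^{t+1}_h} - \kldiv{\pi^{t+1}_h}{\pi^t_h}$. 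Pinsker absorbs the cross term at the cost of an additive $\eta^2 R^2 / 2$ per round. Summing over $t \in [T]$, the KL terms telescope and are bounded by $\kldiv{\pistar_h}{\pi^1_h} \leq \log|\Aspace|$ since $\pi^1_h$ is uniform, yielding
\begin{align*}
\sum_{t=1}^T \left( \V{\pistar}{1, r^t}(s_1) - \V{\pi^t}{1, r^t}(s_1) \right) \leq \frac{H \log|\Aspace|}{\eta} + \frac{\eta R^2 H T}{2},
\end{align*}
which, under the prescribed choice $\eta = \sqrt{2 \log|\Aspace|/(R^2 T)}$, gives the claimed bound after dividing by $T$.

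No step is a genuine obstacle: the only conceptual point is to recognize that the reparameterization identity $f^t_h = \Q{\pi^t}{h, r^t}$ lets us skip the Monte Carlo evaluation error analysis entirely, so the rest is a verbatim specialization of the part~(I) bound in Lemma~\ref{lemma:NPG regret, rollout}.
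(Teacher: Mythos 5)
Your proposal is correct and follows essentially the same route as the paper's proof: performance difference lemma after identifying $f^t_h = \Q{\pi^t}{h, r^t}$ via the Bellman equation, followed by the standard mirror-descent telescoping with H\"older, Pinsker, and $\kldiv{\pistar_h}{\pi^1_h}\leq\log|\Aspace|$. Your explicit backwards induction establishing $f^t_h = \Q{\pi^t}{h,r^t}$ is a small but welcome elaboration of a step the paper leaves implicit.
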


\begin{proof}
    Since we have the Bellman equation $f^t_h = r^t_h + \pstar_h (f^t_{h+1}\circ \pi^t_{h+1})$ for all $h\in[H]$, we can apply the performance difference lemma (Lemma~\ref{lemma:performance difference lemma}) to obtain
    \begin{align*}
        \sum^T_{t=1} \left( \V{\pistar}{1, r^t}(s_1) - \V{\pi^t}{1, r^t}(s_1) \right)
        =\sum^T_{t=1} \sum^H_{h=1} \mathbb{E}_{\pistar}\left[  \langle f^t_h(s_h,\cdot), \pistar_h(\cdot\mid s_h) - \pi^t_h(\cdot\mid s_h) \rangle \right].
    \end{align*}
    Rearranging the inner product term, we see that
    \begin{align}
        &\langle \eta f^t_h(s_h,\cdot), \pistar_h(\cdot\mid s) - \pi^t_h(\cdot\mid s) \rangle \notag \\
        &\langle \eta f^t_h(s_h,\cdot), \pistar_h(\cdot\mid s) - \pi^{t+1}_h(\cdot\mid s) \rangle + \langle \eta f^t_h(s_h,\cdot), \pi^{t+1}_h(\cdot\mid s) - \pi^{t}_h(\cdot\mid s) \rangle \notag \\
        &\leq \langle \eta f^t_h(s_h,\cdot), \pistar_h(\cdot\mid s) - \pi^{t+1}_h(\cdot\mid s) \rangle + \eta \norm{f^t_h(s_h,\cdot)}{\infty} \norm{\pistar_h(\cdot\mid s) - \pi^{t+1}_h(\cdot\mid s)}{1} \notag \\
        &\leq \langle \eta f^t_h(s_h,\cdot), \pistar_h(\cdot\mid s) - \pi^{t+1}_h(\cdot\mid s) \rangle + \eta R \norm{\pistar_h(\cdot\mid s) - \pi^{t+1}_h(\cdot\mid s)}{1} \label{eqn:NPG error 1}
    \end{align}
    where we use H\"older's inequality with the fact that $\norm{f^t_h}{\infty}\leq R$. Now recall that the policy update step in Algorithm~\ref{alg:PE oracle} leads to
    \begin{align*}
        \pi^{t+1}_h(\cdot\mid s) = \frac{1}{Z^t_h(s)} \pi^t_h(\cdot\mid s) \exp\left( \eta f^t_h(s,\cdot) \right)
    \end{align*}
    where $Z^t_h(s) = \sum_{a\in\Aspace} \pi^t_h(a \mid s) \exp\left( \eta f^t_h(s,a) \right)$. Using the relationship $\eta f^t_h(s,a) = \log Z^t_h(s) + \log \pi^{t+1}_h(a\mid s) - \log \pi^{t}_h(a\mid s)$, it holds that
    
    \begin{align*}
        &\langle \eta f^t_h(s_h,\cdot), \pistar_h(\cdot\mid s) - \pi^{t+1}_h(\cdot\mid s) \rangle \\
        &= \langle \log Z^t_h(s) + \log \pi^{t+1}_h(\cdot\mid s) - \log \pi^{t}_h(\cdot\mid s), \pistar_h(\cdot\mid s) - \pi^{t+1}_h(\cdot\mid s) \rangle \\
        &= \langle  \log \pi^{t+1}_h(\cdot\mid s) - \log \pi^{t}_h(\cdot\mid s), \pistar(\cdot\mid s) - \pi^{t+1}_h(\cdot\mid s) \rangle \\
        &= \langle  \log \pi^{t+1}_h(\cdot\mid s) - \log \pi^{t}_h(\cdot\mid s), \pistar(\cdot\mid s) \rangle - \kldiv{\pi^{t+1}_h(\cdot\mid s)}{\pi^{t}_h(\cdot\mid s)} \\
        &= \langle  \log \frac{\pi^{t+1}_h(\cdot\mid s)}{\pistar_h(\cdot\mid s)} +  \log \frac{\pistar_h(\cdot\mid s)}{\pi^{t}_h(\cdot\mid s)}, \pistar_h(\cdot\mid s) \rangle - \kldiv{\pi^{t+1}_h(\cdot\mid s)}{\pi^{t}_h(\cdot\mid s)} \\
        &= \kldiv{\pistar_h(\cdot\mid s)}{\pi^t_h(\cdot\mid s)} - \kldiv{\pistar_h(\cdot\mid s)}{\pi^{t+1}_h(\cdot\mid s)} - \kldiv{\pi^{t+1}_h(\cdot\mid s)}{\pi^{t}_h(\cdot\mid s)} \\
        &\leq \kldiv{\pistar_h(\cdot\mid s)}{\pi^t_h(\cdot\mid s)} - \kldiv{\pistar_h(\cdot\mid s)}{\pi^{t+1}_h(\cdot\mid s)} - \frac{1}{2}\norm{\pistar_h(\cdot\mid s) - \pi^{t+1}_h(\cdot\mid s)}{1}^2
    \end{align*}
    where the second equality holds since $Z^t_h(s)$ is a constant given $s$, and the last inequality holds due to Pinsker's inequality. Combining this bound with \eqref{eqn:NPG error 1}, we obtain
    
    \begin{align*}
        & \sum^T_{t=1} \langle \eta f^t_h(s_h,\cdot), \pistar_h(\cdot\mid s) - \pi^t_h(\cdot\mid s) \rangle \\
        &= \sum^T_{t=1} \left( \kldiv{\pistar_h(\cdot\mid s)}{\pi^t_h(\cdot\mid s)} - \kldiv{\pistar_h(\cdot\mid s)}{\pi^{t+1}_h(\cdot\mid s)} \right) \\
        &\quad + \sum^T_{t=1} \left( \eta R \norm{\pistar_h(\cdot\mid s) - \pi^{t+1}_h(\cdot\mid s)}{1} - \frac{1}{2}\norm{\pistar_h(\cdot\mid s) - \pi^{t+1}_h(\cdot\mid s)}{1}^2 \right) \\
        &\leq \sum^T_{t=1} \left( \kldiv{\pistar_h(\cdot\mid s)}{\pi^t_h(\cdot\mid s)} - \kldiv{\pistar_h(\cdot\mid s)}{\pi^{t+1}_h(\cdot\mid s)} \right) + \sum^T_{t=1} \frac{\eta^2 R^2}{2} \\
        &= \kldiv{\pistar_h(\cdot\mid s)}{\pi^1_h(\cdot\mid s)} - \kldiv{\pistar_h(\cdot\mid s)}{\pi^{T+1}_h(\cdot\mid s)} + \frac{\eta^2 R^2 T}{2} \\
        &\leq \log|\Aspace| + \frac{\eta^2 R^2 T}{2}
    \end{align*}
    where the first inequality holds since $\forall x\in\Real \, ax - x^2/2 \leq a^2/2$, and the second inequality holds due to the fact that $\pi^1_h = \text{Unif}(\Aspace)$. Finally, setting $\eta = \sqrt{\frac{2\log|\Aspace|}{R^2 T}}$, we have
    
    \begin{align*}
         \sum^T_{t=1} \V{\pistar}{1, r^t} - \V{\pi^t}{1, r^t}
        &=\sum^T_{t=1}  \mathbb{E}_{\pistar}\left[ \sum^H_{h=1} \langle f^t_h(s_h,\cdot), \pistar_h(\cdot\mid s_h) - \pi^t_h(\cdot\mid s_h) \rangle \right] \\
        &=\sum^H_{h=1}  \mathbb{E}_{\pistar}\left[ \sum^T_{t=1} \langle f^t_h(s_h,\cdot), \pistar_h(\cdot\mid s_h) - \pi^t_h(\cdot\mid s_h) \rangle \right] \\
        &\leq \sum^H_{h=1} \left( \frac{\log|\Aspace|}{\eta} + \frac{\eta R^2 T}{2} \right) = RH\sqrt{\frac{T\log|\Aspace|}{2}}.
    \end{align*}

\end{proof}

Finally, we prove Theorem~\ref{thm:upper bound}.

\begin{proof}[Proof of Theorem~\ref{thm:upper bound}]
    For simplicity, we write $r^t = r^{\pi^t}_{\pstar, f^t}$, i.e. $r^t_h(s,a) = f^t_h(s,a) - \pstar_h (f^t_{h+1} \circ \pi^t_{h+1})(s,a)$ for all $(s,a)\in\Sspace\times\Aspace$ and $h\in[H]$. The condition $r^t\in\Rset^H$ is not required; we only rely on the boundedness $\norm{r^t_h}{\infty}\leq R$ for all $h$, which Assumption~\ref{assumption:value function class} guarantees.
    
    Condition on the events in Lemma~\ref{lemma:reward MLE error} (with $\delta'=\delta/2$) and Lemma~\ref{lemma:optimization error}, which hold simultaneously with probability at least $1-\delta$. Consider the following sub-optimality decomposition at step $t$:
    \begin{align}
        \V{\pistar}{1, \rstar} - \V{\pi^t}{1, \rstar} \notag
        &= \V{\pistar}{1, \rstar} - \V{\pistar}{1, \rhat} + \V{\pistar}{1, \rhat} - \V{\pistar}{1, r^t} + \V{\pistar}{1, r^t} - \V{\pi^t}{1, \rstar} + \V{\pi^t}{1, r^t} - \V{\pi^t}{1, r^t} \notag \\
        &= \underbrace{\V{\pistar}{1,\rstar-\rhat} - \V{\piref}{1,\rstar-\rhat}}_{\text{(I) : MLE estimation error}} \notag \\
        &\quad + \underbrace{ \V{\pistar}{1,\rhat-r^t} - \V{\piref}{1,\rhat-r^t} - \V{\pi^t}{1,\rstar} + \V{\piref}{1, \rstar} + \V{\pi^t}{1,r^t} - \V{\piref}{1, r^t}}_{\text{(II) : Optimization error}} \notag \\
        &\quad + \underbrace{\V{\pistar}{1, r^t} - \V{\pi^t}{1, r^t} }_{\text{(III) : Policy update regret}}, \label{eqn:upper bound 1}
    \end{align}
    where we omit the initial state $s_1$ for simplicity. 

    \paragraph{Bounding (I).}
    Using Lemma~\ref{lemma:reward MLE error}, the MLE estimation error is bounded by:
    \begin{align*}
        \text{(I)} &= \V{\pistar}{1,\rstar-\rhat} - \V{\piref}{1,\rstar-\rhat} \\
        &= \mathbb{E}_{\tau^0\sim \pistar, \tau^1 \sim \piref} \left[ \rstar(\tau^0) - \rstar(\tau^1) - \rhat(\tau^0) + \rhat(\tau^1)  \right] \\
        &\leq \sqrt{\mathbb{E}_{\tau^0\sim \pistar, \tau^1 \sim \piref} \left[ | \rstar(\tau^0) - \rstar(\tau^1) - \rhat(\tau^0) + \rhat(\tau^1) |^2  \right]} \\
        &\leq \sqrt{\Ctr \mathbb{E}_{\tau^0, \tau^1 \sim \piref} \left[ | \rstar(\tau^0) - \rstar(\tau^1) - \rhat(\tau^0) + \rhat(\tau^1) |^2  \right]} \\
        &\leq \sqrt{\Ctr} \epsilon_{r}(\delta/2)
    \end{align*}

    \paragraph{Bounding (II).}
    We can relate the terms $\V{\pistar}{1,\rhat-r^t} - \V{\piref}{1,\rhat-r^t}$ to the trajectory-pair $\ell_1$ loss $\mathcal{E}(f^t; \pstar, \rhat)$. By Assumption~\ref{assumption:concentrability}, we have that
    \begin{align*}
       &\V{\pistar}{1,\rhat-r^t} - \V{\piref}{1,\rhat-r^t}  \\
        &= \mathbb{E}_{\tau^0\sim \pistar, \tau^1 \sim \piref} \left[ \rhat(\tau^0) - \rhat(\tau^1) - r^t(\tau^0) + r^t(\tau^1)  \right]  \\
        &\leq \Ctr \mathbb{E}_{\tau^0, \tau^1 \sim \piref} \left[ | \rhat(\tau^0) - \rhat(\tau^1) - r^t(\tau^0) + r^t(\tau^1) |  \right] \\
        &= \Ctr \mathbb{E}_{\tau^0, \tau^1 \sim \piref} \left[ | r^{\pi^t}_{\pstar,f^t}(\tau^0) - r^{\pi^t}_{\pstar,f^t}(\tau^1) - \rhat(\tau^0) + \rhat(\tau^1) |  \right] \\
        &= \Ctr \mathcal{E}(f^t; \pstar, \rhat) \leq \lambda \mathcal{E}(f^t; \pstar, \rhat)
    \end{align*}
    where the last inequality holds since $\mathcal{E}(f^t; \pstar, \rhat)$ is non-negative and $\lambda \geq \Ctr$. Further, Lemma~\ref{lemma:optimization error} and Lemma~\ref{lemma:performance difference lemma} implies
    \begin{align*}
        \lambda \mathcal{E}(f^t; \pstar, \rhat) 
        &\leq \sum^H_{h=1} \mathbb{E}_{(s_h,a_h)\sim d^{\piref}_h}\left[ \Q{\pi^t}{h}\circ\pi^t_h (s_h) - \Q{\pi^t}{h}(s_h, a_h) \right]  + \lambda \mathcal{E}(\Q{\pi^t}{}; \pstar, \rhat) \\
        &\quad - \sum^H_{h=1} \mathbb{E}_{(s_h,a_h)\sim d^{\piref}_h}\left[ f^t_h\circ\pi^t_h (s_h) - f^t_h(s_h, a_h) \right] + 2\epsilon_{approx}\\
        &= \left( \V{\pi^t}{1,\rstar} - \V{\piref}{1, \rstar} \right) + \lambda \mathcal{E}(\Q{\pi^t}{}; \pstar, \rhat) -\left( \V{\pi^t}{1,r^t} - \V{\piref}{1, r^t} \right) + 2\epsilon_{approx}.
    \end{align*}

    On the other hand, note that
    \begin{align*}
        r^{\pi^t}_{\pstar, \Q{\pi^t}{}}(\tau) &= \sum^H_{h=1} \left( \Q{\pi^t}{h}(s_h,a_h) - \pstar (\Q{\pi^t}{h+1}\circ \pi^t_{h+1})(s_h,a_h) \right) \\
        &= \sum^H_{h=1} \left( \Q{\pi^t}{h}(s_h,a_h) - \pstar\V{\pi^t}{h+1}(s_h,a_h) \right) \\
        &= \sum^H_{h=1} \rstar_h(s_h,a_h) = \rstar(\tau)
    \end{align*}
    for any $\tau = (s_1,a_1,\dots,s_H,a_H)$, i.e. $r^{\pi^t}_{\pstar, \Q{\pi^t}{}} = \rstar$. Thus, we have
    \begin{align*}
        \lambda \mathcal{E}(\Q{\pi^t}{}; \pstar, \rhat) &=
        \lambda \mathbb{E}_{(\tau_0, \tau_1) \sim \piref}\left[ \left| \{\rstar(\tau^0) - \rstar(\tau^1)\} - \{\rhat(\tau^0) - \rhat(\tau^1)\} \right| \right] \\
        &\leq \lambda \sqrt{ \mathbb{E}_{(\tau_0, \tau_1) \sim \piref}\left[ \left| \{\rstar(\tau^0) - \rstar(\tau^1)\} - \{\rhat(\tau^0) - \rhat(\tau^1)\} \right|^2 \right] } \leq \lambda \epsilon_r (\delta/2)
    \end{align*}
    where the inequality follows from Lemma~\ref{lemma:reward MLE error}. Combining the results, we obtain
    \begin{align*}
        \text{(II)} &= \left(\V{\pistar}{1,\rhat-r^t} - \V{\piref}{1,\rhat-r^t} \right) - \left( \V{\pi^t}{1,\rstar}  - \V{\piref}{1, \rstar}\right) + \left(\V{\pi^t}{1,r^t} - \V{\piref}{1, r^t} \right) \\
        &\leq \lambda \mathcal{E}(\Q{\pi^t}{}; \pstar, \rhat) + 2\epsilon_{approx} \leq \lambda \epsilon_r(\delta/2) + 2\epsilon_{approx}
    \end{align*}

    \paragraph{Bounding Sub-optimality.}
    Finally, we bound the sub-optimality $\V{\pistar}{1, \rstar} - \V{\hat{\pi}}{1, \rstar}$.
    Putting the bounds on (I) and (II) into \eqref{eqn:upper bound 1}, we have
    \begin{align*}
        &\V{\pistar}{1, \rstar} - \V{\pi^t}{1, \rstar} \notag \\
        &\leq \sqrt{\Ctr} \epsilon_{r}(\delta/2) + \lambda \epsilon_r(\delta/2) + 2\epsilon_{approx} + \V{\pistar}{1, r^t} - \V{\pi^t}{1, r^t}
    \end{align*}
    
    Since Algorithm~\ref{alg:APPO} returns the mixture policy $\hat{\pi} = \frac{1}{T}\sum^T_{t=1} \pi^t$, the sub-optimality is $\V{\pistar}{1, \rstar} - \V{\hat{\pi}}{1, \rstar} = \frac{1}{T} \sum^T_{t=1} \left( \V{\pistar}{1, \rstar} - \V{\pi^t}{1, \rstar} \right)$. Using the bounds we derived and Lemma~\ref{lemma:NPG regret}, it holds that
    
    \begin{align*}
        &\V{\pistar}{1, \rstar} - \V{\hat{\pi}}{1, \rstar} \\
        &= \frac{1}{T} \sum^T_{t=1} \left( \V{\pistar}{1, \rstar} - \V{\pi^t}{1, \rstar} \right) \\
        &\leq \sqrt{\Ctr} \epsilon_{r}(\delta/2) + \lambda \epsilon_r(
        \delta/2
        ) + 2\epsilon_{approx} + \frac{1}{T} \sum^T_{t=1} \left( \V{\pistar}{1, r^t} - \V{\pi^t}{1, r^t} \right) \\
        &\leq \sqrt{\Ctr} \epsilon_{r}(\delta/2) + \lambda \epsilon_r(\delta/2) + 2\epsilon_{approx} + RH\sqrt{\frac{\log|\Aspace|}{2T}} \\
        &\leq \mathcal{O}\left( \Ctr \sqrt{\frac{ \kappa^2 H \log(|\Rset|/\delta)}{M}} + R \sqrt{\frac{H^3T\log(H|\mathcal{F}|/\delta)}{N}} + RH\sqrt{\frac{ \log(H|\Pset|/\delta)}{N}} + RH\sqrt{\frac{\log|\Aspace|}{T}} \right) \\
        &\leq \mathcal{O}\left( \Ctr \sqrt{\frac{ \kappa^2 H \log(|\Rset|/\delta)}{M}} + RH \sqrt{\frac{\max\{HT\log(H|\mathcal{F}|/\delta),  \log(H|\Pset|/\delta)\}}{N}} + RH\sqrt{\frac{\log|\Aspace|}{T}} \right).
    \end{align*}
    
\end{proof}

\section{Supporting Lemmas}

\begin{lemma}[Performance Difference Lemma~\citep{kakade2002approximately}] \label{lemma:performance difference lemma}
    Let $P$ be any transition probability, and denote the corresponding value function by $V$. Let $\pi, \tilde{pi}$ be any policies. For any reward $r$, we have that
    \begin{align*}
        \V{\pi}{1, r}(s_1) - \V{\tilde{\pi}}{1, r}(s_1) = \sum^H_{h=1}\mathbb{E}_{s_h\sim d^{\pi}_h}\left[ \langle \Q{\tilde{\pi}}{h, r}(s_h,\cdot), \pi(\cdot\mid s_h) - \tilde{\pi}(\cdot\mid s_h) \rangle \right]
    \end{align*}
\end{lemma}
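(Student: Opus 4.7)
The plan is to prove the identity by the standard telescoping argument underlying the Kakade--Langford performance difference lemma. The key idea is that, along a trajectory generated by $\pi$, the sequence $\{\V{\tilde{\pi}}{h,r}(s_h)\}_{h=1}^{H+1}$ telescopes, and subtracting this telescoping sum from the definition of $\V{\pi}{1,r}(s_1)$ exposes a per-step Bellman residual for $\tilde{\pi}$ that can be rewritten as an inner product against $\pi - \tilde{\pi}$.

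First I would use the terminal condition $\V{\tilde{\pi}}{H+1,r}(s)=0$ to write, for trajectories $(s_1,a_1,\dots,s_H,a_H)$ sampled by $\pi$ from the fixed initial state $s_1$,
\begin{align*}
\V{\tilde{\pi}}{1,r}(s_1) \;=\; \mathbb{E}_{\pi}\!\left[\sum_{h=1}^{H}\big(\V{\tilde{\pi}}{h,r}(s_h) - \V{\tilde{\pi}}{h+1,r}(s_{h+1})\big)\right].
\end{align*}
Combining this with $\V{\pi}{1,r}(s_1)=\mathbb{E}_{\pi}[\sum_{h=1}^{H} r_h(s_h,a_h)]$ yields
\begin{align*}
\V{\pi}{1,r}(s_1) - \V{\tilde{\pi}}{1,r}(s_1) \;=\; \sum_{h=1}^{H}\mathbb{E}_{\pi}\!\left[r_h(s_h,a_h) + \V{\tilde{\pi}}{h+1,r}(s_{h+1}) - \V{\tilde{\pi}}{h,r}(s_h)\right].
\end{align*}

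Next I would apply the tower property by conditioning on $(s_h,a_h)$. Since $s_{h+1}\sim P_h(\cdot\mid s_h,a_h)$, the Bellman equation for $\tilde{\pi}$ gives $\mathbb{E}[r_h(s_h,a_h) + \V{\tilde{\pi}}{h+1,r}(s_{h+1}) \mid s_h, a_h] = \Q{\tilde{\pi}}{h,r}(s_h,a_h)$. Under $\pi$ the marginal of $s_h$ is $\density{\pi}{h}$ and, conditional on $s_h$, the action is drawn from $\pi(\cdot\mid s_h)$, so
\begin{align*}
\mathbb{E}_{\pi}\!\left[\Q{\tilde{\pi}}{h,r}(s_h,a_h) - \V{\tilde{\pi}}{h,r}(s_h)\right]
\;=\; \mathbb{E}_{s_h\sim \density{\pi}{h}}\!\left[\langle \Q{\tilde{\pi}}{h,r}(s_h,\cdot),\,\pi(\cdot\mid s_h)\rangle - \V{\tilde{\pi}}{h,r}(s_h)\right].
\end{align*}
The proof concludes by using $\V{\tilde{\pi}}{h,r}(s_h) = \langle \Q{\tilde{\pi}}{h,r}(s_h,\cdot),\,\tilde{\pi}(\cdot\mid s_h)\rangle$ to fold the subtracted term into the same inner product, producing $\langle \Q{\tilde{\pi}}{h,r}(s_h,\cdot),\,\pi(\cdot\mid s_h)-\tilde{\pi}(\cdot\mid s_h)\rangle$ and matching the claimed identity after summing over $h$.

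Since this is a textbook computation there is no real technical obstacle; the only care needed is the bookkeeping that the telescoping expectation is taken under the \emph{sampling} policy $\pi$, so that the marginal over $s_h$ is $\density{\pi}{h}$, while the value and action-value functions being telescoped remain those of the \emph{evaluation} policy $\tilde{\pi}$. This asymmetry between sampling and evaluation is exactly what makes the identity a useful tool for analyzing policy-improvement steps, such as the TRPO-style update used in Algorithm~\ref{alg:APPO}.
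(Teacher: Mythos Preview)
Your argument is correct and is essentially the same as the paper's: both use the Bellman equation for $\tilde{\pi}$ together with $\V{\tilde{\pi}}{h,r}(s_h)=\langle \Q{\tilde{\pi}}{h,r}(s_h,\cdot),\tilde{\pi}(\cdot\mid s_h)\rangle$ to telescope under $\mathbb{E}_{\pi}$; the paper writes this as a step-by-step recursion while you write out the full telescoping sum first, but the content is identical.
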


\begin{proof}
    Recursively applying the Bellman equation, we obtain
    \begin{align*}
        \V{\pi}{1, r}(s_1) - \V{\tilde{\pi}}{1, r}(s_1) &= \E_{\pi}[r(s_1,a_1) + \V{\pi}{2,r}(s_2)] - \E_{\pi}[\V{\tilde{\pi}}{1,r}(s_1)] \\
        &=\E_{\pi}[\Q{\tilde{\pi}}{1, r}(s_1, a_1) - \V{\tilde{\pi}}{2, r}(s_2) + \V{\pi}{2,r}(s_2)] - \E_{\pi}[\V{\tilde{\pi}}{1,r}(s_1)] \\
        &= \E_{\pi}[\Q{\tilde{\pi}}{1, r}(s_1, a_1) - \V{\tilde{\pi}}{1,r}(s_1)] + \E_{\pi}[\V{\pi}{2,r}(s_2) - \V{\tilde{\pi}}{2, r}(s_2)] \\
        &=\E_{\pi}[ \langle \Q{\tilde{\pi}}{1, r}(s_1, \cdot), \pi(\cdot\mid s_1) - \tilde{\pi}(\cdot\mid s_1) \rangle ] + \E_{\pi}[\V{\pi}{2,r}(s_2) - \V{\tilde{\pi}}{2, r}(s_2)] \\
        &= \cdots \\
        &= \sum^H_{h=1}\mathbb{E}_{s_h\sim d^{\pi}_h}\left[ \langle \Q{\tilde{\pi}}{h, r}(s_h,\cdot), \pi(\cdot\mid s_h) - \tilde{\pi}(\cdot\mid s_h) \rangle \right].
    \end{align*}
\end{proof}


\begin{lemma}[Lemma 2 in \cite{zhan2024provable}]\label{lemma:reward MLE error}
    With probability at least $1-\delta'$, we have
    \begin{align*}
        \mathbb{E}_{\tau^0,\tau^1\sim \piref}\left[ |(\rhat(\tau^0)-\rhat(\tau^1)) - (\rstar(\tau^0)-\rstar(\tau^1))|^2 \right] \leq \frac{c_1 \kappa^2 H \log(|\Rset|/\delta')}{M} := \epsilon_{r}^2(\delta')
    \end{align*}
\end{lemma}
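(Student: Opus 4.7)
The statement is the standard MLE guarantee for the Bradley-Terry-style preference model applied to trajectory returns, so my plan is to run the standard MLE-to-Hellinger argument and then transfer the Hellinger control to an $L_2$ bound on the centered-return difference via the link-function regularity. Write $g^\star(\tau^0,\tau^1):=\rstar(\tau^1)-\rstar(\tau^0)$ and $\hat g(\tau^0,\tau^1):=\rhat(\tau^1)-\rhat(\tau^0)$, so the preference label is $y\mid(\tau^0,\tau^1)\sim\mathrm{Bern}(\Phi(g^\star))$, and $\rhat$ is the MLE over $\Rset^H$, whose induced difference class $\{g_r : r\in\Rset^H\}$ has cardinality at most $|\Rset|^H$. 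Because $r(\tau)\in[0,R]$ by Assumption~\ref{assumption:reward function class}, both $g^\star$ and $\hat g$ take values in $[-R,R]$.

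\textbf{Step 1: MLE gives a squared-Hellinger bound.} The first step is to invoke the standard finite-class MLE concentration lemma for conditional densities (e.g.\ the Zhang/van de Geer bound used throughout offline RL): with probability at least $1-\delta'$,
\begin{equation*}
\mathbb{E}_{(\tau^0,\tau^1)\sim\piref}\!\left[D_H^2\!\big(\mathrm{Bern}(\Phi(\hat g)),\,\mathrm{Bern}(\Phi(g^\star))\big)\right]\;\le\;\frac{c\,\log(|\Rset|^H/\delta')}{M}\;=\;\frac{c\,H\log(|\Rset|/\delta')}{M},
\end{equation*}
where the $H$ in the numerator comes exactly from $\log|\Rset^H|$. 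This is where realizability ($\rstar\in\Rset^H$) is used, to ensure the true parameter lies in the class.

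\textbf{Step 2: Hellinger-to-parameter and parameter-to-return.} The squared Hellinger distance between two Bernoullis satisfies the elementary bound $D_H^2(\mathrm{Bern}(p),\mathrm{Bern}(q))\ge\tfrac14(p-q)^2$ (since $(\sqrt p+\sqrt q)^2\le 4$). Applied pointwise in $(\tau^0,\tau^1)$ with $p=\Phi(\hat g)$, $q=\Phi(g^\star)$, this yields
\begin{equation*}
\big(\Phi(\hat g(\tau^0,\tau^1))-\Phi(g^\star(\tau^0,\tau^1))\big)^2\;\le\;4\,D_H^2\!\big(\mathrm{Bern}(\Phi(\hat g)),\mathrm{Bern}(\Phi(g^\star))\big).
\end{equation*}
Next, because $\Phi$ is monotone with $\Phi'(x)\ge 1/\kappa$ on $[-R,R]$, the mean value theorem gives $|\Phi(a)-\Phi(b)|\ge \kappa^{-1}|a-b|$ for $a,b\in[-R,R]$, so $(a-b)^2\le \kappa^2(\Phi(a)-\Phi(b))^2$. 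Applied with $a=\hat g$ and $b=g^\star$ and combined with the previous display and Step~1, taking expectations over $\piref$ gives
\begin{equation*}
\mathbb{E}_{(\tau^0,\tau^1)\sim\piref}\!\left[(\hat g-g^\star)^2\right]\;\le\;4\kappa^2\cdot\frac{c\,H\log(|\Rset|/\delta')}{M},
\end{equation*}
which is exactly the claimed bound with $c_1:=4c$, after unpacking $\hat g-g^\star=(\rhat(\tau^1)-\rhat(\tau^0))-(\rstar(\tau^1)-\rstar(\tau^0))$.

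\textbf{Main obstacle.} Nothing here is genuinely difficult; the one delicate point is Step~1, where one must be careful that the standard MLE lemma is stated for conditional densities and produces a squared-Hellinger guarantee on the joint distribution of $(\tau^0,\tau^1,y)$ (so the expectation lands on $\piref\otimes\piref$), not on a marginal. The rest is mechanical: the $\kappa^2$ factor is entirely absorbed in the link-function inversion, and the $H$ factor is entirely absorbed in the log-cardinality of $\Rset^H$.
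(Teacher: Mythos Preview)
The paper does not prove this lemma itself; it is stated as a supporting lemma and attributed directly to \cite{zhan2024provable} without an accompanying argument. So there is no ``paper's own proof'' to compare against here.

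Your proposal is correct and follows exactly the standard route used for this type of result (and, in particular, the route in \cite{zhan2024provable}): MLE-to-Hellinger concentration for the conditional Bernoulli model, then the elementary lower bound $D_H^2(\mathrm{Bern}(p),\mathrm{Bern}(q))\gtrsim (p-q)^2$, then inversion of the link function via $\Phi'\ge 1/\kappa$ on $[-R,R]$. One cosmetic slip: $\log(|\Rset|^H/\delta')=H\log|\Rset|+\log(1/\delta')$, which is not literally equal to $H\log(|\Rset|/\delta')$ but is upper bounded by it, so the displayed bound still holds (or the discrepancy is absorbed into $c_1$). Otherwise the argument is complete.
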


\begin{lemma}[Lemma 3 in \cite{zhan2024provable}]\label{lemma:transition MLE error}
    With probability at least $1-\delta'$, for all $h\in[H]$, it holds that
    \begin{align*}
        \mathbb{E}_{(s_h,a_h)\sim d^{\piref}_h}\left[ \norm{\phat_h(\cdot\mid s_h,a_h) - \pstar(\cdot\mid s_h,a_h)}{1}^2 \right] \leq \frac{c_2 \log(H|\Pset|/\delta')}{N} := \epsilon_{P}^2(\delta')
    \end{align*}
    where $c_2$ is an absolute constant.
\end{lemma}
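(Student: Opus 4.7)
The plan is to prove this in-expectation $L_1$-squared transition estimation bound via the standard maximum likelihood estimation concentration argument for conditional densities, followed by the Hellinger-to-$L_1$ conversion. For each fixed $h \in [H]$, the MLE $\phat_h$ is computed from $N$ i.i.d. samples $\{(s^n_h, a^n_h, s^n_{h+1})\}_{n=1}^N$ extracted from $\dataTraj$, with $(s^n_h, a^n_h) \sim d^{\piref}_h$ and $s^n_{h+1} \sim \pstar_h(\cdot \mid s^n_h, a^n_h)$. By Assumption~\ref{assumption:transition function class}, $\pstar_h \in \Pset$, so the model is realizable, which is the pivotal property used below.

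The core step is an in-expectation Hellinger concentration bound for MLE over the finite class $\Pset$. For each $P \in \Pset$, define $Z_n(P) = \sqrt{P(s^n_{h+1} \mid s^n_h, a^n_h) / \pstar_h(s^n_{h+1} \mid s^n_h, a^n_h)}$. A direct computation gives
\begin{align*}
\mathbb{E}[Z_n(P)] = \mathbb{E}_{(s,a)\sim d^{\piref}_h}\Bigl[\int \sqrt{P(s'\mid s,a)\,\pstar_h(s'\mid s,a)}\,ds'\Bigr] = 1 - \tfrac{1}{2}\,\mathbb{E}_{(s,a)\sim d^{\piref}_h}\bigl[H^2(P,\pstar_h)\bigr],
\end{align*}
where $H^2$ denotes squared Hellinger distance. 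By independence and $1-x \le e^{-x}$, $\mathbb{E}[\prod_n Z_n(P)] \le \exp(-\tfrac{N}{2}\mathbb{E}[H^2(P,\pstar_h)])$. Applying Markov's inequality to $\prod_n Z_n(P)^{\,}$ and union bounding over $\Pset$ with failure probability $\delta'/H$ gives that, with probability at least $1-\delta'/H$, for every $P \in \Pset$,
\begin{align*}
\tfrac{1}{2}\sum_n \log \frac{\pstar_h(s^n_{h+1}\mid s^n_h,a^n_h)}{P(s^n_{h+1}\mid s^n_h,a^n_h)} \;\ge\; \tfrac{N}{2}\,\mathbb{E}_{(s,a)\sim d^{\piref}_h}\bigl[H^2(P,\pstar_h)\bigr] - \log\bigl(H|\Pset|/\delta'\bigr).
\end{align*}
Specializing to $P = \phat_h$ and invoking the MLE optimality together with realizability ($\pstar_h \in \Pset$, so $\sum_n \log \phat_h \ge \sum_n \log \pstar_h$), the left-hand side is nonpositive. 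Rearranging yields $\mathbb{E}_{(s,a)\sim d^{\piref}_h}[H^2(\phat_h,\pstar_h)] \le 2\log(H|\Pset|/\delta')/N$.

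To finish, I would apply the pointwise inequality $\norm{P(\cdot\mid s,a) - Q(\cdot\mid s,a)}{1}^2 \le 4\,H^2(P(\cdot\mid s,a), Q(\cdot\mid s,a))$ (which follows from Cauchy-Schwarz: $\int|P-Q| = \int|\sqrt{P}-\sqrt{Q}|\cdot|\sqrt{P}+\sqrt{Q}| \le H(P,Q)\sqrt{2+2\int\sqrt{PQ}} \le 2H(P,Q)$). Taking expectation over $d^{\piref}_h$ converts the Hellinger bound into the claimed $L_1^2$ bound with $c_2 = 8$. Finally, a union bound across $h \in [H]$ yields the statement for all $h$ simultaneously. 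The main subtlety to handle carefully is the asymmetric MLE optimality step: one must exploit realizability to ensure the empirical log-likelihood of $\pstar_h$ is dominated by that of $\phat_h$, since without this the concentration inequality gives no meaningful control. The Cauchy-Schwarz passage from Hellinger to $L_1$ is routine, and extension to infinite $\Pset$ replaces $|\Pset|$ with a bracketing number, as the paper notes.
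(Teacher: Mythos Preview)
Your argument is correct and is precisely the standard MLE-in-Hellinger concentration proof (Markov's inequality on the product of square-root likelihood ratios, then the Cauchy--Schwarz conversion to total variation) that underlies this type of result. The paper itself does not prove this lemma at all---it simply imports it as Lemma~3 of \cite{zhan2024provable}---so there is nothing to compare against beyond noting that your proof supplies the details the paper omits by citation.
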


\begin{lemma}[Lemma 15 in \citet{song2023hybrid}]\label{lemma:least square}
    Fix any $B>0$, $\delta\in (0,1)$ and assume we have a class of real-valued functions $\mathcal{H} : \mathcal{X} \rightarrow [-B,B]$. Suppose we have $K$ i.i.d. samples $\{(x_k,y_k)\}^K_{k=1}$ where $x_k \sim \rho$ and $y_k = h^\star(x_k) + \epsilon_k$ where $h^\star \in \mathcal{H}$ and $\{\epsilon_k\}^K_{k=1}$ are independent random variables such that $\mathbb{E}[\epsilon_k \mid x_k] = 0$. Additionally, suppose that $\max_{k} |y_k| \leq R$ and $\sup_{x\in\mathcal{X}} |h^\star(x)| \leq B$. Then, with probability at least $1-\delta$, the least square estimator $\hat{h} \in \argmin_{h\in\mathcal{H}} \sum^K_{k=1} (h(x_k)-y_k)^2$ satisfies:
    \begin{align*}
        \mathbb{E}_{x\sim\rho}\left[ \left( \hat{h}(x) - h^\star(x) \right)^2 \right] \leq \frac{c_2 B^2 \log(|\mathcal{H}|/\delta)}{K}
    \end{align*}
    where $c_2$ is an absolute constant.
\end{lemma}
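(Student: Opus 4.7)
\textbf{Proof proposal for Lemma~\ref{lemma:least square}.}
The plan is to run the standard ``basic inequality plus Bernstein'' argument for least-squares with a finite class. First I would start from the optimality of the empirical risk minimizer $\hat h$: for any $h\in\mathcal H$,
\[
\sum_{k=1}^{K}(\hat h(x_k)-y_k)^2 \;\le\; \sum_{k=1}^{K}(h(x_k)-y_k)^2,
\]
and in particular for $h=h^\star$. Plugging in $y_k=h^\star(x_k)+\epsilon_k$ and expanding the squares, the quadratic term in $h^\star$ cancels and one obtains the ``basic inequality''
\[
\sum_{k=1}^{K}\bigl(\hat h(x_k)-h^\star(x_k)\bigr)^2 \;\le\; 2\sum_{k=1}^{K}\epsilon_k\bigl(\hat h(x_k)-h^\star(x_k)\bigr).
\]
This reduces everything to controlling the right-hand side uniformly over $\mathcal H$.

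Next I would bound the noise cross-term by Bernstein's inequality applied to $Z_k(h):=\epsilon_k(h(x_k)-h^\star(x_k))$ for each fixed $h\in\mathcal H$. Since $\mathbb E[\epsilon_k\mid x_k]=0$, the $Z_k(h)$ are mean-zero; they are bounded by $|\epsilon_k|\cdot 2B\le 2B(R+B)$, and their conditional variance satisfies $\mathbb E[Z_k(h)^2]\le (R+B)^2\,\mathbb E_{x\sim\rho}[(h(x)-h^\star(x))^2]$. A union bound over $\mathcal H$ then yields that, with probability at least $1-\delta/2$, for every $h\in\mathcal H$,
\[
\Bigl|\sum_{k=1}^{K}Z_k(h)\Bigr|
\;\le\; \sqrt{2K(R+B)^2\,\|h-h^\star\|_\rho^2\,\log(2|\mathcal H|/\delta)} \;+\; \tfrac{4B(R+B)}{3}\log(2|\mathcal H|/\delta),
\]
where $\|h-h^\star\|_\rho^2:=\mathbb E_{x\sim\rho}[(h(x)-h^\star(x))^2]$.

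In parallel, I would use a one-sided Bernstein bound on the squared differences $(h(x_k)-h^\star(x_k))^2\in[0,4B^2]$ to transfer the left-hand side of the basic inequality from its empirical to its population version: with probability $1-\delta/2$, for every $h\in\mathcal H$,
\[
\sum_{k=1}^{K}\bigl(h(x_k)-h^\star(x_k)\bigr)^2
\;\ge\; \tfrac{K}{2}\,\|h-h^\star\|_\rho^2 \;-\; cB^2\log(|\mathcal H|/\delta),
\]
for a suitable absolute $c$. Combining this with the previous display at $h=\hat h$ gives an inequality of the form $a\Delta \le b\sqrt{\Delta}+d$, where $\Delta=\|\hat h-h^\star\|_\rho^2$, $a=\Theta(K)$, $b=\Theta(\sqrt{K(R+B)^2\log(|\mathcal H|/\delta)})$, and $d=\Theta(B(R+B)\log(|\mathcal H|/\delta))$. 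Solving this scalar quadratic and absorbing $R$ into $B$ (as is done wherever this lemma is invoked, e.g.\ Lemma~\ref{lemma:q value error}) yields $\Delta \le c_2 B^2\log(|\mathcal H|/\delta)/K$, which is exactly the stated bound.

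The main obstacle is the self-referential nature of the Bernstein step: the deviation bound on $\sum_k Z_k(\hat h)$ depends on $\|\hat h-h^\star\|_\rho^2$, the very quantity we are trying to bound. The standard device is the $b\sqrt{\Delta}$-vs-$a\Delta$ ``solve-the-quadratic'' manipulation above (equivalently, a peeling/$\epsilon$-localization argument), together with using the population norm rather than the empirical one on the LHS, which is why the one-sided empirical-to-population step is needed. Everything else is routine algebra and a union bound over the finite class.
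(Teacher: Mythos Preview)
The paper does not prove this lemma; it is quoted verbatim as Lemma~15 of \citet{song2023hybrid} and used as a black box (see Lemma~\ref{lemma:q value error}). So there is no ``paper's own proof'' to compare against. Your argument---basic inequality from ERM optimality, Bernstein on the noise cross-term with a union bound over the finite class, a one-sided Bernstein to pass from empirical to population squared error, then solving the resulting quadratic in $\|\hat h-h^\star\|_\rho^2$---is exactly the standard route and matches what is done in the cited reference. One minor caveat: as you note, your bound naturally produces a factor of $(R+B)^2$ rather than $B^2$; the lemma as stated in the paper has $B^2$, which is fine in its single application (Lemma~\ref{lemma:q value error}) where $R=B$, but your version is the honest general statement.
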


\begin{algorithm}[h]
    \caption{\texttt{APPO} (Practical version)} \label{alg:practical}
    \begin{algorithmic}[1]
        \State \textbf{Input:} Batch size $B$, Learning rates $\alpha_\phi, \alpha_\psi, \alpha_\theta$, constants $\lambda>0$, $\tau\in(0,1)$
        \State Train reward model $\rhat$ based on $\dataPref$ \Comment{Use any reward learning method}
        \For{\texttt{step}$=1,2,\dots$}
            \State Sample mini-batch of transition tuples $\mathcal{B}_{\text{tup}}$ and trajectory pairs $\mathcal{B}_{\text{traj}}$ from $\dataTraj$
            \State Train Q functions $\phi^i \leftarrow \phi^i - \alpha_{\phi} \nabla_{\phi^i} \mathcal{L}^{\lambda}_{\phi^i}(\mathcal{B}_{\text{tup}}, \mathcal{B}_{\text{traj}})$ for $i\in\{1,2\}$ \eqref{eqn:practical q loss}
            \State Update target Q function $\bar{\phi}^i = (1-\tau)\bar{\phi}^i + \tau\phi^i$ for $i\in\{1,2\}$
            \State Train V function $\psi \leftarrow \psi - \alpha_{\psi} \nabla_{\psi} \mathcal{L}_{\psi}(\mathcal{B}_{\text{tup}})$ \eqref{eqn:practical v loss}
            \State Train actor $\theta \leftarrow \theta + \alpha_{\theta} \nabla_\theta \mathcal{L}_{\theta}(\mathcal{B}_{\text{tup}})$ \eqref{eqn:practical actor loss}
        \EndFor
    \end{algorithmic}
\end{algorithm}

\newpage

\section{Additional Experiments} \label{sec:additional experiments}

\subsection{Evaluation on the Meta-World \texttt{medium-expert} Dataset}

To further assess the generalization capability of \texttt{APPO}, we collected the Meta-World \texttt{medium-expert} dataset following the data collection procedures outlined in prior works \citep{hejna2024inverse,choi2024listwise}. Detailed information on the dataset is provided in Section~\ref{sec:experimental details}. For comparison, we use MR, the most effective baseline method identified in Table~\ref{tab:main experiment}. The results in Table~\ref{tab:medium-expert} show that \texttt{APPO} consistently outperforms or matches MR.

\begin{table}[h]
\centering
\begin{tabular}{@{}c|cc|cc@{}}
\toprule
\# of feedback & \multicolumn{2}{c|}{500}             & \multicolumn{2}{c}{1000}             \\ \midrule
Dataset        & dial-turn & sweep-into  & dial-turn & sweep-into  \\ \midrule
MR &$15.80$\tiny$\pm12.73$ &$14.32$\tiny$\pm3.39$ &$26.08$\tiny$\pm18.78$ &$8.48$\tiny$\pm1.92$ \\
\texttt{APPO} &$32.40$\tiny$\pm13.56$ &$12.80$\tiny$\pm5.35$ &$39.20$\tiny$\pm15.69$ &$14.56$\tiny$\pm6.25$  \\ \bottomrule
\end{tabular}
\caption{Success rates on Meta-World \texttt{medium-expert} dataset with $500$, $1000$ preference feedback samples, averaged over $5$ random seeds.}
\label{tab:medium-expert}
\end{table}

\subsection{Effect of Dataset Size}

\begin{figure}[!h]
    \centering
    \includegraphics[width=0.8\linewidth]{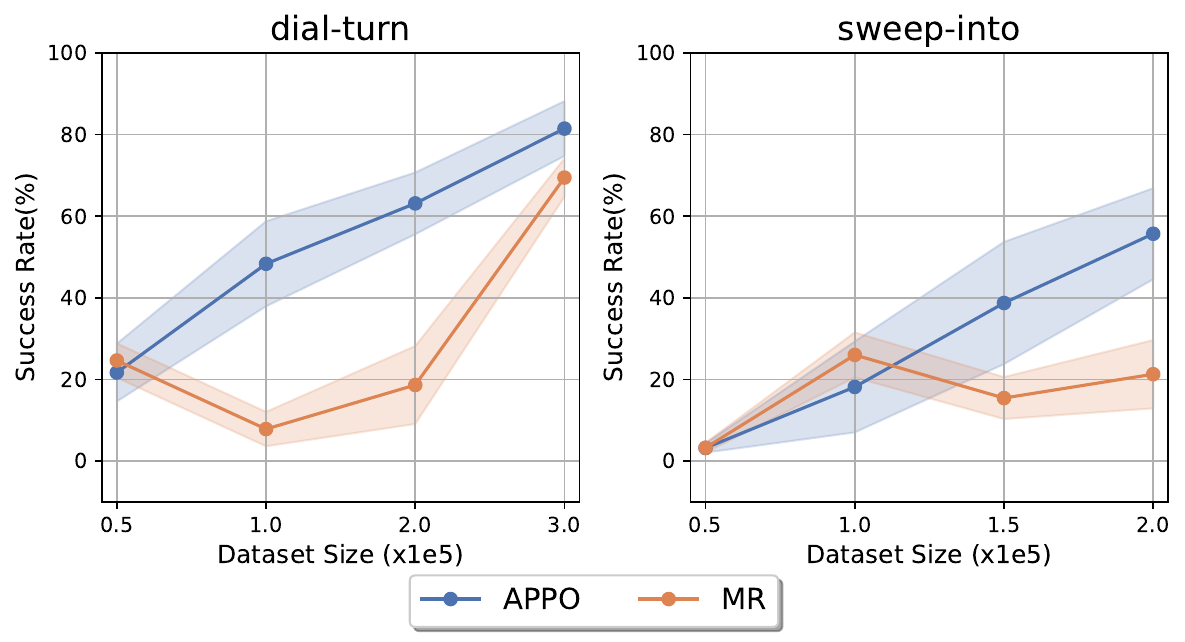}
    \caption{Success rates of \texttt{APPO} and MR evaluated in Meta-World \texttt{medium-replay} datasets, with varying dataset sizes. The number of preference feedback samples is fixed at $1000$.}
    \label{fig:data quality}
\end{figure}

To examine the impact of dataset size $|\dataTraj|$, we conducted experiments with varying sizes of the Meta-World \texttt{medium-replay} datasets. As shown in Figure~\ref{fig:data quality}, the performance of MR fluctuates with changes in dataset size, whereas the performance of \texttt{APPO} exhibits a more consistent and gradual response to dataset size variations.

\subsection{Learning Curves from Experiments.}

Figure~\ref{fig:learning curve} and Figure~\ref{fig:medium-expert} show the learning curves of the experiments in Table~\ref{tab:main experiment} and Table~\ref{tab:medium-expert}. Each algorithm is trained for 250,000 gradient steps, with evaluations conducted every 5,000 steps. The success rates from the final five evaluation points are averaged and reported in Table~\ref{tab:main experiment} and Table~\ref{tab:medium-expert}.

\section{Experimental Details} \label{sec:experimental details}

\subsection{Datasets}

\begin{table}[!h]
\centering
\resizebox{\textwidth}{!}{%
\begin{tabular}{@{}l|llllllll@{}}
\toprule
Dataset & BPT & box-close & dial-turn & sweep & BPT-wall & sweep-into & drawer-open & lever-pull \\ \midrule
Size ($\times 10^5$)  & $1.0$ & $8.0$       & $3.0$       & $7.0$   & $1.5$      & $1.0$        & $1.0$         & $3.0$        \\ \bottomrule
\end{tabular}%
}
\caption{The sizes of Meta-World \texttt{medium-replay} datasets \citep{choi2024listwise}. The abbreviation BPT indicates button-press-topdown.}
\label{tab:dataset size}
\end{table}

The Meta-World \texttt{medium-replay} dataset from \citet{choi2024listwise} consists of replay buffers generated by SAC~\citep{haarnoja2018soft} agents with an approximate success rate of $50$\%. The dataset sizes are detailed in Table~\ref{tab:dataset size}.

The Meta-World \texttt{medium-expert} dataset was collected following the procedures described in prior works~\citep{hejna2024inverse,choi2024listwise}.
Each dataset contains trajectories from five sources: (1) an expert policy, (2) expert policies for randomized variants and goals of the task, (3) expert policies for different tasks, (4) a random policy, and (5) an $\epsilon$-greedy expert policy that takes greedy actions with a $50$\% probability. These trajectories are included in the dataset in proportions of 1 : 1 : 2 : 4 : 4, respectively. Additionally, standard Gaussian noise was added to the actions of each policy. The dataset sizes match those of the \texttt{medium-replay} dataset.

\subsection{Implementation and Hyperparameters.}

For a fair comparison with baseline methods, we train the reward model and MR following the official implementation of \citet{choi2024listwise}. The reward model is implemented by an ensemble model of three fully connected neural networks with three hidden layers, each containing 128 neurons. For critics (Q and V) and policies, we use fully connected neural networks with three hidden layers of 256 neurons each. Other hyperparameters are listed in Table~\ref{tab:hparameters}.
We find that using a lower learning rate for $\pi$ and softer target network updates improves the stability of \texttt{APPO} training. Experiments were conducted on an Intel Xeon Gold 6226R CPU and an Nvidia GeForce RTX 3090 GPU. Each training session consists of 250,000 gradient steps, taking approximately 3-4 hours to complete. Our code is available at \url{https://github.com/oh-lab/APPO.git}.

\begin{table}[h]
\centering
\resizebox{\textwidth}{!}{%
\begin{tabular}{@{}lll@{}}
\toprule
Algorithm                     & Component                            & Value                                                  \\ \midrule
\multirow{6}{*}{Reward model} & Neural networks                            & $3$-layers, hidden dimension $128$            \\
& Activation                            & ReLU for hidden activations, Tanh for final activation \\
& Optimizer                            & Adam~\citep{kingma2015adam} with learning rate 1e-3                           \\
                              & Batch size                           & 512                                                    \\
                              & Epochs                               & 300                                                    \\
                              & Ensembles                            & 3                                                      \\ \midrule
\multirow{8}{*}{MR}           & Neural networks (Q, V, $\pi$)                           & $3$-layers, hidden dimension $256$            \\
& Activaton                            & ReLU for hidden activations \\
& Q, V, $\pi$ optimizer   & Adam with learning rate 3e-4                           \\
                              & Batch size                           & 256                                                    \\
                              & Target network soft update           & 0.005                                                  \\
                              & $\beta$ (IQL advantage weight)     & 3.0                                                    \\
                              & $\tau$ (IQL expectile parameter)  & 0.7                                                    \\
                              & discount factor                   & 0.99                                                   \\ \midrule
\multirow{7}{*}{\texttt{APPO}}         & Neural networks (Q, V, $\pi$)                           & $3$-layers, hidden dimension $256$            \\
& Activaton                            & LeakyReLU for hidden activations \\
& Q,V, $\alpha$ optimizer & Adam with learning rate 3e-4                           \\
                              & $\pi$ optimizer         & Adam with learning rate 3e-5                           \\
                              & Batch size                           & 256 transitions and 16 trajectory pairs                \\
                              & Target network soft update           & 0.001                                                  \\
                              & discount factor                      & 0.99                                                   \\ \bottomrule
\end{tabular}%
}
\caption{Implementation details and hyperparameters. For the reward model and MR algorithm, we follow the official implementation of \citet{choi2024listwise}.}
\label{tab:hparameters}
\end{table}

\begin{figure}[!h]
    \centering
    \includegraphics[width=\linewidth]{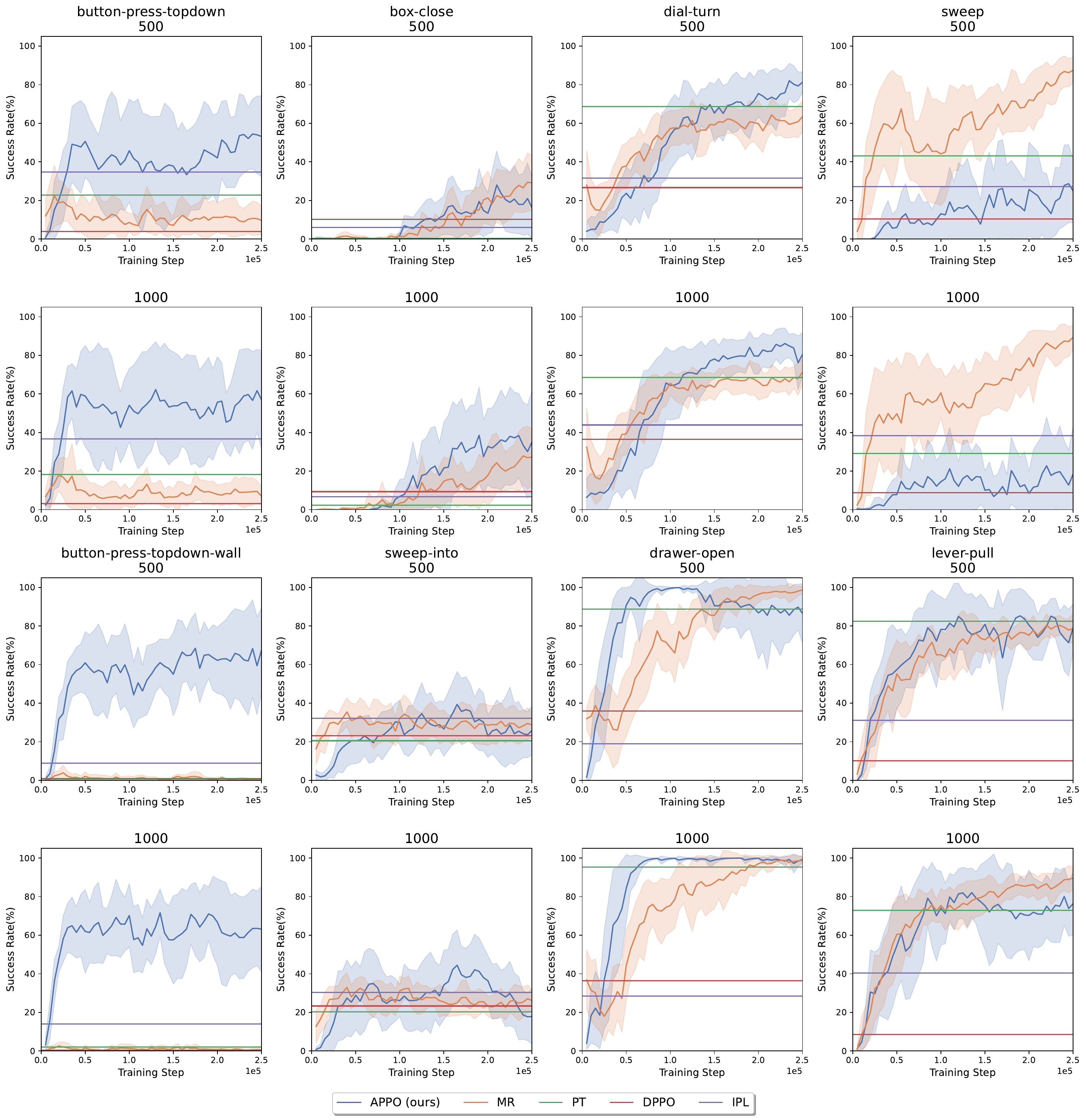}
    \caption{Learning Curves from the experiments in Table~\ref{tab:main experiment}.}
    \label{fig:learning curve}
\end{figure}

\begin{figure}[!h]
    \centering
    \includegraphics[width=0.7\linewidth]{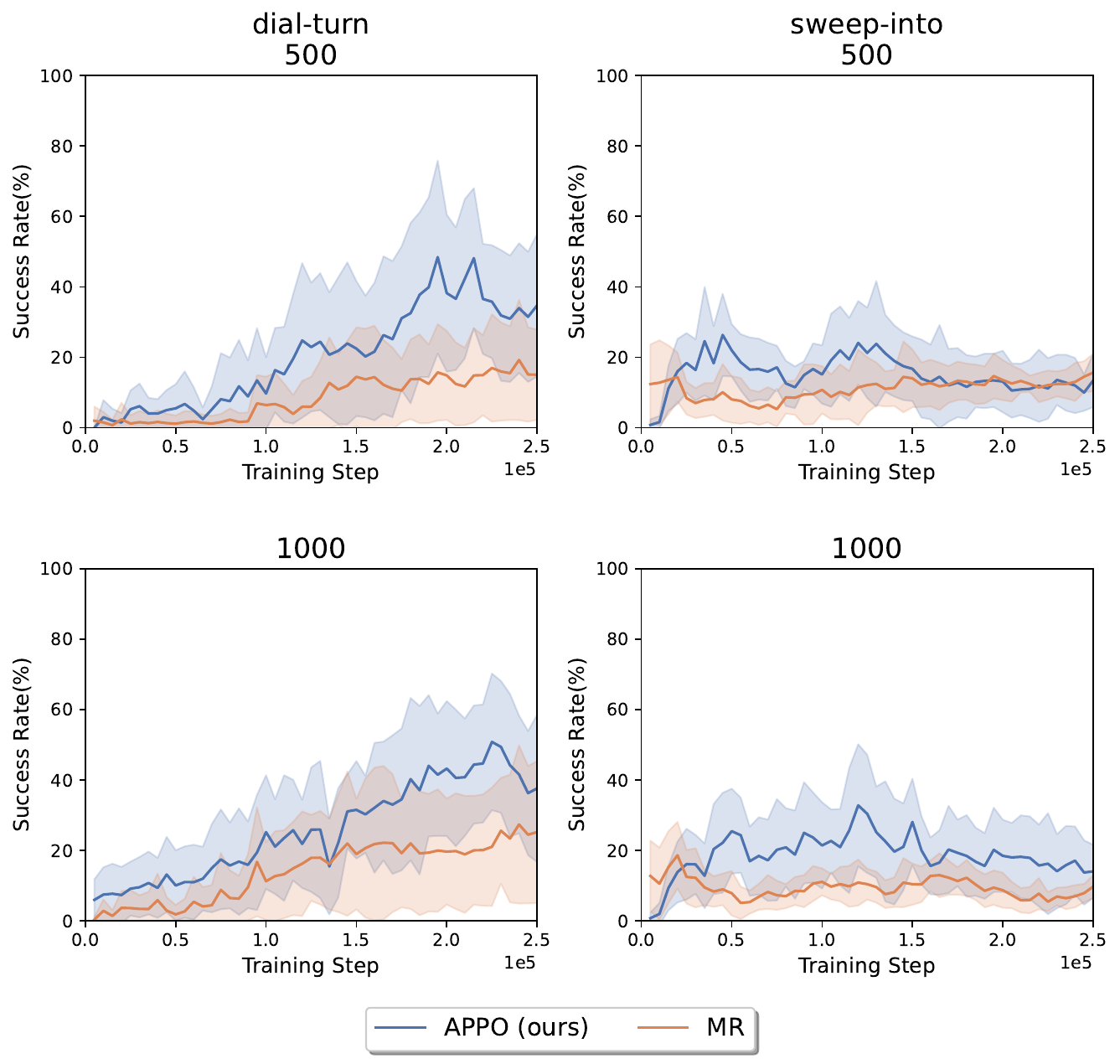}
    \caption{Learning Curves from the experiments in Table~\ref{tab:medium-expert}.}
    \label{fig:medium-expert}
\end{figure}

\end{document}